\documentclass[onefignum,onetabnum]{siamart190516} 
\usepackage{subcaption}
\usepackage{forest}
\usepackage{bbm}
\usetikzlibrary{shapes,arrows,positioning,automata}

\tikzset{
  treenode/.style = {shape=rectangle, rounded corners,
                     draw, align=center,
                     top color=white,
                     bottom color=white},
  root/.style     = {treenode, font=\Large,
                     bottom color=red!30},
  env/.style      = {treenode, font=\ttfamily\normalsize},
  dummy/.style    = {circle,draw}
}
\usepackage{caption}
\usepackage{threeparttable}
\usepackage[usestackEOL]{stackengine}
\usepackage{xcolor}
\usepackage{enumitem}

\setlength{\belowcaptionskip}{-10pt}


\usepackage{lipsum}
\usepackage{amsfonts}
\usepackage{graphicx}
\usepackage{epstopdf}
\usepackage{algorithmic}
\ifpdf
  \DeclareGraphicsExtensions{.eps,.pdf,.png,.jpg}
\else
  \DeclareGraphicsExtensions{.eps}
\fi


\newsiamremark{remark}{Remark}
\newsiamremark{hypothesis}{Hypothesis}
\crefname{hypothesis}{Hypothesis}{Hypotheses}
\newsiamthm{claim}{Claim}

\headers{Bayesian Inference with L-HNNs}{S. L. N. Dhulipala et al.} 



\title{Bayesian Inference with Latent Hamiltonian Neural Networks\thanks{Code available at: \href{https://github.com/IdahoLabResearch/BIhNNs}{https://github.com/IdahoLabResearch/BIhNNs}. \funding{This research is supported through the INL Laboratory Directed Research \& Development (LDRD) Program under DOE Idaho Operations Office Contract DE-AC07-05ID14517.}}}


\author{Somayajulu L. N. Dhulipala\thanks{Idaho National Laboratory, Idaho Falls, ID 83415, USA (\email{Som.Dhulipala@inl.gov}, \email{Yifeng.Che@inl.gov}).}
\and Yifeng Che\footnotemark[2]
\and Michael D. Shields\thanks{Johns Hopkins University, Baltimore, MD 21218, USA (\email{michael.shields@jhu.edu}).}}

\usepackage{amsopn}


\ifpdf
\hypersetup{
  pdftitle={Bayesian Inference with Latent Hamiltonian Neural Networks},
  pdfauthor={S. L. N. Dhulipala et al.}
}
\fi











\begin{document}

\maketitle

\begin{abstract}
  When sampling for Bayesian inference, one popular approach is to use Hamiltonian Monte Carlo (HMC) and specifically the No-U-Turn Sampler (NUTS) which automatically decides the end time of the Hamiltonian trajectory. However, HMC and NUTS can require numerous numerical gradients of the target density, and can prove slow in practice. We propose Hamiltonian neural networks (HNNs) with HMC and NUTS for solving Bayesian inference problems. Once trained, HNNs do not require numerical gradients of the target density during sampling. Moreover, they satisfy important properties such as perfect time reversibility and Hamiltonian conservation, making them well-suited for use within HMC and NUTS because stationarity can be shown. We also propose an HNN extension called latent HNNs (L-HNNs), which are capable of predicting latent variable outputs. Compared to HNNs, L-HNNs offer improved expressivity and reduced integration errors. Finally, we employ L-HNNs in NUTS with an online error monitoring scheme to prevent sample degeneracy in regions of low probability density. We demonstrate L-HNNs in NUTS with online error monitoring on several examples involving complex, heavy-tailed, and high-local-curvature probability densities. Overall, L-HNNs in NUTS with online error monitoring satisfactorily inferred these probability densities. Compared to traditional NUTS, L-HNNs in NUTS with online error monitoring required 1--2 orders of magnitude fewer numerical gradients of the target density and improved the effective sample size (ESS) per gradient by an order of magnitude.
\end{abstract}

\begin{keywords}
  Bayesian Inference, Physics-based learning, Deep neural networks, Hamiltonian Monte Carlo, Symplectic integration, No-u-turn sampler 
\end{keywords}

\begin{AMS}
  60J22, 68T07, 65C05, 37Jxx, 62F15
\end{AMS}

\section{Introduction}

Bayesian inference is a principal method for parameter calibration and uncertainty quantification in many scientific and engineering fields. Since closed-form solutions usually do not exist for practical Bayesian inference problems, Markov chain Monte Carlo (MCMC) algorithms are employed to sample from the target probability density. While random-walk MCMC algorithms are popular, they have poor scalability with the number of dimensions, and can feature large serial correlations between the samples. Therefore, algorithms under the Hamiltonian Monte Carlo (HMC) \cite{Neal2011a} framework (including the No-U-Turn Sampler [NUTS] \cite{Hoffman2014}) are often employed. HMC methods simulate Hamiltonian dynamics to propose new samples, instead of drawing them randomly, which can drastically improve the sampling acceptance rate. However, the HMC class of algorithms can require numerous numerical gradients of the target density, which are computationally expensive, especially if the likelihood function involves iterating over a large dataset or calling a costly computational model. This paper proposes latent Hamiltonian neural networks (HNNs) \cite{Greydanus2019a} with HMC and NUTS for solving Bayesian inference problems. Once trained, latent HNNs do not require evaluating numerical gradients of the target density while sampling, and can therefore lead to computational gains over traditional HMC and NUTS.

\subsection{Literature review}

Owing to the high computational cost of Bayesian inference, machine learning have garnered considerable interest to accelerate this process. For example, Che et al. \cite{Che2021a} and Dhulipala et al. \cite{Dhulipala2022a} utilized Gaussian processes to accelerate random-walk MCMC by replacing computationally expensive models for the likelihood evaluation. Gradient-based inference methods (e.g., Langevin dynamics sampling and HMC) have better scalability than random-walk MCMC, and machine learning techniques are being employed to accelerate these methods as well. For example, Gu et al. \cite{Gu2020a} trained neural networks that take as inputs the current position and gradient of the log-likelihood and predict the new position in the Metropolis-adjusted Langevin algorithm (MALA). Li et al. \cite{Li2019a} proposed neural networks that learn the log-likelihood, and applied neural network gradients instead of the target density gradients in an HMC. Levy et al. \cite{Levy2017a} devised neural networks that take the log-likelihood as the input and predict the transition kernel in an HMC, which then showed improved performance over traditional HMC for complex cases such as multi-modal densities. One thing all these efforts for accelerating Bayesian inference have in common is the use of purely data-driven machine learning models. Moreover, although neural-network-aided HMC has shown improved performance over traditional HMC, Hoffman and Gelman \cite{Hoffman2014} demonstrated NUTS to be at least an order of magnitude more efficient than conventional HMC (in terms of effective sample size [ESS] \cite{Vehtari2021a} per gradient), even without relying on machine learning. The efficiency of NUTS over HMC, MALA, and random-walk MCMC has also been demonstrated in many other studies (e.g., \cite{Nishio2019a}).


To efficiently predict the behavior of dynamical systems, recent studies have investigated physics-based neural networks. Greydanus et al. \cite{Greydanus2019a} proposed HNNs for learning Hamiltonian systems in an unsupervised fashion through a loss function that accounts for Hamilton's equations. HNNs have interesting properties such as perfect time reversibility and Hamiltonian conservation, and thus proved superior in performance compared to data-driven neural networks and black-box ordinary differential equation (ODE) solvers such as neural ODEs \cite{Tong2021a}. More recently, newer architectures to learn Hamiltonian systems have been proposed. Physics-informed HNNs \cite{Mattheakis2022a}, Symplectic ODE-Net \cite{Zhong2019a}, and symplectic neural networks \cite{Jin2020a} are examples of such architectures. Efforts along these lines, however, were all driven from the standpoint of efficiently identifying and learning dynamical systems, rather than solving Bayesian inference problems. As such, the example applications that these studies have demonstrated correspond to classical mechanics problems such as the behavior of a n-pendulum, the gravitational attraction between bodies (e.g., planets), and the dynamics of predator-prey systems, etc.  

\subsection{Contributions of this paper}

We propose to use HNNs for Bayesian inference problems. As summarized in Figure \ref{HNNs_HMC_NUTS}, once trained, HNNs do not require numerical gradients of the target density. As discussed earlier, HNNs satisfy important properties such as perfect time reversibility and Hamiltonian conservation, which make them amenable for HMC sampling, since stationarity, an important condition satisfied by MCMC samplers, can be shown. We also propose an HNN extension called latent HNNs (L-HNNs), which use latent variable outputs to improve the expressivity of HNNs. L-HNNs have reduced integration errors when simulating the Hamiltonian dynamics, and show better performance than HNNs when sampling for Bayesian inference. Moreover, given that the performance of NUTS has generally proven superior to that of HMC, MALA, and random-walk MCMC, we propose to use L-HNNs in NUTS with an online error monitoring scheme. This error monitoring scheme reverts to using numerical gradients of the target density for a few samples whenever L-HNN integration errors are large, which may be the result of inadequate training data in certain regions of the uncertainty space. Overall, the online error monitoring scheme prevents sampling degeneracy when L-HNNs are used in NUTS. In summary, the novel contributions of this paper are:
\begin{itemize}[leftmargin=*]
    \item Use of HNNs for Bayesian inference problems without requiring numerous numerical gradients of the target density.
    \item Development of latent variable outputs in HNNs (L-HNNs) for improved expressivity and reduced integration errors when simulating Hamiltonian dynamics.
    \item Use of L-HNNs in NUTS with an online error monitoring scheme to prevent sampling degeneracy.
\end{itemize}

\begin{figure}[htbp]
\centering
\includegraphics[scale=0.4]{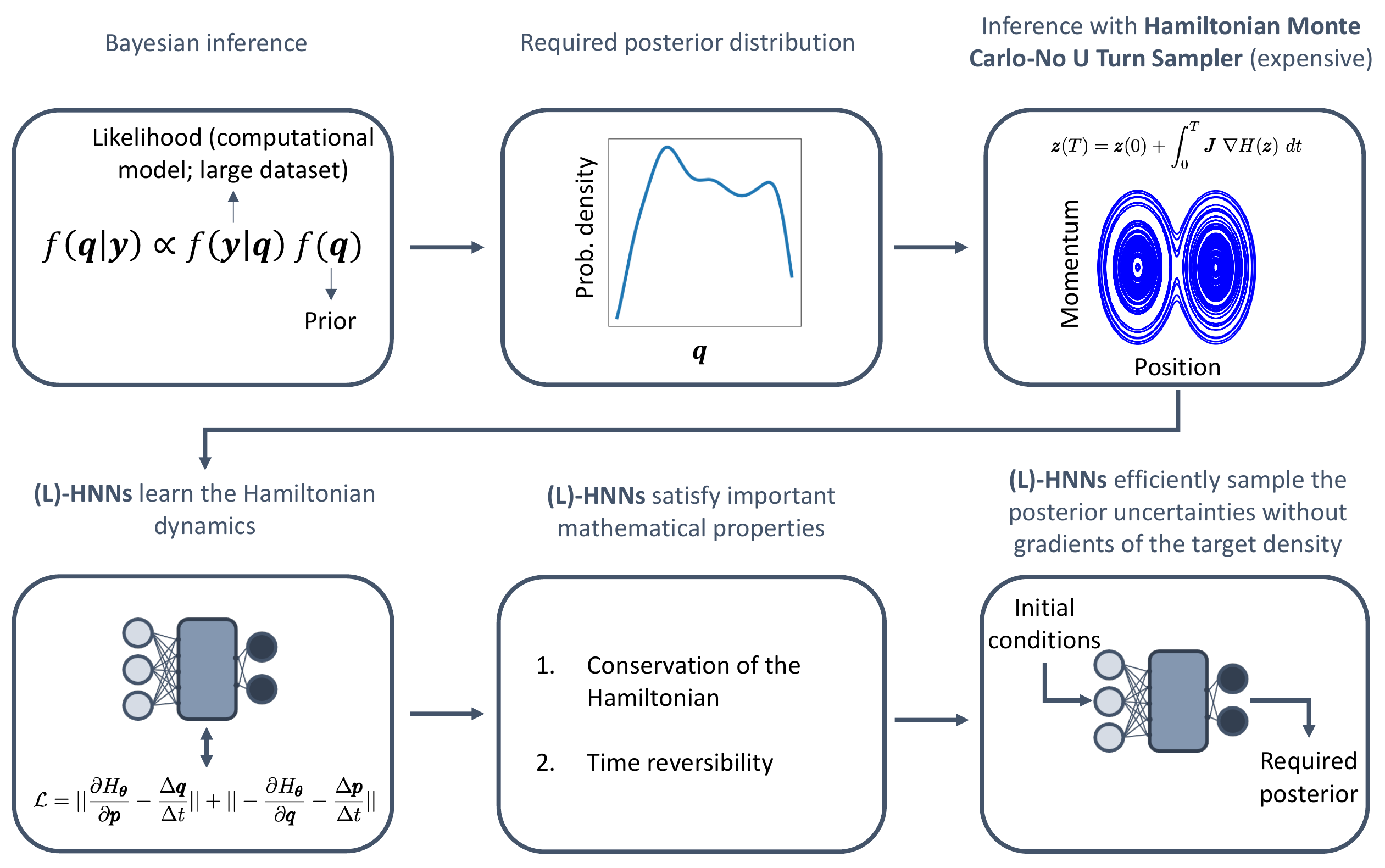} 
\caption{Schematic of the use of (L)-HNNs for efficient Bayesian inference. Once trained, (L)-HNNs do not require numerical gradients of the posterior distribution\textemdash which are expensive to compute\textemdash for sampling.}
\label{HNNs_HMC_NUTS}
\end{figure}

\section{Preliminaries}

This section provides a brief background on Hamiltonian dynamics simulation and HMC sampling.

\subsection{Simulating Hamiltonian dynamics}

The basics of Hamiltonian dynamics and its numerical simulation using time integration schemes have been discussed thoroughly (e.g., \cite{Neal2011a, Mann2018}). If $\pmb{q}$ and $\pmb{p}$ represent the position and momentum vectors, respectively, the Hamiltonian of a system is defined as:
\begin{equation}
    \label{eqn:HD_1}
    H(\pmb{z} = \{\pmb{q},\pmb{p}\}) = U(\pmb{q}) + K(\pmb{p})
\end{equation}
where $\pmb{z}$ is a concatenated vector of the vectors $\pmb{q}$ and $\pmb{p}$, $U(.)$ is the potential energy, and $K(.)$ is the kinetic energy. Hamilton's equations are given by:
\begin{equation}
    \label{eqn:HD_2}
    \frac{d\pmb{z}}{dt} = \pmb{J}~\nabla H(\pmb{z})
\end{equation}
where $\nabla$ is the gradient operator, and if $d$ is the dimensionality of the system, $\pmb{J}$ is a $2d \times 2d$ matrix given by:
\begin{equation}
    \label{eqn:HD_3}
    \pmb{J} = \begin{bmatrix}
\pmb{0}_{d \times d} & \pmb{I}_{d \times d}\\
-\pmb{I}_{d \times d} & \pmb{0}_{d \times d}
\end{bmatrix}
\end{equation}
where $\pmb{I}$ is an identity matrix and $\pmb{0}$ is a zero matrix. The equations in Equation \eqref{eqn:HD_2} can be expressed explicitly in each of the $d$ dimensions as:
\begin{equation}
    \label{eqn:HD_4}
    \begin{aligned}
    \frac{dq_i}{dt} &= \frac{\partial H}{\partial p_i} \\ 
    \frac{dp_i}{dt} &= -\frac{\partial H}{\partial q_i} \\ 
    \end{aligned}~~~~\forall~~~~i \in [1,\dots, d].
\end{equation}
Equation \eqref{eqn:HD_2} is a coupled first-order ODE with general solution given by:
\begin{equation}
    \label{eqn:HD_5}
    \pmb{z}(T) = \pmb{z}(0) + \int_{0}^{T} \pmb{J}~\nabla H(\pmb{z})~dt
\end{equation}


Hamiltonian dynamics is numerically simulated using a symplectic time integrator \cite{Blanes2014a, Calvo1993a}. A symplectic time integrator is better suited than other integrators (e.g., Runge-Kutta) because it satisfies two key properties: (1) time reversibility and (2) volume preservation in the $\{\pmb{q},\pmb{p}\}$ phase space. Both properties are critical to accurately simulating Hamiltonian dynamics over long trajectories. This paper uses the basic symplectic integrator known as the synchronized leapfrog integration scheme. It is first assumed that the mass matrix associated with the kinetic energy $K(\pmb{p})$ is a diagonal matrix and thus $K(\pmb{p})$ can be expressed as:
\begin{equation}
    \label{eqn:HMC_4}
    K(\pmb{p}) \propto \sum_{i=1}^d\frac{p_i^2}{2m_i}~~~~\forall~~~~i \in [1,\dots, d]
\end{equation}
where $m_i$ is the mass associated with dimension $i$. In the synchronized leapfrog integration scheme, the position in dimension $i$ $(q_i)$ is first updated by:
\begin{equation}
    \label{eqn:LF_1}
    q_i(t+\Delta t) = q_i(t) + \frac{\Delta t}{m_i}~p_i(t) + \frac{\Delta t^2}{2m_i}~\dot{p}_i(t) = q_i(t) + \frac{\Delta t}{m_i}~p_i(t) - \frac{\Delta t^2}{2m_i}~\frac{\partial H}{\partial q_i(t)}
\end{equation}
where $\Delta t$ is the integration time step and $\dot{p}_i(t)$ is replaced with $-\frac{\partial H}{\partial q_i(t)}$ from Equation \eqref{eqn:HD_4}. Next, the momentum in dimension $i$ $(p_i)$ is updated by:
\begin{equation}
    \label{eqn:LF_2}
    p_i(t+\Delta t) = p_i(t) + \frac{\Delta t}{2}~\bigg(\dot{p}_i(t) + \dot{p}_i(t+\Delta t)\bigg) = p_i(t) - \frac{\Delta t}{2}~\bigg(\frac{\partial H}{\partial q_i(t)} + \frac{\partial H}{\partial q_i(t+\Delta t)}\bigg)
\end{equation}
where $\dot{p}_i(t)$ is again replaced with $-\frac{\partial H}{\partial q_i(t)}$ from Equation \eqref{eqn:HD_4}. The positions and momenta across all the dimensions are iteratively updated at each time step until a specified end time $T$ is reached.

\subsection{Hamiltonian Monte Carlo}

The fundamentals of HMC are discussed thoroughly in several resources (e.g. \cite{Neal2011a,Betancourt2017a,Vishnoi2021a}). The HMC algorithm aims to sample from a joint probability distribution of the following form:
\begin{equation}
    \label{eqn:HMC_1}
    f(\pmb{z} = \{\pmb{q},\pmb{p}\}) \propto \exp\big(-H(\pmb{z} = \{\pmb{q},\pmb{p}\})\big)
\end{equation}
Using the definition of the Hamiltonian from Equation \eqref{eqn:HD_1}, the joint distribution can be explicitly expressed to include contributions from the potential and kinetic energies as:
\begin{equation}
    \label{eqn:HMC_2}
    f(\{\pmb{q},\pmb{p}\}) \propto \exp\big(-U(\pmb{q})\big)~\exp\big(-K(\pmb{p})\big)
\end{equation}
The potential energy $U(\pmb{q})$ is defined as the negative logarithm of the likelihood function times the prior distribution:
\begin{equation}
    \label{eqn:HMC_3}
    U(\pmb{q}) \propto -\log\big[\mathcal{L}(\pmb{q}|D)~g(\pmb{q})\big]
\end{equation}
where $\mathcal{L}(\pmb{q}|D)$ is the likelihood function of the data $D$, and $g(\pmb{q})$ is the prior distribution. $K(\pmb{p})$ is also defined to simulate Hamiltonian dynamics. In the basic HMC algorithm, $\pmb{p}$ is assumed to follow a multivariate Gaussian with a diagonal covariance matrix $\pmb{M}$. $\pmb{M}$ has $d$ diagonal components: $m_i~\forall~i \in [1,\dots, d]$. $K(\pmb{p})$ is then defined as the negative logarithm of a multivariate Gaussian, and is given by Equation \eqref{eqn:HMC_4}. This definition of $\pmb{M}$ implies that the Hamiltonian dynamics operates in a Euclidean space. In contrast, if $\pmb{M}$ is non-diagonal but symmetric positive definite and depends on $\pmb{q}$, then the Hamiltonian dynamics operates in a Riemannian space \cite{Girolami2011a}, and the associated Monte Carlo algorithm is called Riemannian HMC. 

In the basic HMC algorithm, $\pmb{p}$ is drawn from a multivariate Gaussian with a diagonal $\pmb{M}$, Hamiltonian dynamics is simulated given $T$ and the starting position vector $\pmb{q}^0$, and the $\{\pmb{q},\pmb{p}\}$ vector at the end of the time integration serves as the Markov chain proposal $\{\pmb{q}^*,\pmb{p}^*\}$. This proposed sample is accepted as the next sample with the following probability:
\begin{equation}
    \label{eqn:HMC_5}
    \alpha = \textrm{min}\big[1,~\exp \big(H(\{\pmb{q}^{i-1},\pmb{p}^{i-1}\})-H(\{\pmb{q}^*,\pmb{p}^*\})\big)\big]
\end{equation}
where $\{\pmb{q}^{i-1},\pmb{p}^{i-1}\}$ is an accepted sample from the previous iteration of HMC.







\section{Sampling with L-HNNs}


Greydanus et al. \cite{Greydanus2019a} introduced HNNs as unsupervised machine learning architectures to learn Hamiltonian systems. Specifically, HNNs learn the gradients of the Hamiltonian which can be further used in an time integration scheme to predict the Hamiltonian trajectories. Given training data $\{\pmb{q}, \pmb{p}\}$, numerical gradients are first computed to give the numerical time derivatives $\{\frac{\Delta \pmb{q}}{\Delta t}, \frac{\Delta \pmb{p}}{\Delta t}\}$, per Hamilton's equations in \eqref{eqn:HD_4}. For some network parameters $\pmb{\theta}$, the HNN-predicted Hamiltonians (i.e., $H_{\pmb{\theta}}$) and their corresponding gradients (i.e., $\frac{\partial H_{\pmb{\theta}}}{\partial \pmb{p}}$ and $\frac{\partial H_{\pmb{\theta}}}{\partial \pmb{q}}$) are also computed. HNNs optimize the network parameters $\pmb{\theta}$ to minimize the following loss function:
\begin{equation}
    \label{eqn:HNN_1}
    L = \Big\lVert\frac{\partial H_{\pmb{\theta}}}{\partial \pmb{p}} - \frac{\Delta \pmb{q}}{\Delta t}\Big\lVert + \Big\lVert-\frac{\partial H_{\pmb{\theta}}}{\partial \pmb{q}} - \frac{\Delta \pmb{p}}{\Delta t}\Big\lVert
\end{equation}
Algorithm \ref{alg:HNN_train} summarizes the procedure for training HNNs. During the evaluation phase, HNNs predict the numerical gradients of the Hamiltonian $\{\frac{\partial H_{\pmb{\theta}}}{\partial \pmb{p}},\frac{\partial H_{\pmb{\theta}}}{\partial \pmb{q}}\}$, without explicit calls to the joint density $f(\{\pmb{q},\pmb{p}\}) \propto \exp\big(-H(\{\pmb{q},\pmb{p}\})\big)$. Therefore, during evaluation, HNNs require neither the numerical gradients of the potential energy function $U(\pmb{q}) \propto -\log\big[\mathcal{L}(\pmb{q}|D)~g(\pmb{q})\big]$ nor explicit calls to it. This by itself leads to computational gains since these gradients only need to be evaluated during HNN training, which requires comparatively small data sets. Moreover, evaluating the likelihood $\mathcal{L}(.)$ in $U(\pmb{q})$ can require calling an expensive computational model or a large dataset whose numerical gradients are very expensive to evaluate. Thus, avoiding numerical gradients of $U(\pmb{q})$ can lead to even more computational gains. Finally, gradients predicted by HNNs can be used to solve the following initial value problem to predict Hamiltonian dynamics:
\begin{equation}
    \label{eqn:HNN_2}
    \pmb{z}(T) = \pmb{z}(0) + \int_{0}^{T} \pmb{J}~\nabla H_{\pmb{\theta}}(\pmb{z})~dt
\end{equation}
where $\nabla H_{\pmb{\theta}}(\pmb{z}) = \{\frac{\partial H_{\pmb{\theta}}}{\partial \pmb{p}},\frac{\partial H_{\pmb{\theta}}}{\partial \pmb{q}}\}$ are predicted by HNNs. Equation \eqref{eqn:HNN_2} can be numerically solved using the synchronized leapfrog symplectic integrator, as presented in Algorithm \ref{alg:HNN_eval}. The gradients of the Hamiltonian are provided by HNNs (i.e., line 5 in Algorithm \ref{alg:HNN_eval}) rather than relying on numerical gradient evaluations of the Hamiltonian function.

\begin{algorithm}
\caption{Hamiltonian neural network training}
\label{alg:HNN_train}
\begin{algorithmic}[1]
\STATE{HNN parameters: $\pmb{\theta}$; Training data: $\{\pmb{q},~\pmb{p}\}$}
\STATE{Evaluate gradients of the training data $\frac{\Delta \pmb{q}}{\Delta t}$ and $\frac{\Delta \pmb{p}}{\Delta t}$}
\STATE{Initialize HNN parameters $\pmb{\theta}$}
\STATE{Compute HNN Hamiltonian $H_{\pmb{\theta}}$}
\STATE{Compute $H_{\pmb{\theta}}$ gradients $\frac{\partial H_{\pmb{\theta}}}{\partial \pmb{p}}$ and $-\frac{\partial H_{\pmb{\theta}}}{\partial \pmb{q}}$}
\STATE{Compute loss $\mathcal{L} = ||\frac{\partial H_{\pmb{\theta}}}{\partial \pmb{p}} - \frac{\Delta \pmb{q}}{\Delta t}|| + ||-\frac{\partial H_{\pmb{\theta}}}{\partial \pmb{q}} - \frac{\Delta \pmb{p}}{\Delta t}||$}
\STATE{Minimize $\mathcal{L}$ with respect to $\pmb{\theta}$}
\end{algorithmic}
\end{algorithm}
\normalsize


\begin{algorithm}
\caption{Hamiltonian neural networks evaluation in leapfrog integration}
\label{alg:HNN_eval}
\begin{algorithmic}[1]
\STATE{Hamiltonian: $H$; Initial conditions: $\pmb{z}(0)=\{\pmb{q}(0),~\pmb{p}(0)\}$; Dimensions: $d$; Steps: $N$; End time: $T$}
    \STATE{$\Delta t = \frac{T}{N}$}
    \FOR{$j = 0:N-1$}
        \STATE{$t = j~\Delta t$}
        \STATE{Compute HNN output gradient $\frac{\partial H_{\pmb{\theta}}}{\partial \pmb{q}(t)}$}
        \FOR{$i = 1:d$}
            \STATE{$q_i(t+\Delta t) = q_i(t) + \frac{\Delta t}{m_i}~p_i(t) - \frac{\Delta t^2}{2m_i}~\frac{\partial H_{\pmb{\theta}}}{\partial q_i(t)}$}
        \ENDFOR
        \STATE{Compute HNN output gradient $\frac{\partial H_{\pmb{\theta}}}{\partial \pmb{q}(t+\Delta t)}$}
        \FOR{$i = 1:d$}
            \STATE{$p_i(t+\Delta t) = p_i(t) - \frac{\Delta t}{2}~\bigg(\frac{\partial H_{\pmb{\theta}}}{\partial q_i(t)} + \frac{\partial H_{\pmb{\theta}}}{\partial q_i(t+\Delta t)}\bigg)$}
        \ENDFOR
    \ENDFOR
\end{algorithmic}
\end{algorithm}
\normalsize

{To generate the training data for HNNs and to further use them to solve Equation \eqref{eqn:HNN_2}, Greydanus et al. \cite{Greydanus2019a} relied on the Runge-Kutta scheme. A noteworthy difference in our approach is that we rely on the synchronized leapfrog integrator.
Being a symplectic integrator, synchronized leapfrog leads to stable Hamiltonian dynamics and better training data quality under relatively larger time steps. In addition, for the numerical evaluation of Equation \eqref{eqn:HNN_2} using HNNs, our initial studies indicated that synchronized leapfrog improves Hamiltonian conservation, which is key to an effective sampling scheme, as discussed in Section \ref{sec:Sta_Erg}.}

\subsection{Latent outputs in HNNs}\label{sec:lhnns}

The HNNs proposed by Greydanus et al. \cite{Greydanus2019a} take as inputs the $\{\pmb{q},~\pmb{p}\}$ coordinates during the forward pass, and return a scalar output value (i.e., the ``Hamiltonian'' [$H_{\pmb{\theta}}$]). Gradients are computed with respect to this Hamiltonian value and the loss function in Equation \eqref{eqn:HNN_1} is minimized. However, for Bayesian inference, it may be more beneficial for the HNNs to predict $d$ latent variables $\lambda_i$ $\{i \in 1, \dots, d\}$ denoted by the vector $\pmb{\lambda}$, where $d$ is the dimensionality of the uncertainty space. The architecture of L-HNNs is formally defined as:
\begin{equation}
    \label{eqn:LHNN_1}
    \begin{aligned}
    &\pmb{u}_p = \phi(\pmb{w}_{p-1}~\pmb{u}_{p-1} + \pmb{b}_{p-1}),~~\{p \in 1, \dots, P\}\\
    &\pmb{\lambda} = \pmb{w}_{P}~\pmb{u}_{P} + \pmb{b}_{P}
    \end{aligned}
\end{equation}
where $P$ is the number of hidden layers indexed by $p$, $\pmb{u}_p$ are the outputs of the hidden layer $p$, $\pmb{w}_p$ and $\pmb{b}_p$ are the weights and biases, respectively, $\phi(.)$ is the nonlinear activation function, and $\pmb{u}_{0}$ are the inputs $\pmb{z} = \{\pmb{q},~\pmb{p}\}$. The Hamiltonian computed by L-HNNs is defined as:
\begin{equation}
    \label{eqn:LHNN_2}
    H_{\pmb{\theta}} = \sum_{i=1}^d \lambda_i
\end{equation}
Gradients are then computed with respect to the Hamiltonian in Equation \eqref{eqn:LHNN_2} and the loss function in Equation \eqref{eqn:HNN_1} is minimized. Through these latent variables, L-HNNs have more degrees of freedom and expressivity than traditional HNNs, potentially leading to reduced integration errors when predicting the Hamiltonian dynamics. As a result, L-HNNs can show improved sampling efficiency relative to HNNs. A comparison between the performance of L-HNNs and HNNs is presented in Section \ref{sec:Rosen3D} for a 3-D Rosenbrock density function.

\subsection{Sampling}\label{sec:HMC_LHNNs}

HMC sampling using L-HNNs proceeds in similar fashion to traditional HMC. The key difference is that, given the initial conditions $\{\pmb{q}(0),\pmb{p}(0)\}$, the Hamiltonian dynamics is approximated using L-HNNs via Algorithm \ref{alg:HNN_eval} rather than through direct integration. Once the new proposal $\{\pmb{q}^*,\pmb{p}^*\}$ is simulated, the Metropolis acceptance criterion is used to decide whether to accept this proposal, based on the conservation of the Hamiltonian:
\begin{equation}
    \label{eqn:HMC_HNN_1}
    \begin{aligned}
    &\{\pmb{q}^{i},\pmb{p}^{i}\} \gets \{\pmb{q}^*,\pmb{p}^*\}\textrm{ with probability }\alpha\\
    &\textrm{where } \alpha = \textrm{min}\{1,~\exp \big(H(\{\pmb{q}^{i-1},\pmb{p}^{i-1}\})-H(\{\pmb{q}^*,\pmb{p}^*\})\big)\}\\
    \end{aligned}
\end{equation}
This procedure is summarized in Algorithm \ref{alg:HNNs_HMC}.

\begin{algorithm}
\caption{L-HNNs in Hamiltonian Monte Carlo}
\label{alg:HNNs_HMC}
\begin{algorithmic}[1]
\STATE{Hamiltonian: $H = U(\pmb{q}) + K(\pmb{p})$; Samples: M; Starting sample: $\{\pmb{q}^0,~\pmb{p}^0\}$; End time for trajectory: $T$; Steps: $N$; Dimensions: $d$}
\FOR{$i = 1:M$}
\STATE{$\pmb{p}(0) \sim \mathcal{N}(\pmb{0},~\pmb{I}_d)$}
\STATE{$\pmb{q}(0) = \pmb{q}^{i-1}$}
\STATE{Compute $\{\pmb{q}^*,~\pmb{p}^*\} = \{\pmb{q}(T),~\pmb{p}(T)\}$ with Algorithm \ref{alg:HNN_eval}}
\STATE{$\alpha = \textrm{min}\{1,~\exp \big(H(\{\pmb{q}^{i-1},\pmb{p}^{i-1}\})-H(\{\pmb{q}^*,\pmb{p}^*\})\big)\}$}
\STATE{With probability $\alpha$, set $\{\pmb{q}^{i},\pmb{p}^{i}\} \gets \{\pmb{q}^*,\pmb{p}^*\}$}
\ENDFOR
\end{algorithmic}
\end{algorithm}
\normalsize

For a simple demonstration of using L-HNNs in HMC, we consider the following 1-D Gaussian mixture density, which has modes at $+1$ and $-1$, both having a standard deviation of $0.35$:
\begin{equation}
    \label{eqn:1D_Gauss_Mix}
    f(q) \propto 0.5~\exp\Bigg(\frac{(q-1)^2}{2 \times 0.35^2}\Bigg) + 0.5~\exp\Bigg(\frac{(q+1)^2}{2 \times 0.35^2}\Bigg)
\end{equation}
\noindent The L-HNNs have three hidden layers with 100 neurons each. To generate the training data, we simulated the Hamiltonian dynamics for 20 samples, each with $T=20$ units. The number of leapfrog steps was 20 per unit, meaning that $\Delta t = 0.05$. The initial condition for the first sample was set to zeros. The initial conditions for the rest of the samples were defined as follows: (1) position $q(0)$ was the final position from the previous sample, and (2) momentum $p(0)$ was drawn randomly from a standard Gaussian distribution. In total, the training of L-HNNs took $8,000$ numerical gradient evaluations. 

Figure \ref{Sampling_Demo_HMC_33} compares the $\{q,p\}$ phase space between L-HNNs and numerical gradients for several initial momentum values $p(0)$ and identical initial position $q(0) = 0$. Note that the phase space plots predicted using numerical gradients and L-HNNs compare satisfactorily. We further performed HMC using numerical gradients and L-HNNs to generate 5,000 samples from the distribution in Equation \eqref{eqn:1D_Gauss_Mix}, with the first 1,000 considered as burn-in. Each sample had an end time of $T=5$ units, with each unit being discretized into 20 steps (i.e., $\Delta t = 0.05$). The histogram plots compare well between traditional HMC (Figure \ref{Sampling_Demo_HMC_3}) and HMC using HNNs (Figure \ref{Sampling_Demo_HMC_4}). Moreover, for this simple example, using L-HNNs for HMC requires only the 8000 numerical gradients of the target density from the training, whereas HMC required 500,000 numerical gradients for the same number of samples (a 98.4\% reduction).

\begin{figure}[htbp]
\begin{subfigure}{0.32\textwidth}
\centering  
\includegraphics[width=\textwidth]{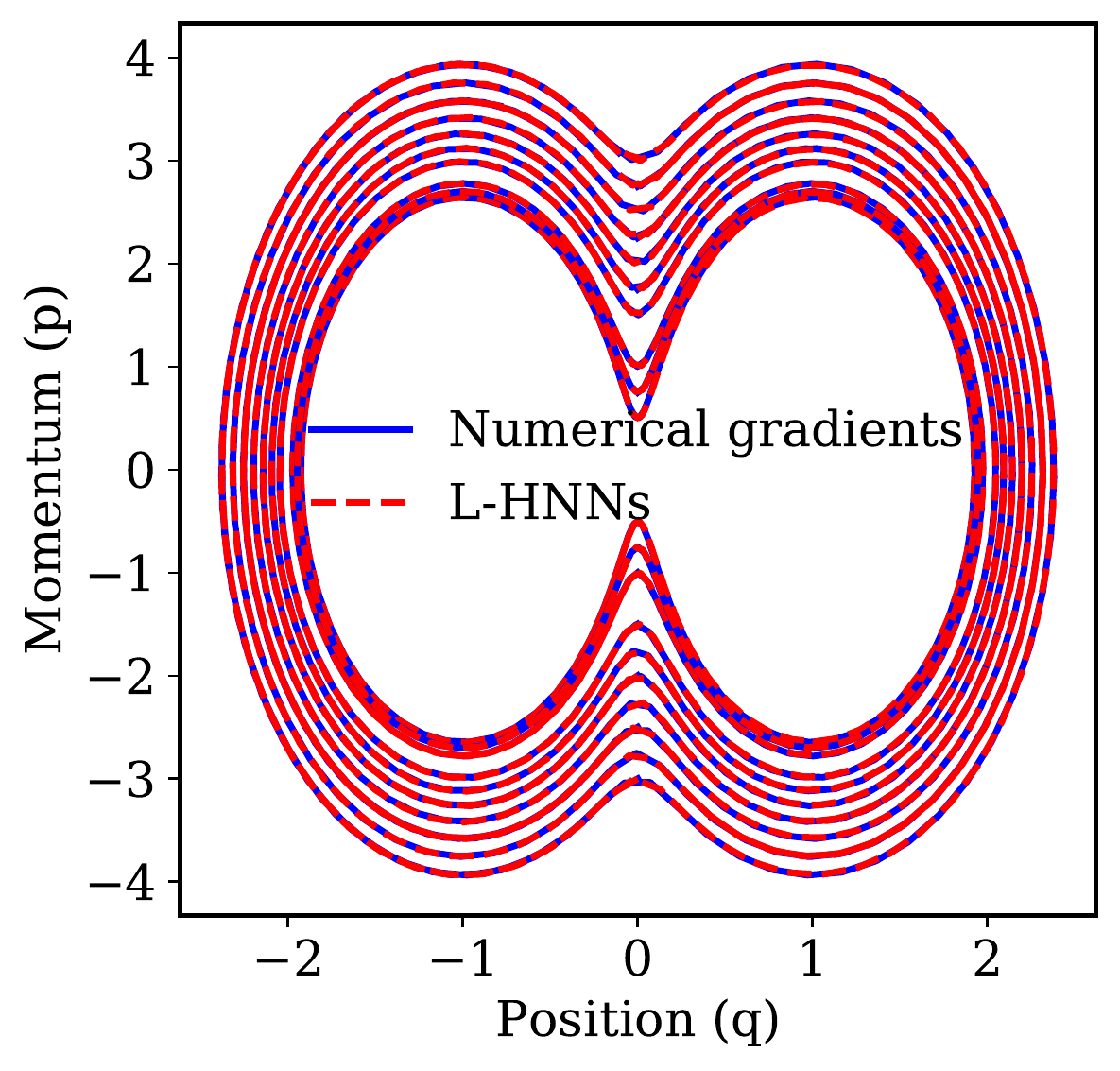} 
\caption{}
\label{Sampling_Demo_HMC_33}
\end{subfigure}
\begin{subfigure}{0.32\textwidth}
\centering  
\includegraphics[width=\textwidth]{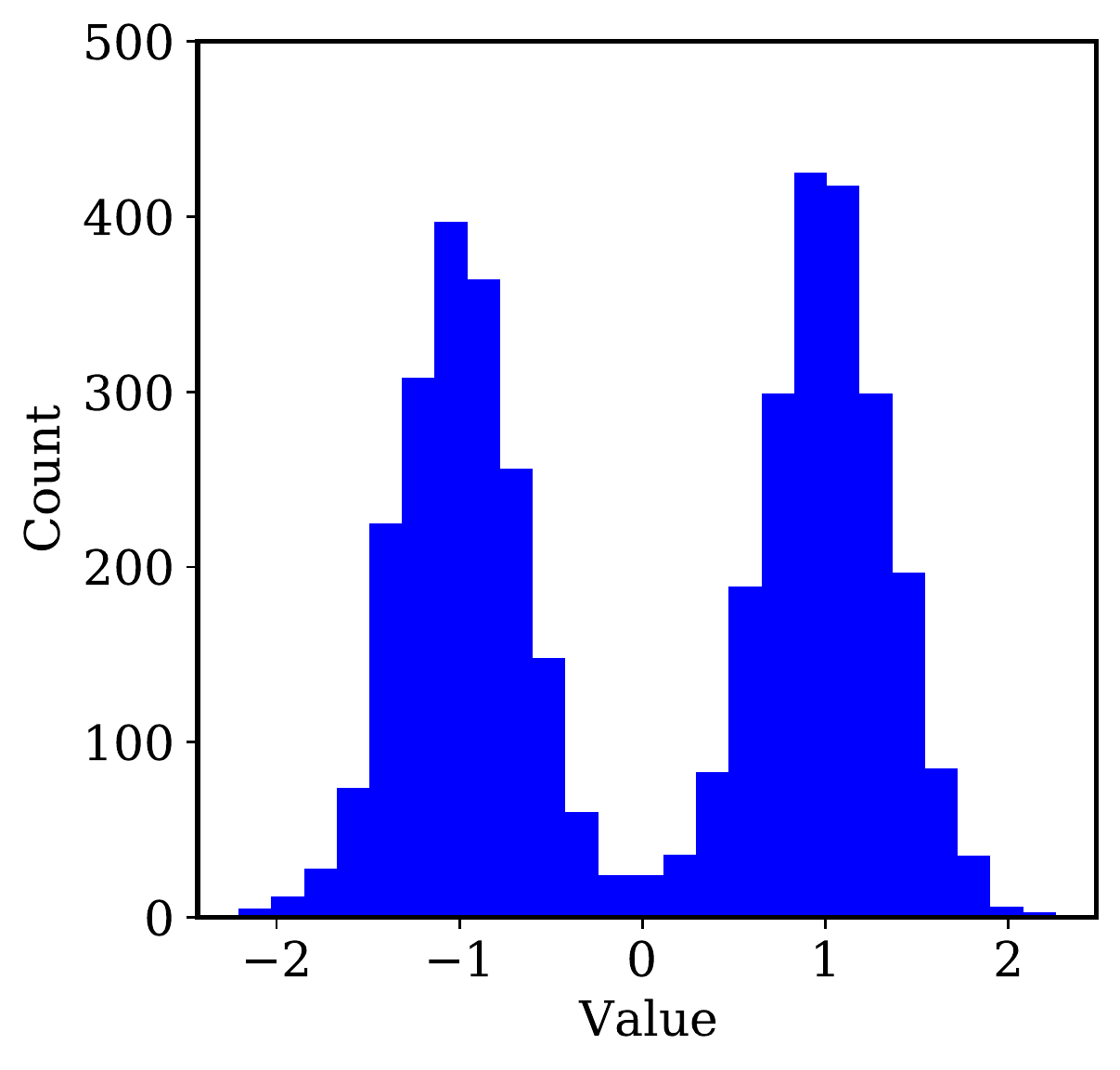}
\caption{}
\label{Sampling_Demo_HMC_3}
\end{subfigure}
\begin{subfigure}{0.32\textwidth}
\centering  
\includegraphics[width=\textwidth]{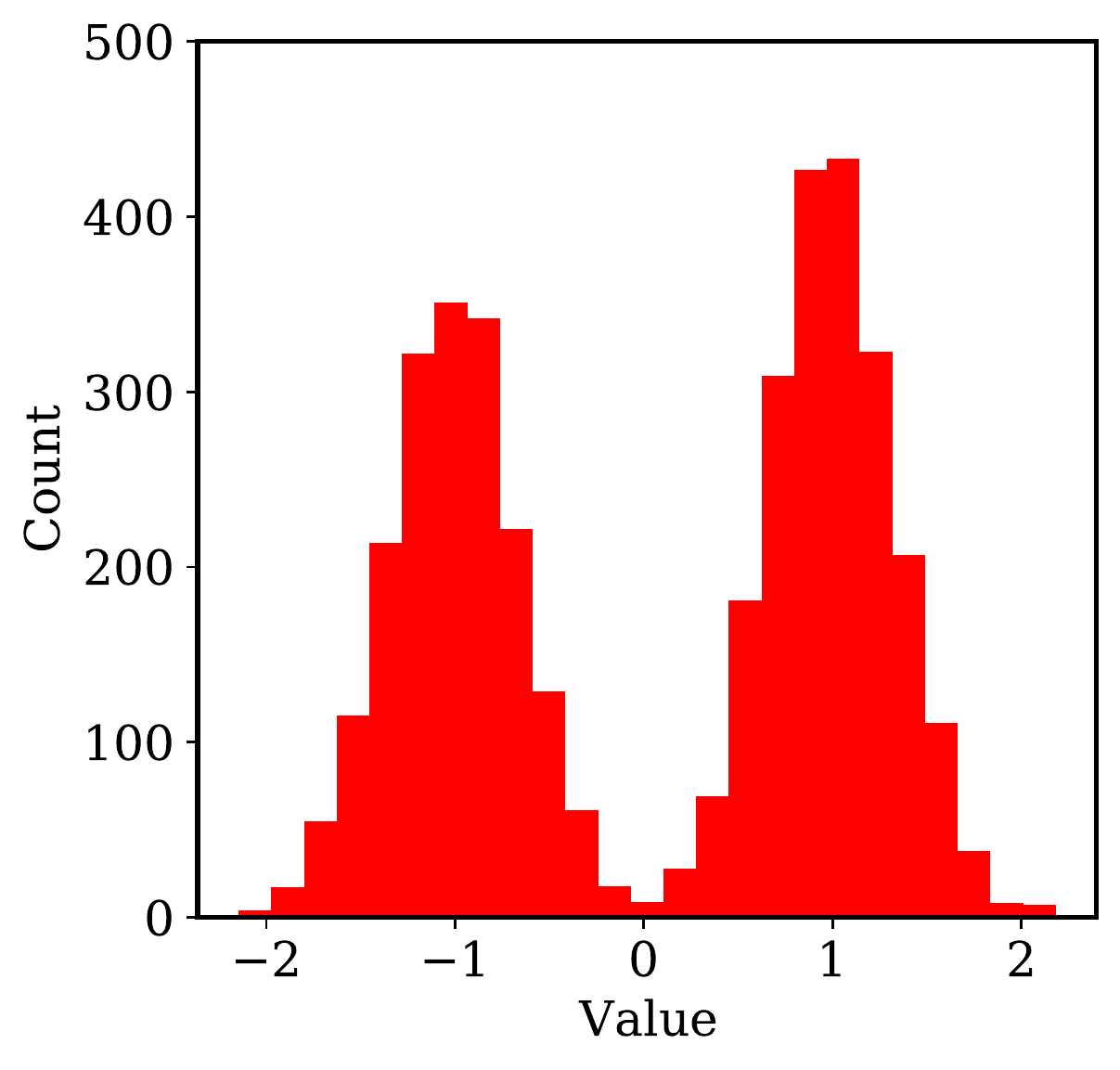}
\caption{}
\label{Sampling_Demo_HMC_4}
\end{subfigure}
\caption{{(a) $\{q, p\}$ phase space plot corresponding to different energy levels. (b) and (c) Histogram of samples from a 1-D Gaussian mixture density using  traditional HMC and L-HNNs in HMC, respectively. L-HNNs in HMC requires only $1.6\%$ of the numerical gradients of target density required by traditional HMC.}}
\label{Sampling_Demo_HMC_Hist}
\end{figure}

\subsection{Discussion of stationarity}\label{sec:Sta_Erg}

MCMC samplers should be stationary with respect to the target density $\pi$. The formal theorem on the stationarity of an idealized HMC sampler, in which no numerical errors are introduced when simulating the Hamiltonian dynamics, is presented in Theorem \ref{thm:station}. 

\begin{theorem}[Stationarity \cite{Vishnoi2021a}]\label{thm:station}
  Let $f : \mathbb{R}^d \to \mathbb{R}$ be a differentiable function. Let $\Delta t > 0$ be the step size of the idealized HMC. Let $\psi_{\Delta t}(\pmb{q},\pmb{p})$ denote the position in the phase space at step $\Delta t$. Suppose $\{\pmb{q},\pmb{p}\}$ is a sample from the density
     \begin{displaymath}
    \pi(\pmb{q},\pmb{p}) \propto \exp\big(-U(\pmb{q})\big)~\exp\big(-\frac{1}{2}\lVert\pmb{p}\lVert^2)\big)
  \end{displaymath} Then, the density of $\psi_{\Delta t}(\pmb{q},\pmb{p})$ is $\pi$ for any $\Delta t > 0$. Moreover, the density of $\psi_{\Delta t}(\pmb{q},\pmb{\xi})$, where $\pmb{\xi} \sim \mathcal{N}(\pmb{0},\pmb{I}_d)$, is also $\pi$. Thus, the idealized HMC preserves $\pi$.
\end{theorem}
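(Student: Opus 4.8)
The plan is to derive stationarity from two classical structural facts about the exact Hamiltonian flow $\psi_{\Delta t}$: it conserves the Hamiltonian, and it preserves phase-space volume. Write $H(\pmb{z}) = U(\pmb{q}) + \frac{1}{2}\lVert\pmb{p}\rVert^2$ (with $U$ the differentiable potential; the statement calls it $f$), so that $\pi(\pmb{z}) \propto \exp(-H(\pmb{z}))$, and recall from \eqref{eqn:HD_2} that $\psi_{\Delta t}$ is the time-$\Delta t$ map of $\dot{\pmb{z}} = \pmb{J}\,\nabla H(\pmb{z})$. Because $\pi$ depends on $\pmb{z}$ only through $H(\pmb{z})$, once we know $H\circ\psi_{\Delta t} = H$ and $\lvert\det D\psi_{\Delta t}\rvert \equiv 1$, the change-of-variables formula immediately returns $\pi$ as the law of $\psi_{\Delta t}(\pmb{z})$.

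The steps would be: (i) \emph{Energy conservation.} Along a trajectory $\frac{d}{dt}H(\pmb{z}(t)) = \nabla H(\pmb{z})^{\!\top}\pmb{J}\,\nabla H(\pmb{z}) = 0$ since $\pmb{J}^{\!\top} = -\pmb{J}$; hence $H\circ\psi_{\Delta t} = H$, and the same holds for $\psi_{\Delta t}^{-1}$ because it sends each point back along its own trajectory. (ii) \emph{Volume preservation.} The generating field $\pmb{v}(\pmb{z}) = \pmb{J}\,\nabla H(\pmb{z})$ is divergence-free, $\nabla\!\cdot\!\pmb{v} = \sum_i \bigl(\partial_{q_i}\partial_{p_i}H - \partial_{p_i}\partial_{q_i}H\bigr) = 0$ by equality of mixed partials, so Liouville's theorem gives $\lvert\det D\psi_{\Delta t}(\pmb{z})\rvert \equiv 1$, and likewise for the inverse. (iii) \emph{Push-forward.} For any Borel set $A$, if $\pmb{z}\sim\pi$ then $\Pr[\psi_{\Delta t}(\pmb{z})\in A] = \int_{\psi_{\Delta t}^{-1}(A)}\pi(\pmb{z})\,d\pmb{z} = \int_A \pi(\psi_{\Delta t}^{-1}(\pmb{w}))\,\lvert\det D\psi_{\Delta t}^{-1}(\pmb{w})\rvert\,d\pmb{w} = \int_A \pi(\pmb{w})\,d\pmb{w}$, using (i) and (ii); this is the first claim. (iv) \emph{Momentum refresh.} Since $\pi(\pmb{q},\pmb{p})$ factorizes as a function of $\pmb{q}$ times $\mathcal{N}(\pmb{p};\pmb{0},\pmb{I}_d)$, the conditional law of $\pmb{p}$ given $\pmb{q}$ under $\pi$ is exactly $\mathcal{N}(\pmb{0},\pmb{I}_d)$, so replacing $\pmb{p}$ by an independent $\pmb{\xi}\sim\mathcal{N}(\pmb{0},\pmb{I}_d)$ keeps $(\pmb{q},\pmb{\xi})\sim\pi$; composing with (iii) yields the ``moreover'' statement. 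A full idealized-HMC transition is momentum refresh followed by the exact flow (the Metropolis step of \eqref{eqn:HMC_5} is vacuous here, since $H$ is conserved and hence $\alpha=1$), so $\pi$ is invariant.

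The hard part, I expect, is not the algebra above but the global well-posedness needed to speak of $\psi_{\Delta t}$ as a genuine $C^1$ diffeomorphism of $\mathbb{R}^{2d}$: one must know the ODE $\dot{\pmb{z}} = \pmb{J}\,\nabla H(\pmb{z})$ has a solution on all of $[0,\Delta t]$ for every initial condition (no finite-time blow-up) and that it depends smoothly on that condition. With only ``$U$ differentiable'' this can fail, so I would add a mild regularity/growth hypothesis\textemdash e.g.\ $U\in C^1$ with globally Lipschitz or at most sub-quadratically growing gradient\textemdash under which Picard--Lindel\"of and the standard smooth-dependence-on-initial-data theorem supply existence, uniqueness, and $C^1$ dependence; the rest is then the routine change-of-variables bookkeeping sketched in steps (i)--(iv).
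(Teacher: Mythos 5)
Your argument is correct and is essentially the standard proof of this result \textemdash{} indeed it is the proof given in the cited source, whereas the paper itself only states in one line that the result ``relies on the Hamiltonian dynamics being time-reversible and the Hamiltonian being conserved'' and defers to Vishnoi. One small but worthwhile observation: your version identifies the operative pair of properties more accurately than the paper's remark does. Time-reversibility is the ingredient one needs for detailed balance of the \emph{discretized}, Metropolis-adjusted sampler; for the invariance of the \emph{exact} flow claimed in the theorem, what actually carries the change-of-variables computation is volume preservation (Liouville, via $\nabla\cdot(\pmb{J}\nabla H)=0$) together with energy conservation (via the antisymmetry of $\pmb{J}$), exactly as in your steps (i)--(iii); your step (iv) for the momentum refresh via the product structure of $\pi$ is also the standard and correct argument. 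Your closing caveat is a genuine point rather than pedantry: with $U$ merely differentiable, $\nabla H$ need not be locally Lipschitz (so $\psi_{\Delta t}$ may not be a well-defined flow) and $H$ need not be $C^2$ (so the divergence computation is not even licensed), so some strengthening of the hypotheses, as you propose, is needed for the argument \textemdash{} and for the theorem statement itself \textemdash{} to be airtight.
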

\begin{proof}
  The proof of the theorem relies on the Hamiltonian dynamics being time-reversible and the Hamiltonian being conserved. The formal proof is presented in Vishnoi \cite{Vishnoi2021a}.
\end{proof}

As discussed in Greydanus et al.\ \cite{Greydanus2019a}, the Hamiltonian dynamics simulated by HNNs and L-HNNs are perfectly time-reversible as a result of Liouville’s theorem. This feature can also be seen in Figure \ref{SaE_1}, wherein two Hamiltonian trajectories for the 1-D Gaussian mixture density are simulated in forward and reverse time using L-HNNs. Both trajectories have the same initial position in forward time, but different initial momenta. To simulate the reverse-time trajectories, the final $\{q,p\}$ values at the end time during forward integration were used as the initial conditions. Backward time integration was then performed using L-HNNs. 

By virtue of their loss function (Equation \eqref{eqn:HNN_1}), L-HNNs can also conserve the Hamiltonian in an approximate sense. The degree to which the Hamiltonian is conserved may depend on the target distribution of interest and the training data. For example, Figure \ref{SaE_2} presents the Hamiltonian evolution with time for the 1-D Gaussian mixture density, considering several initial conditions $\{q(0),p(0)\}$ using $\Delta t=0.05$. Overall, the Hamiltonian values are seen to be approximately constant with time. For comparison, the Hamiltonian values obtained using traditional numerical gradients are also presented, and are also approximately constant with time. Owing to some numerical errors in the Hamiltonian conservation, and in line with the traditional HMC procedure, the use of a Metropolis acceptance criterion in Equation \eqref{eqn:HMC_HNN_1} is important to ensure stationarity when using HNNs in HMC. The Metropolis acceptance criterion leaves the target density $\pi$ invariant, as discussed in Roberts and Rosenthal \cite{Roberts2004a}. 


\begin{figure}[!htbp]
\begin{subfigure}{0.48\textwidth}
\centering  
\includegraphics[scale=0.42]{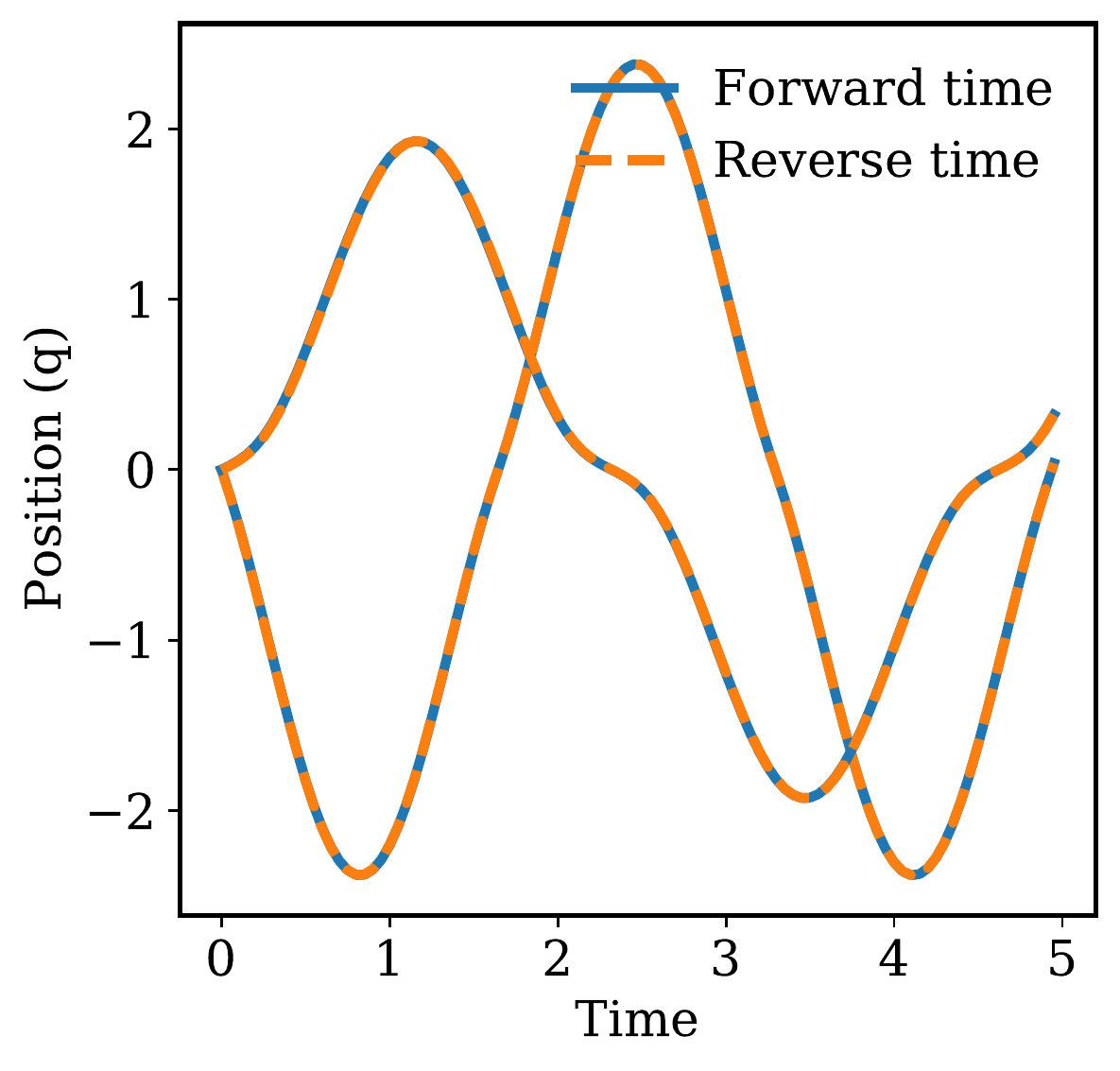} 
\caption{}
\label{SaE_1}
\end{subfigure}
\begin{subfigure}{0.48\textwidth}
\centering  
\includegraphics[scale=0.42]{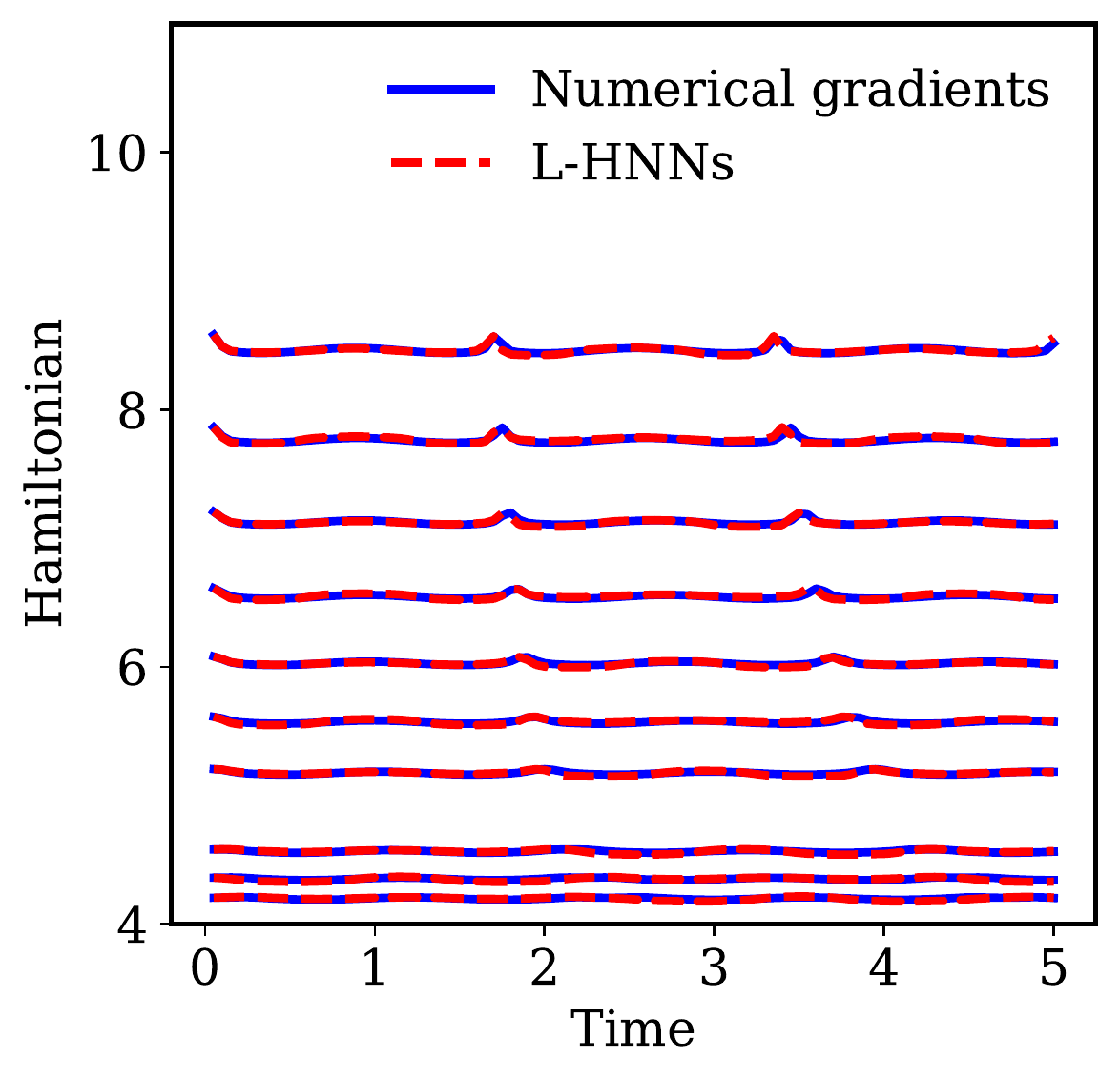} 
\caption{}
\label{SaE_2}
\end{subfigure}
\caption{{(a) Time reversibility of the L-HNNs in predicting the Hamiltonian dynamics under two different sets of initial conditions, given a 1-D Gaussian mixture density. (b) Conservation of the Hamiltonian for different energy levels given a 1-D Gaussian mixture density.}}
\label{SaE}
\end{figure}

\section{No-U-Turn Sampling (NUTS) with L-HNNs}

Traditional HMC requires the end time $T$ to be specified. For all but simple problems, specifying $T$ is not only challenging but also has considerable impact on HMC sampler performance diagnostics (e.g., the serial auto-correlations between the samples). Hoffman and Gelman \cite{Hoffman2014} introduced the NUTS algorithm to automatically decide $T$ for each sample in HMC. In summary, NUTS works by building the set of candidate $\{\pmb{q},~\pmb{p}\}$ states $\mathcal{B}$, using a binary tree. At each iteration, the tree length is doubled by integrating randomly in either the forward or reverse direction. Out of the states $\mathcal{B}$, a subset of states $\mathcal{C}$ is selected so as not to violate the detailed balance condition. A slice variable $u$ is often used in NUTS to facilitate the selection of subset $\mathcal{C}$. The doubling procedure for building the binary tree is stopped when a u-turn is made, as defined by the criterion:
\begin{equation}
    \label{eqn:NUTS_1}
    (\pmb{q}^{+}-\pmb{q}^{-}) \cdot \pmb{p}^{-} < 0~~\textrm{or}~~(\pmb{q}^{+}-\pmb{q}^{-}) \cdot \pmb{p}^{+} < 0
\end{equation}
where $\{\pmb{q}^{+},~\pmb{p}^{+}\}$ and $\{\pmb{q}^{-},~\pmb{p}^{-}\}$ are, respectively, the leftmost and rightmost states of a sub-tree in the binary tree. The doubling procedure may also be stopped when the integration error while simulating the Hamiltonian dynamics exceeds a threshold, as defined by:
\begin{equation}
    \label{eqn:NUTS_2}
    \varepsilon \equiv H\big(\pmb{q},~\pmb{p}\big)+\ln{u} > \Delta_{max}
\end{equation}
where $\{\pmb{q},~\pmb{p}\}$ is the current candidate state and $\Delta_{max}$ is a threshold recommended by Hoffman and Gelman to be set to 1,000 \cite{Hoffman2014}. From the subset of states $\mathcal{C}$, the next state is randomly selected with a uniform probability. This version of the NUTS algorithm is referred to as naive NUTS, since it requires storing $O(2^j)$ ($j$ is the number of doubling iterations in the binary tree) candidate states, leading to large memory requirements. {Naive NUTS also requires simulating the full tree---even when the stopping criteria are satisfied in the middle of the doubling iteration---leading to an expenditure of computational resources.} Hoffman and Gelman \cite{Hoffman2014} introduced an efficient version of NUTS to alleviate these issues by using a sophisticated transition kernel for building the binary tree which only requires storing $O(j)$ candidate states in memory, but generating an extra $O(2^j)$ random numbers (an insignificant cost). The efficient version of NUTS is used in this paper.

\subsection{L-HNNs in NUTS}

Here, we use L-HNNs in place of the leapfrog integrator in the efficient NUTS algorithm to build the binary tree. As a simple demonstration, consider the 1-D Gaussian mixture density discussed in Section \ref{sec:HMC_LHNNs}. Details of the L-HNN training remain the same. Using L-HNNs in efficient NUTS, 5,000 samples were simulated, with the first 1,000 samples considered as burn-in and $\Delta t = 0.05$. Table \ref{Table_HMC_ESS} compares the Effective Sample Size (ESS) of traditional NUTS with that of L-HNNs in NUTS, where the ESS is normalized by the total number of gradient evaluations (either from the HMC or the L-HNN training). Also presented are the normalized ESS values for traditional HMC and L-HNNs in HMC. Overall, the ESS per gradient of the target density is considerably better for L-HNNs in HMC and L-HNNs in NUTS compared to traditional HMC and traditional NUTS. Although the ESS per gradient of the target density is similar between L-HNNs in HMC and L-HNNs in NUTS, for achieving a similar value of the ESS, while L-HNNs in HMC required $500,000$ neural network gradients, L-HNNs in NUTS required only around $91,000$. This difference demonstrates the efficacy of NUTS over HMC with L-HNNs for this simple problem. For more complex high-dimensional problems, the efficacy of L-HNNs in NUTS is expected to be even since it is difficult to know the optimal end time for the Hamiltonian trajectory a priori.

\begin{table}[htbp]
\centering
\caption{Comparison of the performance of traditional HMC, traditional NUTS, HNN-HMC, and HNN-NUTS, considering a 1-D Gaussian mixture density. ESS per gradient is used as the performance metric.}
\label{Table_HMC_ESS}
\small
\begin{tabular}{ |c|c|c| }
\hline
 & \textbf{\Centerstack[c]{ESS per gradient \\ of target density}}   \\
\hline
\textbf{L-HNNs in HMC} & $4.59E-3$ \\
\hline
\textbf{Traditional HMC} & $8.42E-5$ \\
\hline
\textbf{L-HNNs in NUTS} & $4.83E-3$ \\
\hline
\textbf{Traditional NUTS} & $4.4E-4$ \\
\hline
\end{tabular}
\end{table}
\normalsize


Hoffman and Gelman \cite{Hoffman2014} also proposed a dual averaging scheme to automatically tune the step size $\Delta t$ given a target acceptance ratio. While this automatic step size determination will be interesting to explore for L-HNNs in NUTS, in this paper, we fix the step size to the same small value for both L-HNNs in NUTS and traditional NUTS. This helps facilitate a fair comparison between these algorithms, and step size tuning will be covered in a future study.

\subsection{Online error monitoring}

While direct use of L-HNNs in NUTS works for a simple 1-D Gaussian mixture density, for more complex distributions care must be taken to avoid large integration errors when building the binary tree. Large integration errors can arise when sampling in the tails of a distribution, since limited L-HNN training data may be available in these regions of low probability density. While building the binary tree, when such a region of low probability density is reached and integration errors are large, NUTS prematurely terminates the tree building procedure. As a result, the next sample will likely be in the same vicinity as the previous one. This cycle of generating a new sample near the previous sample continues, leading to degenerate clusters of samples in regions of low probability density. Figure \ref{Naive_NUTS} presents an example of this sampling degeneracy when using L-HNNs in NUTS for a 3-D Rosenbrock density with heavy tails \cite{Pagani2019a}. Note that the sampler is unable to rapidly move to regions of high probability density after visiting a region of low probability density, as a result of large integration errors when building the binary tree. This sort of degeneracy in sampling may be less concerning when using L-HNNs in HMC, by virtue of the Metropolis acceptance criterion in Equation \eqref{eqn:HMC_HNN_1}, which rejects a proposed sample when the difference in the Hamiltonian values is very large. At the same time, the Metropolis criterion can also prevent L-HNNs in HMC from efficiently exploring the uncertainty space.

\begin{figure}[htbp]
\begin{subfigure}{0.32\textwidth}
\centering  
\includegraphics[width=\textwidth]{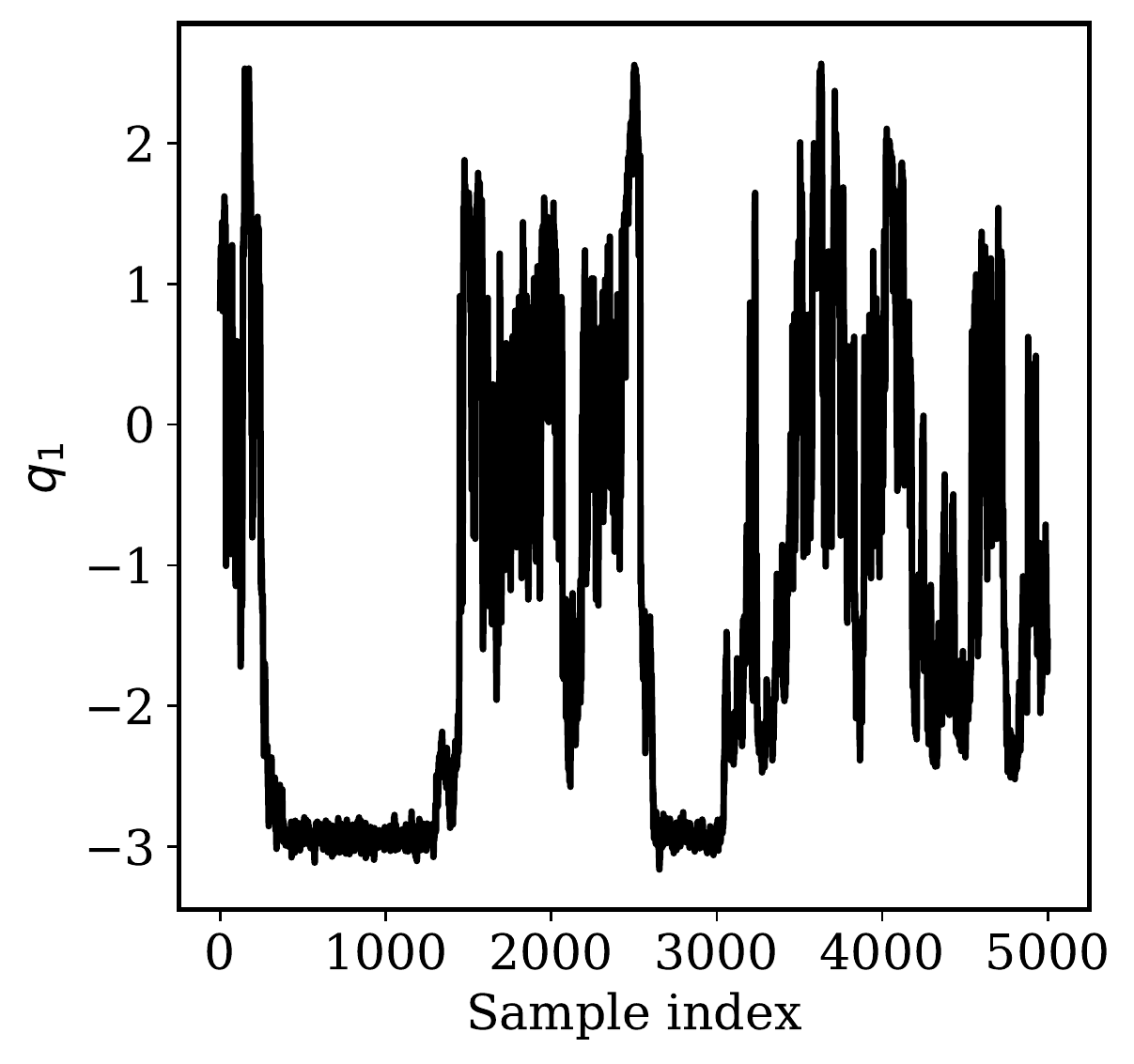} 
\caption{}
\label{Naive_NUTS_1}
\end{subfigure}
\begin{subfigure}{0.32\textwidth}
\centering  
\includegraphics[width=\textwidth]{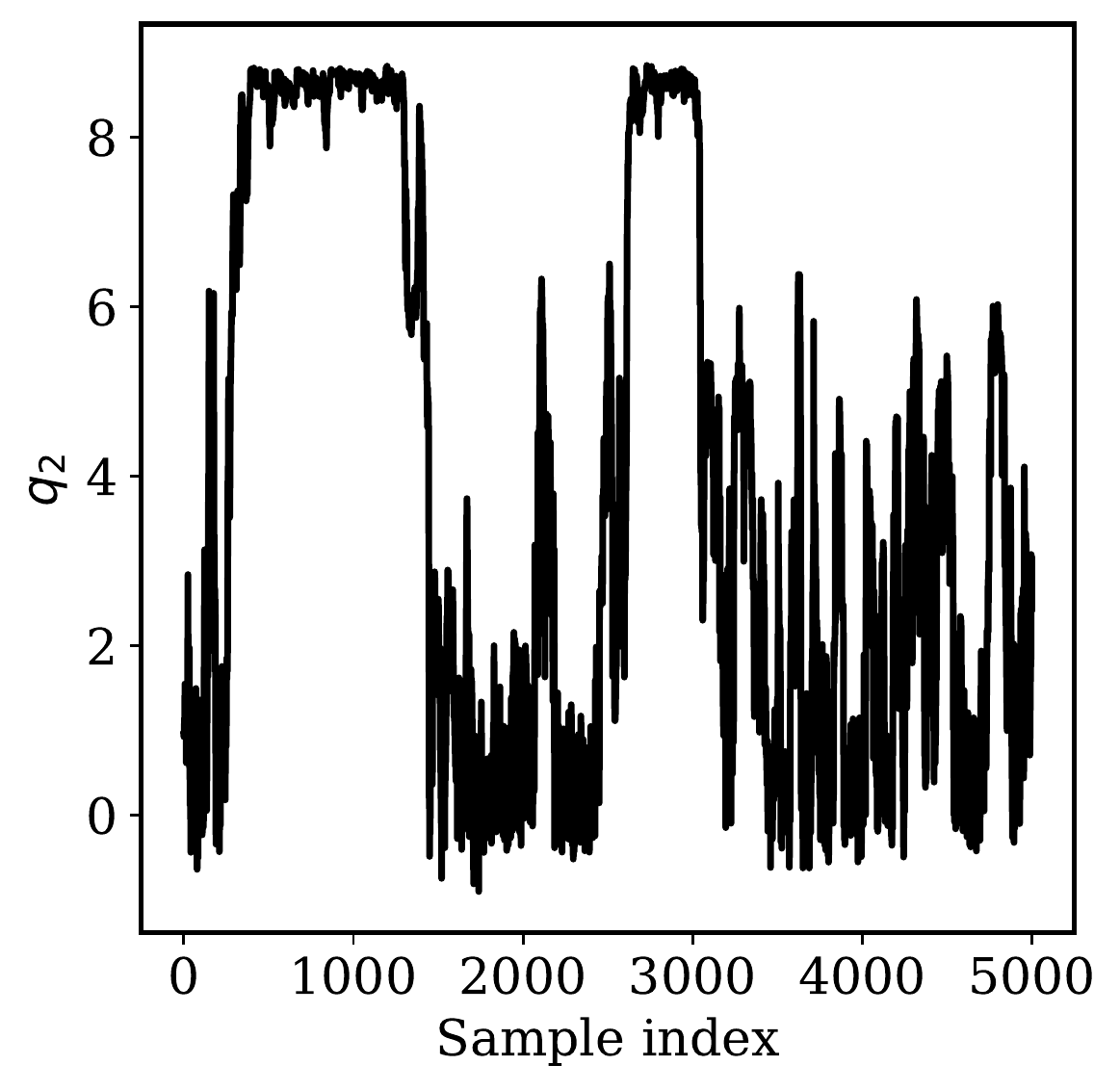} 
\caption{}
\label{Naive_NUTS_2}
\end{subfigure}
\begin{subfigure}{0.32\textwidth}
\centering  
\includegraphics[width=\textwidth]{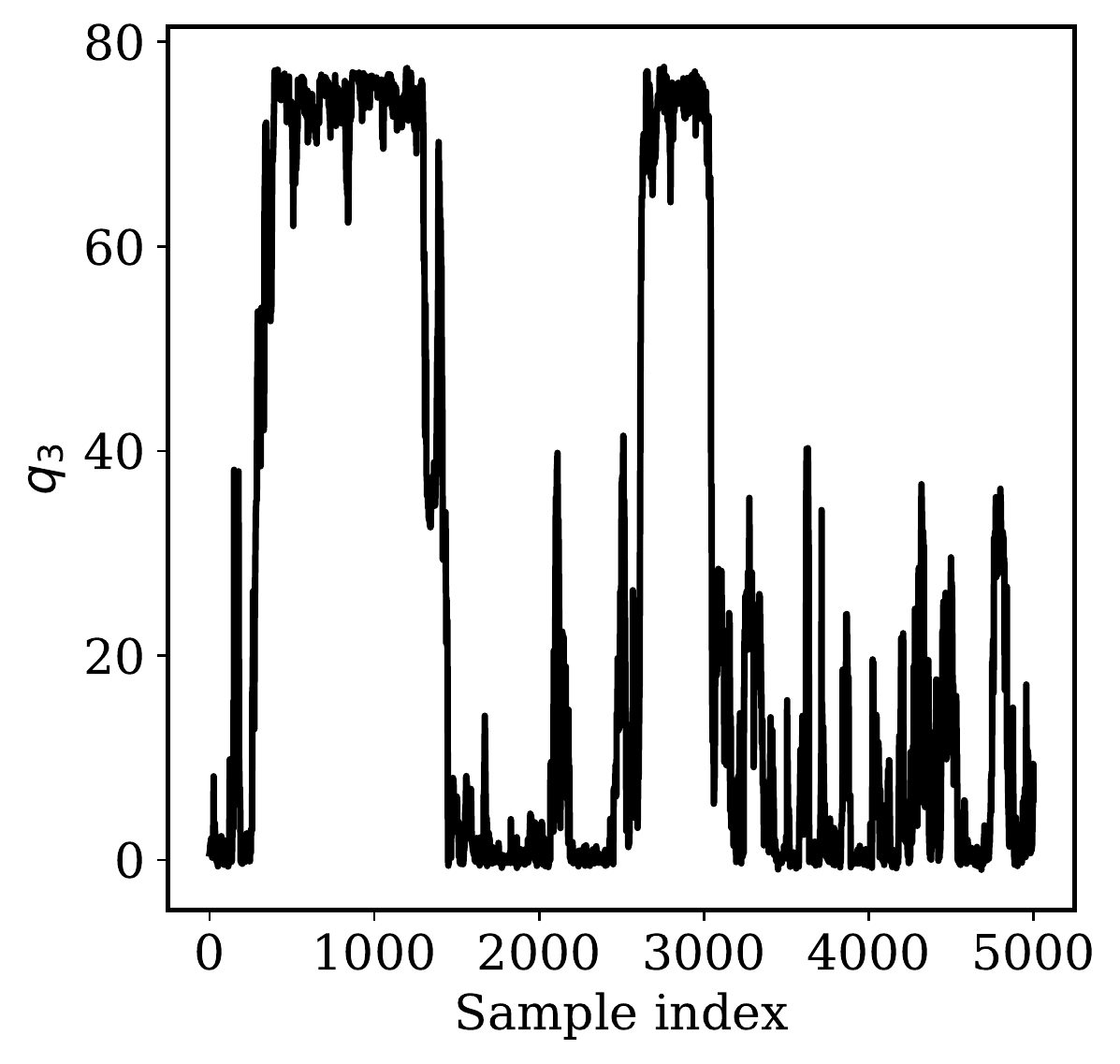} 
\caption{}
\label{Naive_NUTS_3}
\end{subfigure}
\caption{Degeneracy in sampling when using L-HNNs in NUTS for a 3-D Rosenbrock density. These trace plots correspond to the dimensions (a) $q_1$, (b) $q_2$, and (c) $q_3$.}
\label{Naive_NUTS}
\end{figure}

To alleviate the above-mentioned issues caused by large integration errors, we propose an online error monitoring scheme for L-HNNs in NUTS. This error monitoring scheme relies on the stopping criterion in Equation \ref{eqn:NUTS_2}, as introduced by Hoffman and Gelman \cite{Hoffman2014}. This criterion monitors the integration errors when simulating the Hamiltonian trajectories and terminates the tree building procedure when they are large. In theory, if the integration errors are non-existent, then the quantity $\varepsilon$ in Equation \ref{eqn:NUTS_2} will always be less than zero. But for practical applications with non-negligible integration errors, the error metric ($\varepsilon \equiv H(\pmb{q},~\pmb{p})+\ln{u}$) can be greater than zero during a doubling iteration. We define a threshold $\Delta_{max}^{hnn}$ for the error metric $\varepsilon$ while using L-HNNs. When this quantity exceeds $\Delta_{max}^{hnn}$, the L-HNN is abandoned and standard leapfrog integration using numerical gradients of the target density takes over. We also introduce a parameter $N_{lf}$, which sets the number of leapfrog samples to take before reintroducing the L-HNN. $N_{lf}$ can be viewed as the number of ``cooldown'' samples needed to return the sampler to regions where the L-HNN is sufficiently trained. In the traditional version of efficient NUTS, Hoffman and Gelman \cite{Hoffman2014} recommend that the error threshold for leapfrog integration be set to a large value (e.g., $\Delta_{max}^{lf}=1,000$). In setting the threshold for L-HNNs, we recommend a more conservative value of around $\Delta_{max}^{hnn}=10$ to prevent sampling degeneracy in regions of low probability density. $N_{lf}$ can be set to a low value of 5--20 samples in order to essentially return the sampler to regions of high probability density. These settings will contribute to the sampler exploring the uncertainty space well while ensuring that L-HNNs integration errors in certain regions of the space do not cause sampling degeneracy.



Algorithms \ref{alg:HNN_NUTS1} and \ref{alg:HNN_NUTS2} present the pseudocode for using L-HNNs in efficient NUTS with online error monitoring. These algorithms differ somewhat from the traditional version of efficient NUTS proposed by Hoffman and Gelman \cite{Hoffman2014}. The most notable difference is the use of L-HNNs for predicting the Hamiltonian dynamics in the \textbf{BuildTree(.)} function of Algorithm \ref{alg:HNN_NUTS2}. In addition, the \textbf{BuildTree(.)} function takes and returns an indicator variable $\mathbbm{1}_{lf}$ to determine whether L-HNNs or the leapfrog scheme with numerical gradients of the target density should be used to simulate the Hamiltonian dynamics. The main loop (Algorithm \ref{alg:HNN_NUTS1}) monitors the number of samples to which the leapfrog scheme with numerical gradients of the target density is applied, and facilitates reversion to L-HNNs once this number equals $N_{lf}$. There are two error thresholds (i.e., $\Delta_{max}^{lf}$ and $\Delta_{max}^{hnn}$) in the pseudocode. $\Delta_{max}^{lf}$ is the error threshold for the leapfrog scheme using numerical gradients of the target density. $\Delta_{max}^{hnn}$ is the error threshold for L-HNNs with $\Delta_{max}^{hnn} << \Delta_{max}^{lf}$. With respect to the convergence between L-HNNs in NUTS with online error monitoring and traditional NUTS, Proposition \ref{prop_1} can be shown.

\begin{algorithm}
\caption{L-HNNs in efficient NUTS with online error monitoring (main loop)}
\label{alg:HNN_NUTS1}
\begin{algorithmic}[1]
\STATE{Hamiltonian: $H = U(\pmb{q}) + K(\pmb{p})$, Samples: M; Starting sample: $\{\pmb{q}^0,~\pmb{p}^0\}$; Step size: $\Delta t$; Threshold for leapfrog: $\Delta_{max}^{lf}$; Threshold for L-HNNs: $\Delta_{max}^{hnn}$; Number of leapfrog samples: $N_{lf}$}
\STATE{Initialize $\mathbbm{1}_{lf} = 0,~n_{lf} = 0$}
\FOR{$i = 1:M$}
\STATE{$\pmb{p}(0) \sim \mathcal{N}(\pmb{0},~\pmb{I}_d)$}
\STATE{$\pmb{q}(0) = \pmb{q}^{i-1}$}
\STATE{$u \sim Uniform\Big(\Big[0,~\exp{\{-H\big(\pmb{q}(0),~\pmb{p}(0)\big)\}}\Big]\Big)$}
\STATE{Initialize $\pmb{q}^- = \pmb{q}(0),\pmb{q}^+ = \pmb{q}(0),\pmb{p}^- = \pmb{p}(0),\pmb{p}^+ = \pmb{p}(0),j=0,\pmb{q}^* = \pmb{q}^{i-1},n=1,s=1$}
\IF{$\mathbbm{1}_{lf} = 1$}
\STATE{$n_{lf} \gets n_{lf} + 1$}
\ENDIF
\IF{$n_{lf} = N_{lf}$}
\STATE{$\mathbbm{1}_{lf} = 0,~n_{lf} = 0$}
\ENDIF
\WHILE{$s=1$}
\STATE{Choose direction $\nu_j \sim Uniform(\{-1,~1\})$}
\IF{$j=-1$}
\STATE{$\pmb{q}^-,\pmb{p}^-,\_,\_,\pmb{q}^\prime,\pmb{p}^\prime,n^\prime,s^\prime,\mathbbm{1}_{lf} = \textrm{\textbf{BuildTree}}(\pmb{q}^-,\pmb{p}^-,u,\nu_j,j,\Delta t,\mathbbm{1}_{lf})$}
\ELSE 
\STATE{$\_,\_,\pmb{q}^+,\pmb{p}^+,\pmb{q}^\prime,\pmb{p}^\prime,n^\prime,s^\prime,\mathbbm{1}_{lf} = \textrm{\textbf{BuildTree}}(\pmb{q}^+,\pmb{p}^+,u,\nu_j,j,\Delta t,\mathbbm{1}_{lf})$}
\ENDIF
\IF{$s^\prime = 1$}
\STATE{With probability $\textrm{min}\{1,~\frac{n^\prime}{n}\}$}, set $\{\pmb{q}^{i},\pmb{p}^{i}\} \gets \{\pmb{q}^\prime,\pmb{p}^\prime\}$ 
\ENDIF
\STATE{$n \gets n + n^\prime$}
\STATE{$s \gets s^\prime \mathbbm{1}\big[(\pmb{q}^+ - \pmb{q}^-) \cdot \pmb{p}^- \geq 0\big] \mathbbm{1}\big[(\pmb{q}^+ - \pmb{q}^-) \cdot \pmb{p}^+ \geq 0\big]$}
\STATE{$j \gets j + 1$}
\ENDWHILE
\ENDFOR
\end{algorithmic}
\end{algorithm}
\normalsize

\begin{algorithm}
\caption{L-HNNs in efficient NUTS with online error monitoring (build tree function)}
\label{alg:HNN_NUTS2}
\begin{algorithmic}[1]
\STATE{function $\textrm{\textbf{BuildTree}}(\pmb{q},\pmb{p},u,\nu,j,\Delta t,\mathbbm{1}_{lf})$}
\IF{$j=0$}
\STATE{Base case taking one leapfrog step}
\STATE{$\pmb{q}^\prime,\pmb{p}^\prime \gets$ Algorithm \ref{alg:HNN_eval} with initial conditions: $\pmb{z}(0)=\{\pmb{q},~\pmb{p}\}$, Steps: $1$; End Time: $\Delta t$}
\STATE{$\mathbbm{1}_{lf} \gets \mathbbm{1}_{lf}~\textrm{\textbf{or}}~\mathbbm{1}\big[H\big(\pmb{q}^\prime,~\pmb{p}^\prime\big)+\ln{u} > \Delta_{max}^{hnn}\big]$}
\STATE{$s^\prime \gets \mathbbm{1}\big[H\big(\pmb{q}^\prime,~\pmb{p}^\prime\big)+\ln{u} \leq \Delta_{max}^{hnn}\big]$}
\IF{$\mathbbm{1}_{lf} = 1$}
\STATE{$\pmb{q}^\prime,\pmb{p}^\prime \gets$ Leapfrog integration with initial conditions: $\pmb{z}(0)=\{\pmb{q},~\pmb{p}\}$, Steps: $1$; End Time: $\Delta t$}
\STATE{$s^\prime \gets \mathbbm{1}\big[H\big(\pmb{q}^\prime,~\pmb{p}^\prime\big)+\ln{u} \leq \Delta_{max}^{lf}\big]$}
\ENDIF
\STATE{$n^\prime \gets \mathbbm{1}\big[u \leq \exp{\{-H\big(\pmb{q}^\prime,~\pmb{p}^\prime\big)\}}\big]$}
\STATE{\textbf{return} $\pmb{q}^\prime,~\pmb{p}^\prime,\pmb{q}^\prime,~\pmb{p}^\prime,\pmb{q}^\prime,\pmb{p}^\prime,n^\prime,s^\prime,\mathbbm{1}_{lf}$}
\ELSE 
\STATE{Recursion to build left and right sub-trees (follows from Algorithm 3 in \cite{Hoffman2014}, with $\mathbbm{1}_{lf}$ additionally passed to and retrieved from every \textrm{\textbf{BuildTree}} evaluation)}
\STATE{\textbf{return} $\pmb{q}^-,~\pmb{p}^-,\pmb{q}^+,~\pmb{p}^+,\pmb{q}^\prime,\pmb{p}^\prime,n^\prime,s^\prime,\mathbbm{1}_{lf}$}
\ENDIF
\end{algorithmic}
\end{algorithm}
\normalsize

\begin{proposition}\label{prop_1}
  With $\Delta_{max}^{hnn} \to -\infty$ and $N_{lf} = C$, where $C$ is an integer greater than 0, L-HNNs in NUTS with online error monitoring converges to traditional NUTS.
\end{proposition}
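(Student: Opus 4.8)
The plan is to show that, in the stated limit, Algorithms~\ref{alg:HNN_NUTS1}--\ref{alg:HNN_NUTS2} collapse, operation by operation, onto the efficient NUTS of Hoffman and Gelman \cite{Hoffman2014} with error threshold $\Delta_{max}=\Delta_{max}^{lf}$, so that for every realization of the auxiliary randomness (resampled momenta, slice variables $u$, tree directions $\nu_j$, and the progressive-sampling uniforms) and every fixed number of samples $M$ the two algorithms emit the identical chain $\{\pmb{q}^i,\pmb{p}^i\}_{i=1}^M$. Since the chain is then eventually constant in $\Delta_{max}^{hnn}$ along every sample path, convergence (in law, almost surely, or in any other reasonable sense) is immediate.

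First I would fix such a realization and argue by induction on the sample index $i$. Suppose the two algorithms agree through sample $i-1$; then iteration $i$ starts from the same $\pmb{q}(0)=\pmb{q}^{i-1}$, draws the same $\pmb{p}(0)$ and the same slice $u$, and the first call to \textbf{BuildTree} is a base case ($j=0$). On line~4 the L-HNN produces a single-step state $\pmb{q}',\pmb{p}'$ that is a deterministic function of $(\pmb{q}(0),\pmb{p}(0))$; since $U$ is differentiable and $\Delta t$ is finite, $H(\pmb{q}',\pmb{p}')$ is finite, and $\ln u$ is finite almost surely, so $\eta_i := H(\pmb{q}',\pmb{p}')+\ln u$ is a finite number. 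For any $\Delta_{max}^{hnn}<\eta_i$ the indicator on line~5 equals $1$, hence $\mathbbm{1}_{lf}\gets 1$ at this first base case; because line~5 updates $\mathbbm{1}_{lf}$ by a logical \textbf{or} and the main loop only ever \emph{resets} $\mathbbm{1}_{lf}$ between outer iterations, $\mathbbm{1}_{lf}$ then stays $1$ for the remainder of iteration~$i$. Taking $\Delta_{max}^{hnn}<\min_{1\leq i\leq M}\eta_i$ (a finite quantity, since each $\eta_i$ is finite once the chain is pinned down through step $i-1$) makes this hold at every iteration.

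Next I would check that, with $\mathbbm{1}_{lf}\equiv 1$, every base case of \textbf{BuildTree} coincides with the traditional one: lines~4 and~6 are executed but immediately overwritten on lines~8--9 by a standard leapfrog step using numerical gradients of $U$ and by $s'=\mathbbm{1}[H(\pmb{q}',\pmb{p}')+\ln u\leq\Delta_{max}^{lf}]$, and line~11 then returns $n'=\mathbbm{1}[u\leq\exp\{-H(\pmb{q}',\pmb{p}')\}]$ -- exactly the base case of efficient NUTS with $\Delta_{max}=\Delta_{max}^{lf}$. The recursion branch (line~14) is, by construction, the Hoffman--Gelman recursion with $\mathbbm{1}_{lf}$ merely threaded through, so with $\mathbbm{1}_{lf}\equiv 1$ the sub-tree states, the counts $n'$, the U-turn tests, and the progressive sampling are all unchanged; and the main loop (Algorithm~\ref{alg:HNN_NUTS1}) differs from efficient NUTS only in the $\mathbbm{1}_{lf},n_{lf}$ bookkeeping of lines~2 and~8--13 and the extra \textbf{BuildTree} argument, which with $N_{lf}=C$ a fixed positive integer is well defined, terminating, and never touches the emitted samples. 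Assembling these facts closes the induction.

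The step I expect to be the main obstacle is not analytic but combinatorial bookkeeping: one must argue carefully that executing---and then discarding---the L-HNN step on line~4 has no effect on the chain, that carrying $\mathbbm{1}_{lf}$ into and out of every recursive \textbf{BuildTree} call (together with the cooldown reset in the main loop) can neither alter the tree that is built nor the state that is selected, and that the $\eta_i$ above are genuinely finite even though $u$---hence $\ln u$---has unbounded support, so that only a \emph{seed-dependent} threshold on $\Delta_{max}^{hnn}$, not a universal one, is available; this last point is precisely why the statement must be phrased as a limit rather than as an exact identity at some fixed $\Delta_{max}^{hnn}$.
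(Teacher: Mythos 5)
Your argument is correct and follows essentially the same route as the paper's own (very brief) proof: as $\Delta_{max}^{hnn}\to-\infty$ the indicator $\mathbbm{1}_{lf}$ is forced to $1$ at every base case, so lines 7--10 of the \textbf{BuildTree} function always overwrite the L-HNN step with a numerical-gradient leapfrog step and the sampler reduces to efficient NUTS with threshold $\Delta_{max}^{lf}$. Your version simply makes explicit the pathwise induction and the seed-dependent finiteness of $H(\pmb{q}',\pmb{p}')+\ln u$ that the paper leaves implicit.
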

\begin{proof}
  It is straightforward to show this by virtue of lines 8--13 in Algorithm \ref{alg:HNN_NUTS1} and lines 7--10 in Algorithm \ref{alg:HNN_NUTS2}. As $\Delta_{max}^{hnn} \to -\infty$, $\mathbbm{1}_{lf} = 1$ and Algorithms \ref{alg:HNN_NUTS1} and \ref{alg:HNN_NUTS2} simply never call L-HNNs for predicting the Hamiltonian dynamics, but instead always rely on the leapfrog integration with numerical gradients, thus converging to traditional NUTS.
\end{proof}

\section{Sampling from a 3-D Rosenbrock density}\label{sec:Rosen3D}

A 3-D Rosenbrock density \cite{Pagani2019a} is considered to demonstrate the performance differences between HNNs and L-HNNs, and to investigate the performance of L-HNNs considering different amounts of training data. A 3-D Rosenbrock density makes for a particularly interesting case study, due to its heavy tails. The density function is defined by \cite{Pagani2019a}:
\begin{equation}
    \label{eqn:Rosen_Dens}
    f(\pmb{q}) \propto \exp{\Big(-\sum_{i=1}^{N-1}\big[100 (q_{i+1}-q_i^2)^2+(1-q_i)^2\big]/20\Big)}
\end{equation}

We trained HNNs and L-HNNs with the same training data, which contained 40 samples from HMC, each with an end time of $T=40$ using a $\Delta t = 0.025$. Both the HNNs and L-HNNs had three hidden layers with 100 neurons each. A total of $100,000$ steps at a learning rate of $5E-4$ was used for training the HNNs and L-HNNs. A sine activation function was used, per the recommendations of Mattheakis et al. \cite{Mattheakis2022a}. Efficient NUTS with online error monitoring (i.e., Algorithms \ref{alg:HNN_NUTS1} and \ref{alg:HNN_NUTS2}) was employed to simulate 35,000 samples from the target density. HNNs and L-HNNs were independently used to simulate the Hamiltonian dynamics within the \textbf{BuildTree(.)} function. $\Delta_{max}^{hnn}$ was set to 10---and when exceeded, the leapfrog scheme with numerical gradients of the target density was set to take over---and $N_{lf}$ was set to 20.

Figures \ref{3D_Rosen_q3_1} and \ref{3D_Rosen_q3_2} present trace plots for the dimension $q_1$, obtained using HNNs and L-HNNs, respectively. Note that, whether using HNNs or L-HNNs, efficient NUTS with online error monitoring mitigates degeneracy in sampling, compared to Figure \ref{Naive_NUTS_1}. Perhaps more important are the integration errors when predicting the Hamiltonian dynamics. Figures \ref{3D_Rosen_error_1} and \ref{3D_Rosen_error_2} present these errors---defined as $[\varepsilon \equiv H(\pmb{q}^\prime, \pmb{p}^\prime) + \ln{u}]$ in Algorithm \ref{alg:HNN_NUTS2}---as obtained using HNNs and L-HNNs, respectively. The errors when using HNNs are substantially larger across the sample indices, compared to using L-HNNs. Large integration errors lead to an increase in the number of samples for which the leapfrog scheme with numerical gradients of the target density is called. While using HNNs, the number of samples for which traditional leapfrog is used is 4,706; using L-HNNs, this number reduces to only 180. In total, HNNs and L-HNNs, require an additional 1,437,900 and 63,598 numerical gradients of the target density, respectively (above what is required to generate the training data)---a substantial difference.

\begin{figure}[htbp]
\begin{subfigure}{0.48\textwidth}
\centering  
\includegraphics[scale=0.42]{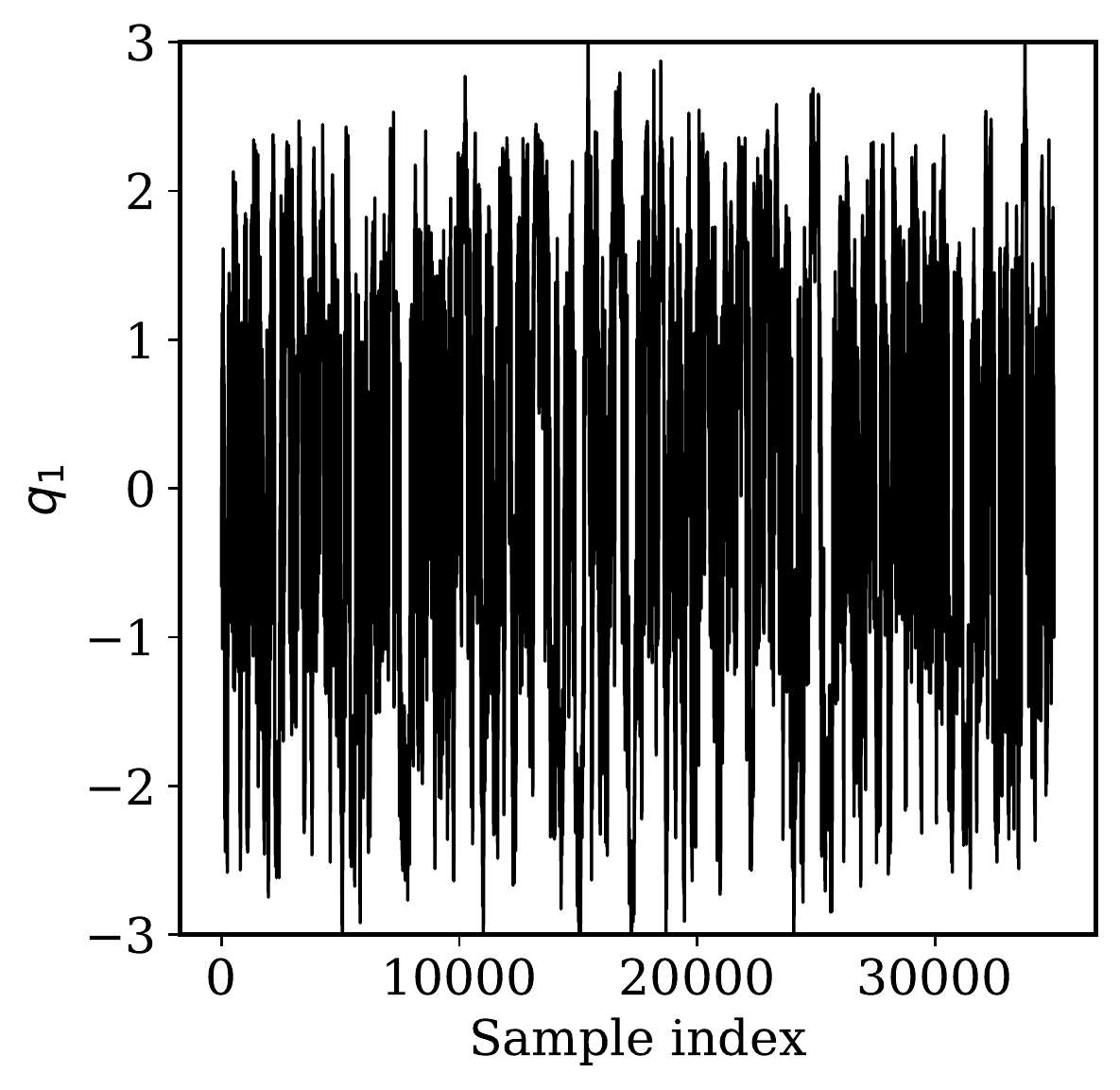}
\caption{}
\label{3D_Rosen_q3_1}
\end{subfigure}
\begin{subfigure}{0.48\textwidth}
\centering  
\includegraphics[scale=0.42]{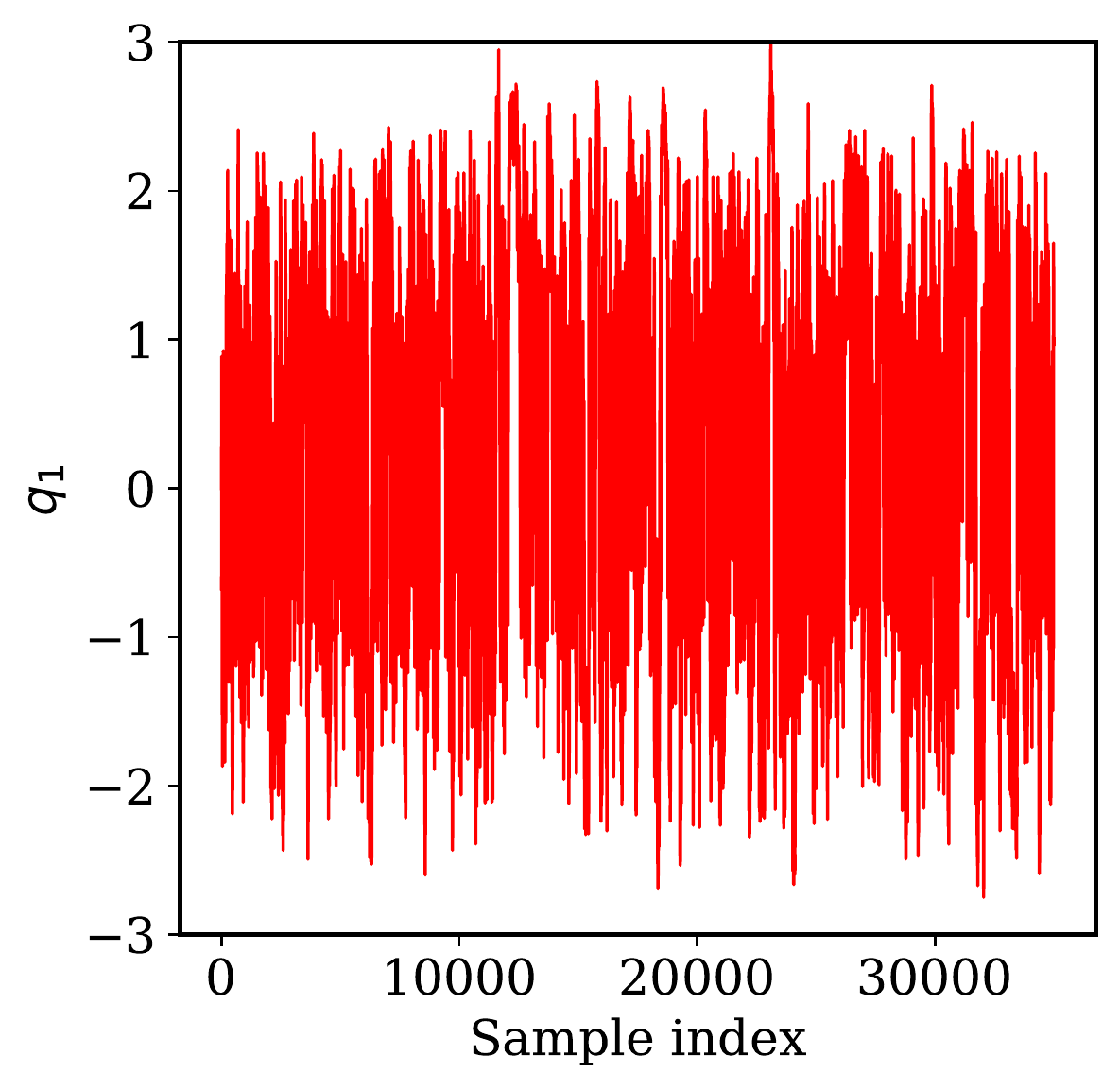} 
\caption{}
\label{3D_Rosen_q3_2}
\end{subfigure}
\caption{Trace plots of dimension $q_1$ for a 3-D Rosenbrock density simulated via efficient NUTS with online error monitoring using (a) HNNs and (b) L-HNNs. Note that the sort of degeneracy presented in Figure \ref{Naive_NUTS} is absent due to the online error monitoring scheme.}
\label{3D_Rosen_q3}
\end{figure}

\begin{figure}[htbp]
\begin{subfigure}{0.48\textwidth}
\centering  
\includegraphics[scale=0.42]{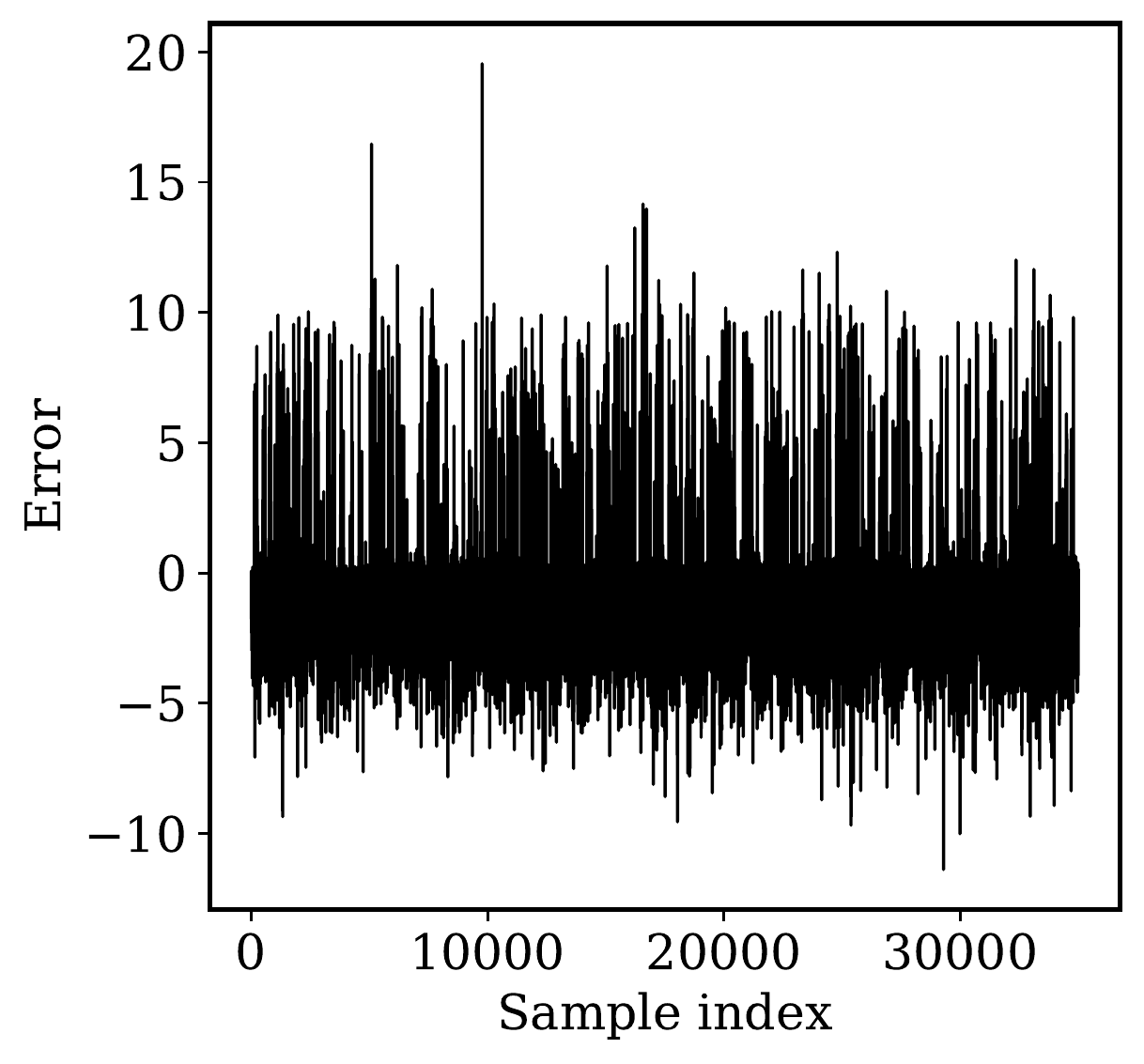}
\caption{}
\label{3D_Rosen_error_1}
\end{subfigure}
\begin{subfigure}{0.48\textwidth}
\centering  
\includegraphics[scale=0.42]{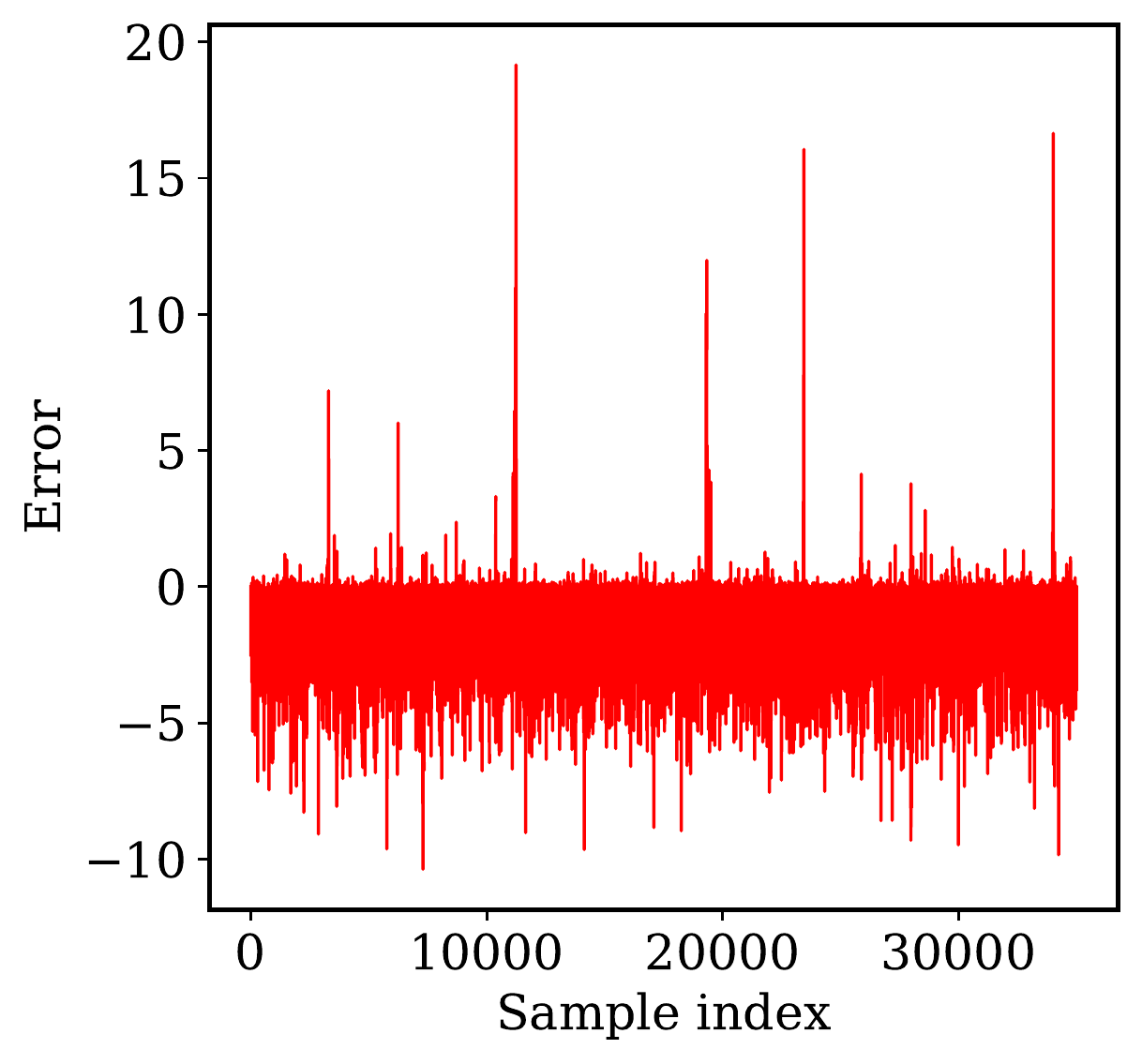} 
\caption{}
\label{3D_Rosen_error_2}
\end{subfigure}
\caption{L-HNNs error metric, as defined by the quantity $[\varepsilon \equiv H\big(\pmb{q}^\prime,~\pmb{p}^\prime\big)+\ln{u}]$ in Algorithm \ref{alg:HNN_NUTS2}, while generating samples from a 3-D Rosenbrock density using (a) HNNs and (b) L-HNNs in efficient NUTS with online error monitoring. These error values are considerably lower when using L-HNNs than HNNs.}
\label{3D_Rosen_error}
\end{figure}

To investigate the influence of training data set size, we consider three training datasets for L-HNNs. All three sets include $M_t=40$ samples from the 3-D Rosenbrock density simulated using traditional HMC, but with different end times --- $T=100$, 150, and 250 units --- and $\Delta t = 0.025$. We simulated $125,000$ samples from the 3-D Rosenbrock density, with the first $5,000$ samples considered as burn-in. For efficient NUTS with online error monitoring, we set $\Delta_{max}^{hnn} = 10$ and $N_{lf} = 20$. Figure \ref{3D_Rosen_ecdf} presents empirical cumulative distribution function (eCDF) plots for the three dimensions. In this figure, LHNN-NUTS 1, LHNN-NUTS 2, and LHNN-NUTS 3 correspond to end times of $T=100$, 150, and 250 units, respectively. Note that irrespective of the amount of training data, in general, the eCDFs of LHNN-NUTS with online error monitoring closely match the eCDFs simulated using traditional NUTS with numerical gradients of the target density. Moreover, LHNN-NUTS 2 and LHNN-NUTS 3, which were trained on HMC samples with longer Hamiltonian trajectories, almost exactly match the reference eCDFs, as compared to LHNN-NUTS 1, which shows slight deviations in dimensions $q_2$ (Figure \ref{3D_Rosen_ecdf_2}) and $q_3$ (Figure \ref{3D_Rosen_ecdf_3}).

\begin{figure}[htbp]
\begin{subfigure}{0.32\textwidth}
\centering  
\includegraphics[width=\textwidth]{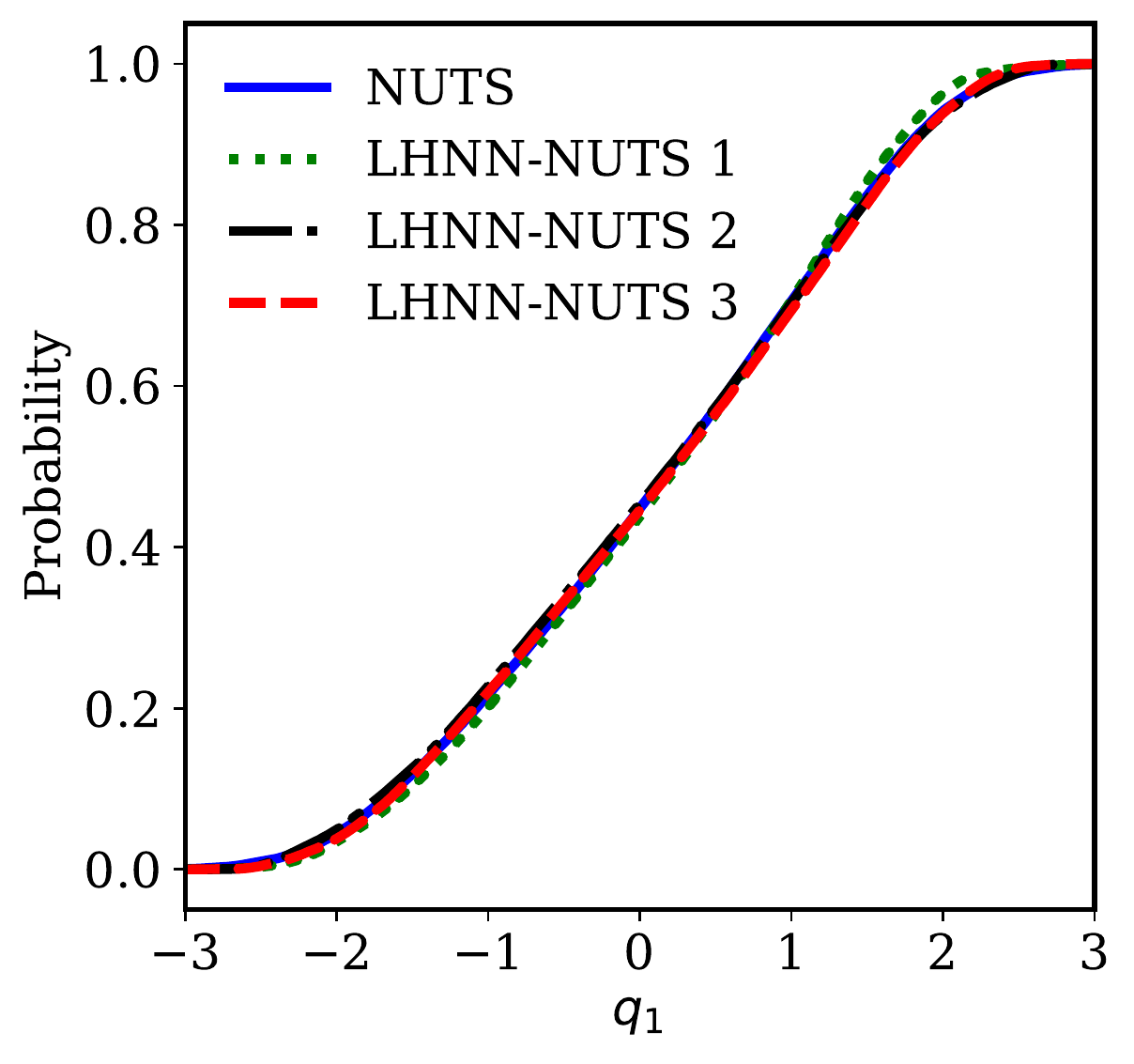} 
\caption{}
\label{3D_Rosen_ecdf_1}
\end{subfigure}
\begin{subfigure}{0.32\textwidth}
\centering  
\includegraphics[width=\textwidth]{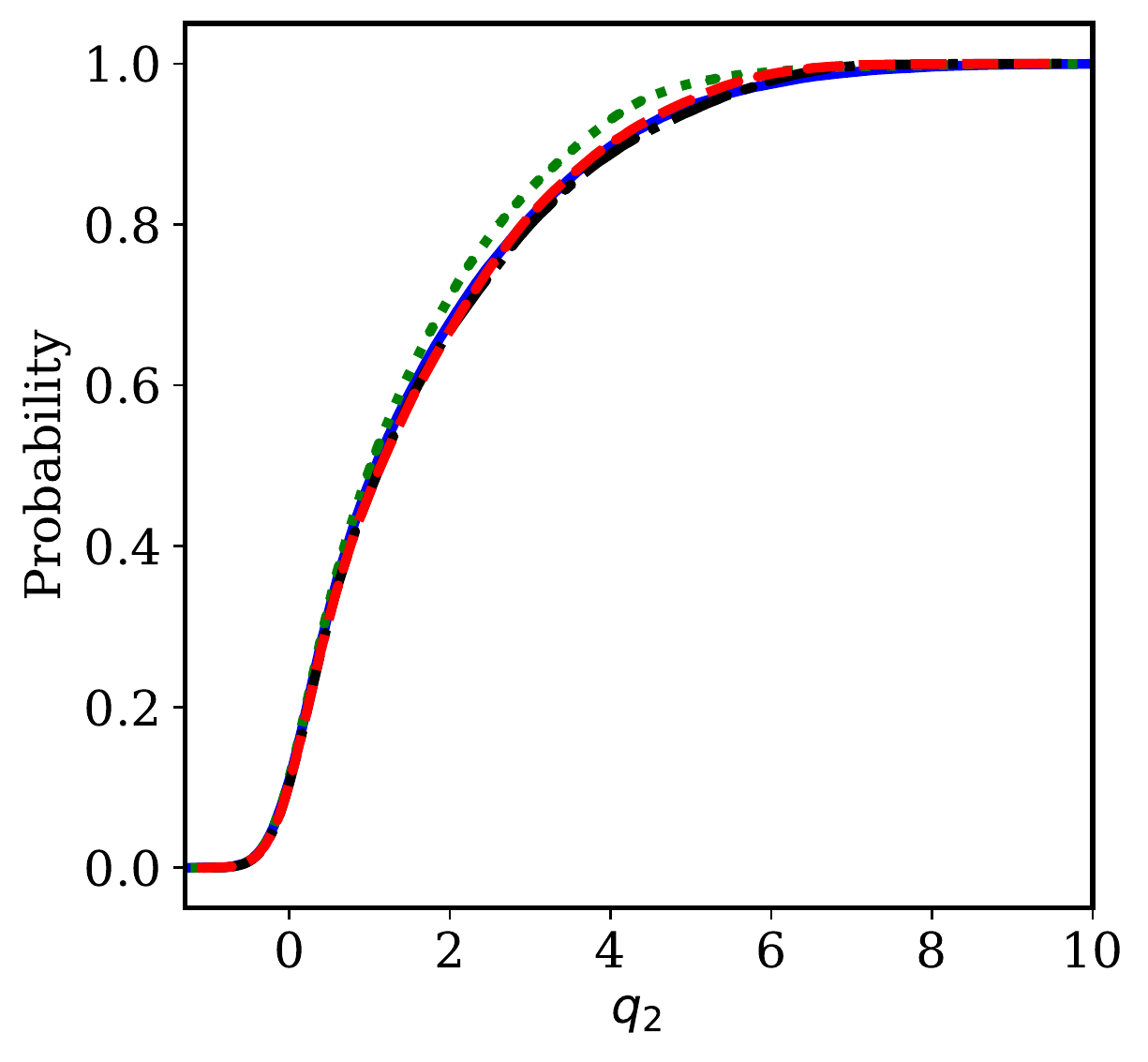} 
\caption{}
\label{3D_Rosen_ecdf_2}
\end{subfigure}
\begin{subfigure}{0.32\textwidth}
\centering  
\includegraphics[width=\textwidth]{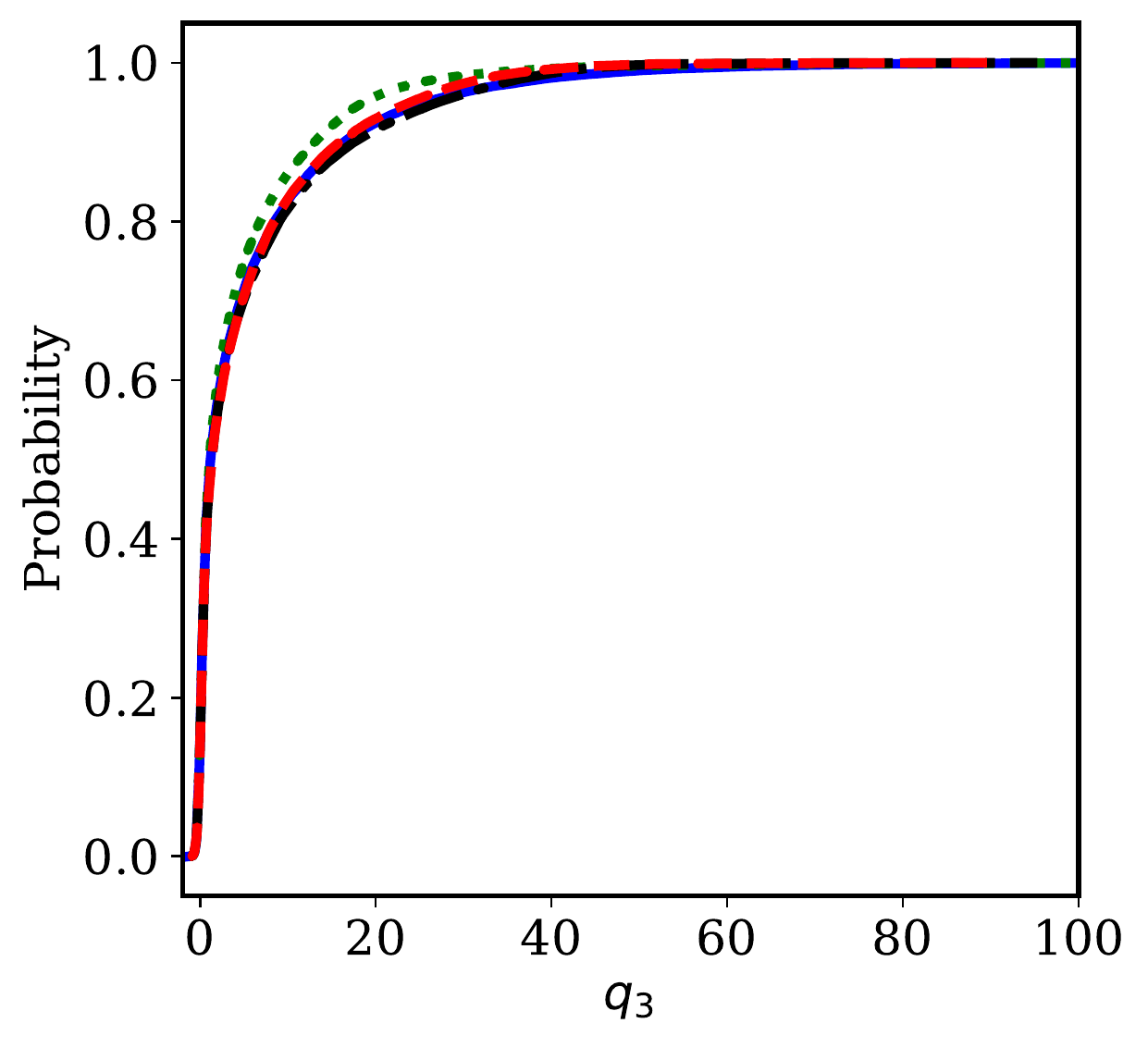} 
\caption{}
\label{3D_Rosen_ecdf_3}
\end{subfigure}
\caption{Empirical CDF plots of samples from a 3-D Rosenbrock density simulated via NUTS and L-HNNs in NUTS with online error monitoring, using different training datasets. eCDF plots are presented for the dimensions (a) $q_1$, (b) $q_2$, and (c) $q_3$. Details of the L-HNN training data are given in Table \ref{Table_Rosen_Data}.}
\label{3D_Rosen_ecdf}
\end{figure}

Table \ref{Table_Rosen_Data} compares the performance of the traditional NUTS with the LHNN-NUTS with online error monitoring in terms of ESS, number of gradients of the target density, and average ESS across all the dimensions per gradient of the target density. In short, all LHNN-NUTS results show significant improvement over the traditional NUTS method in terms of required gradient evaluations, while accurately matching the target eCDFs (Figure \ref{3D_Rosen_ecdf}) and producing samples with comparable ESS values. Even for short training trajectories with $T=100$, the total number of gradient evaluations is reduced by 83\% and the average ESS per gradient evaluation increases by 5 times. For longer training trajectories, the number of gradient evaluations reduces from approximately 16 million to only approximately 600,000 while the ESS per gradient evaluation improves by more than an order of magnitude relative to traditional NUTS. It is worth noting, however, that there is no noticeable improvement in performance by increasing $T$ from 150 to 250 for the training data generation, which implies that there is a critical length $T$ for the training data. 

\begin{table}[h]
\centering
\caption{Performance comparison between traditional NUTS and HNN-NUTS with different training data characteristics, considering a 3-D degenerate Rosenbrock density.}
\label{Table_Rosen_Data}
\small
\begin{tabular}{ |c|c|c|c| }
\hline
 & \textbf{ESS}  & \textbf{\Centerstack[c]{$\#$ gradients of \\ target density}} & \textbf{\Centerstack[c]{Avg. ESS per  \\ gradient of \\ target density}} \\
\hline
\Centerstack[c]{NUTS \\ \cite{Hoffman2014}} & \Centerstack[c]{(1396.59, 2338.99, \\ 2411.40)}  & $15,899,976$ & 0.000128\\
\hline
\Centerstack[c]{$M_t = 40; T = 100$ \\ [LHNN-NUTS 1]} & \Centerstack[c]{(1339.13, 1665.76, \\ 1648.85)}  & \Centerstack[c]{$2,711,240$ \\ Training: 160,000 \\ Evaluation: 2,551,240} & 0.000572\\
\hline
\Centerstack[c]{$M_t = 40; T = 150$ \\ [LHNN-NUTS 2]} & \Centerstack[c]{(1217.63, 1710.97, \\ 1654.11)}  & \Centerstack[c]{$607,298$ \\ Training: 240,000 \\ Evaluation: 367,298} & 0.00251\\
\hline
\Centerstack[c]{$M_t = 40; T = 250$ \\ [LHNN-NUTS 3]} &  \Centerstack[c]{(1445.03, 1729.98, \\ 1387.91)} & \Centerstack[c]{$642,414$ \\ Training: 400,000 \\ Evaluation: 242,414} & 0.00236\\
\hline
\end{tabular}
\begin{tablenotes}
\small
\item[-] {- \textbf{Notations.} $M_t$: Number of training samples; $T$: End time for trajectory}
\item[-] {- Avg. ESS represents the ESS values averaged across the three dimensions.}
\item[-] {- Evaluation in the table implies the use of L-HNNs in NUTS with online error monitoring}
\end{tablenotes}
\end{table}
\normalsize

For a small value of $M_t$, as expected, L-HNNs in NUTS would require more numerical gradients of the target density during evaluation because the training data does not adequately capture the randomization of the momenta. But for a large value of $M_t$, although the additional numerical gradients of the target density may become small, the upfront cost for generating the training data may be high. We found from the 3-D Rosenbrock example that fixing $M_t=40$ provides a balance between the upfront costs for generating the training data and additional numerical gradients required during the evaluation. Therefore, we use $M_t=40$ for the case study examples to be discussed in Section \ref{sec:results}.

\section{Results}\label{sec:results}

We considered five case studies to demonstrate the LHNN-NUTS with online error monitoring and compared its performance to the traditional NUTS. Table \ref{Table_Summary} summarizes the main performance metrics for the five case studies. In the following sections, the case studies are defined along with the training details for L-HNNs, results from Table \ref{Table_Summary} are discussed, and plots of the samples from the target densities are presented. 

\begin{table}[h!]
\centering
\caption{Summary of the performance of LHNN-NUTS with online error monitoring across the different case studies considered.}
\label{Table_Summary}
\small
\begin{tabular}{ |c|c|c|c| }
\hline
 & \textbf{ESS}  & \Centerstack[c]{\textbf{\# gradients} \\ \textbf{of target} \\ \textbf{density}} & \textbf{\Centerstack[c]{Avg. ESS per  \\ gradient of \\ target density}}\\
\hline
\multicolumn{4}{|c|}{\textbf{3-D degenerate Rosenbrock}}\\
\hline
LHNN-NUTS & \Centerstack[c]{$(1445.03, 1729.98,$ \\ $1387.91$)} & \Centerstack[c]{$642,414$ \\ Training: $400,000$\\ Evaluation: $242,414$} & $0.00236$\\
\hline
NUTS & \Centerstack[c]{$(1396.59, 2338.99,$ \\ $2411.40$)} & $15,899,976 $ & $0.000128$\\
\hline
\multicolumn{4}{|c|}{\textbf{5-D ill-conditioned Gaussian}}\\
\hline
LHNN-NUTS & \Centerstack[c]{$(20000, 17741.21,$ \\ $13906.42,  8382.5,$ \\ $2763.38)$} & \Centerstack[c]{$408,800$ \\ Training: $400,000$\\ Evaluation: $8,800$} & $0.0307$\\
\hline
NUTS & \Centerstack[c]{$(20000, 17877.90,$ \\ $14645.54,  7931.68,$ \\ $2657.19)$} & $11,817,469$ & $0.00106$\\
\hline
\multicolumn{4}{|c|}{\textbf{10-D degenerate Rosenbrock}}\\
\hline
LHNN-NUTS & \Centerstack[c]{$(15535.69, 28217.40,$ \\ $28965.00,  27280.67,$ \\ $ 21268.00, 13532.28,$ \\ $8418.80, 5731.09,$ \\ $3299.92, 2045.02)$} &  \Centerstack[c]{$418,936$ \\ Training: $400,000$\\ Evaluation: $18,936$} & $0.0368$\\
\hline
NUTS & \Centerstack[c]{$(15771.46, 27125.5,$ \\ $27825.9,  25107.02,$ \\ $ 21963.9, 13553.21,$ \\ $8548.44, 5631.39,$ \\ $2921.78, 1698.69)$} &  $7,015,025$ & $0.00219$\\
\hline
\multicolumn{4}{|c|}{\textbf{24-D Bayesian logistic regression}}\\
\hline
LHNN-NUTS & $97254.18$ &  \Centerstack[c]{$400,000$ \\ Training: $400,000$\\ Evaluation: --} & $0.243$ \\
\hline
NUTS & $97702.96$ & $1,256,229$ & $0.0777$\\
\hline
\multicolumn{4}{|c|}{\textbf{100-D rough well}}\\
\hline
HNN-NUTS & $2744.97$ & \Centerstack[c]{ $422,272$\\ Training: $400,000$\\ Evaluation: $22,272$} & $0.0065$\\
\hline
NUTS &  $4775.33$ & $1,279,873$ & $0.00373$\\
\hline
\multicolumn{4}{|c|}{\textbf{2-D Neal's funnel}}\\
\hline
LHNN-NUTS & $(1019.87,666.73)$ &  \Centerstack[c]{$3,596,688$ \\ Training: $400,000$\\ Evaluation: $3,196,688$} & $0.000234$\\
\hline
NUTS & $(1012.45,718.07)$ &  $16,639,072$ & $0.000052$\\
\hline
\end{tabular}
\begin{tablenotes}
\small
\item[-] - For dimension greater than 10, only the average ESS across all dimensions is reported.
\item[-] {- Evaluation in the table implies the use of L-HNNs in NUTS with online error monitoring.}
\end{tablenotes}
\end{table}
\normalsize

\subsection{Case studies}

We considered the following five case studies. Corresponding results are presented herein. For all cases, except where otherwise noted, the L-HNNs were trained using $M_t=40$ HMC samples each, with an end time of $T=250$ units and a step size of $\Delta t=0.025$. The architecture of the L-HNNs was composed of three hidden layers, each with 100 neurons, and a sine activation function \cite{Mattheakis2022a}. For training, a total of 100,000 steps were used at a learning rate of $5E-4$ and $\Delta_{max}^{hnn}$ and $N_{lf}$ in LHNN-NUTS with online error monitoring were set to 10 and 20, respectively. 

\subsubsection{5-D ill-conditioned Gaussian}\label{sec:5dGauss_example}

A 5-D ill-conditioned Gaussian distribution from Levy et al. \cite{Levy2017a} with a zero mean vector and a diagonal covariance matrix was considered. The diagonal elements of the covariance matrix scale in log-linear fashion, as presented below: 
\begin{equation}
    \label{eqn:5D_Gauss1}
    \textrm{diag}(\pmb{\Sigma}) = [0.01,0.1,1.0,10.0,100.0]
\end{equation}

In this example, the L-HNNs had 10 inputs (five position variables and five momenta) and predicted five latent variables, whose sum was treated as the Hamiltonian, as discussed in Section \ref{sec:lhnns}. A total of $25,000$ samples were simulated from the target density, with the first $5,000$ samples treated as burn-in. 

From Table \ref{Table_Summary}, we see that using LHNN-NUTS reduces the number of gradient evaluations by a factor of nearly 30, while improving the ESS per gradient evaluation by more than an order of magnitude. Moreover, as we can see from Figure \ref{5D_IllGauss}, which shows scatter plots of the samples from the 5-D ill-conditioned Gaussian using LHNN-NUTS and traditional NUTS, the LHNN-NUTS with online error monitoring satisfactorily samples from the target density. This is further verified by the ESS values in Table \ref{Table_Summary}, which are consistent between the two methods.
\begin{figure}[htbp]
\centering
\includegraphics[width=0.9\textwidth]{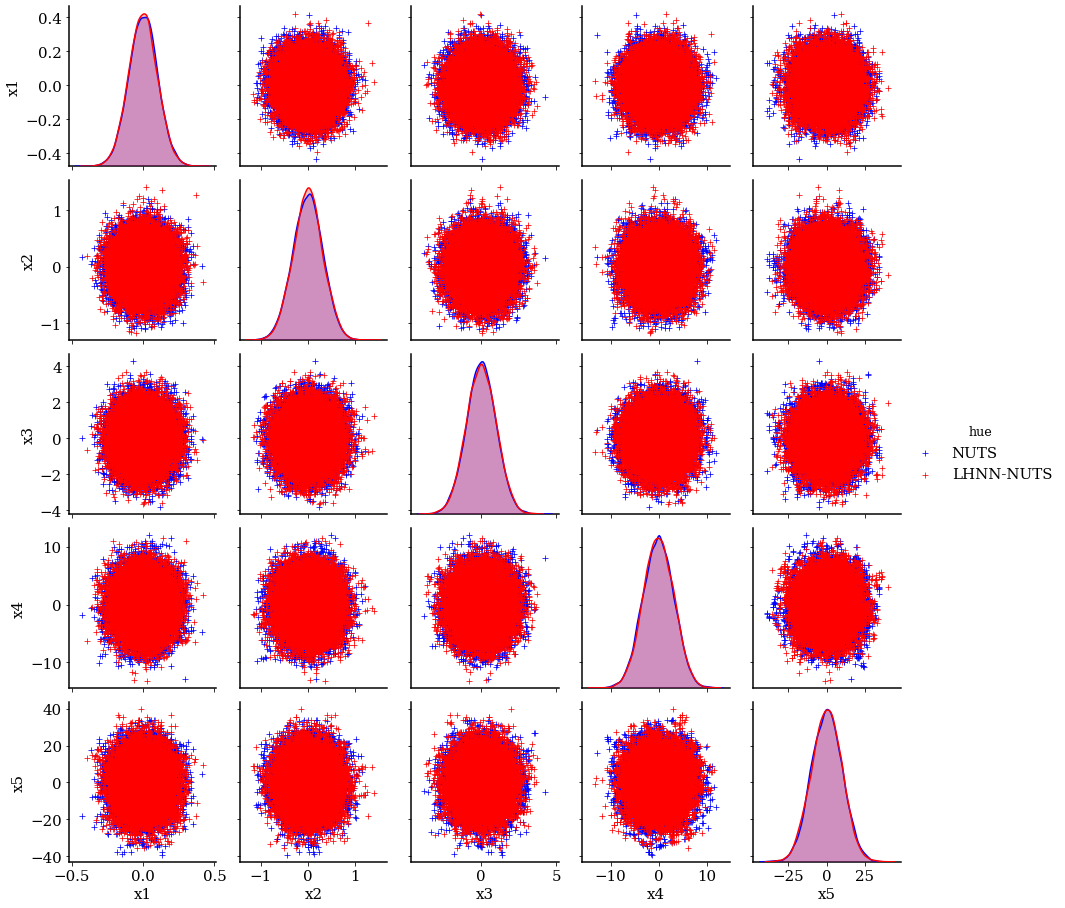} 
\caption{Scatter plot comparison between NUTS and LHNN-NUTS with online error monitoring for a 5-D ill-conditioned Gaussian distribution. (Blue dots: traditional NUTS; Red dots: LHNN-NUTS with online error monitoring)}
\label{5D_IllGauss}
\end{figure}

\subsubsection{10-D degenerate Rosenbrock density}

Here, A 10-D version of the Rosenbrock density in Equation \eqref{eqn:Rosen_Dens} was considered. The L-HNNs had 20 inputs (10 position variables and 10 momenta) and predicted 10 latent variables, whose sum was treated as the Hamiltonian. Approximately $125,000$ samples were simulated from the target density, with the first $5,000$ samples treated as burn-in. 

From Table \ref{Table_Summary}, we again see drastic reduction in the number of gradient evaluations (approximately a factor of 17) and more than an order of magnitude reduction in ESS per gradient evaluation. Figure \ref{10D_Rosen} presents scatter plots of the samples generated with LHNN-NUTS and the traditional NUTS. Although for dimensions $q_9$ and $q_{10}$, traditional NUTS included a few more samples in the tails than LHNN-NUTS, the scatter plots overall match satisfactorily. Moreover, the eCDF plots for dimensions $q_9$ and $q_{10}$ from LHNN-NUTS closely matched the reference eCDFs.
\begin{figure}[htbp]
\centering
\includegraphics[width=0.9\textwidth]{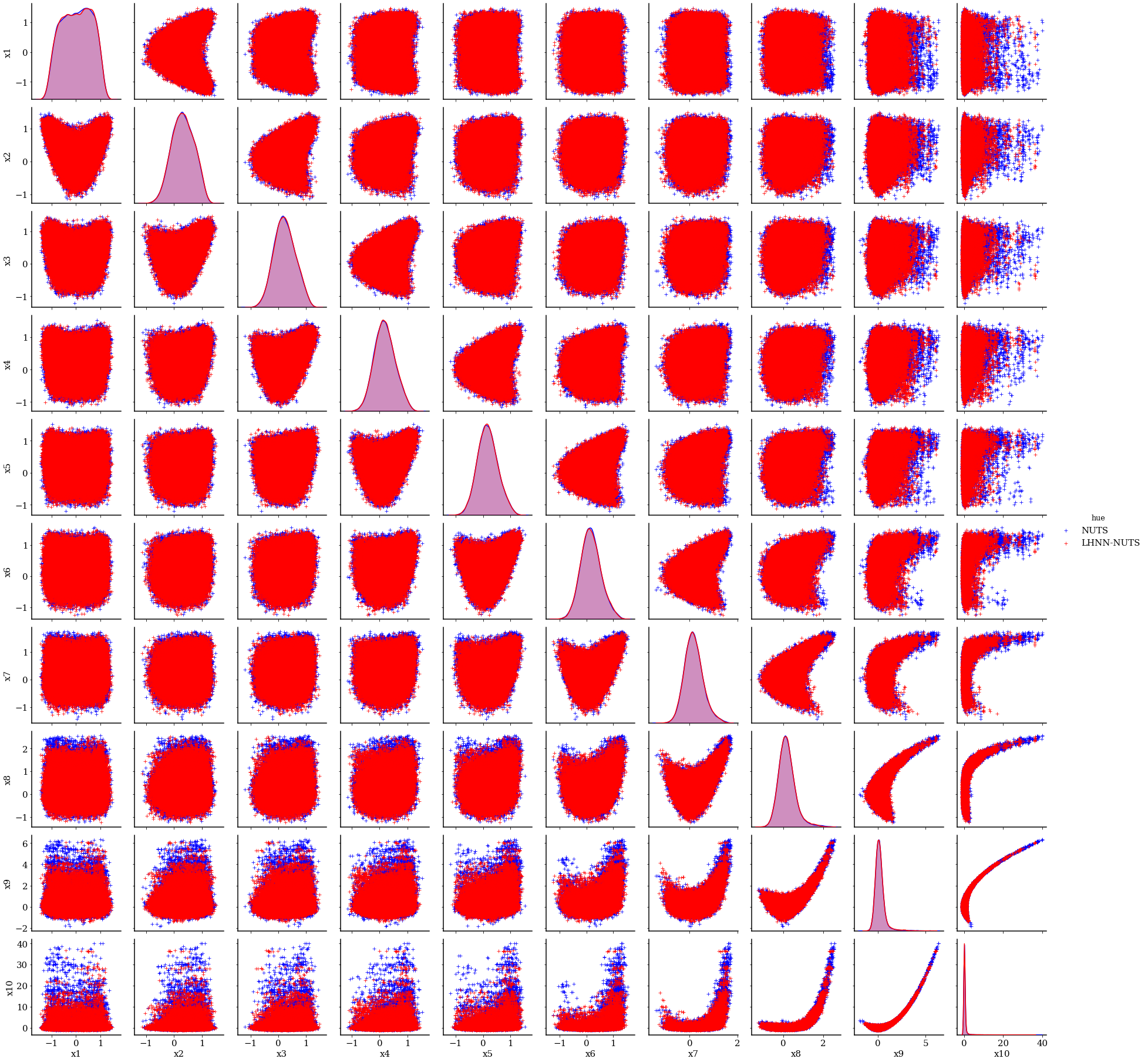} 
\caption{Scatter plot comparison between NUTS and LHNN-NUTS with online error monitoring, considering a 10-D degenerate Rosenbrock distribution. (Blue dots: traditional NUTS; Red dots: LHNN-NUTS with online error monitoring)}
\label{10D_Rosen}
\end{figure}



\subsubsection{24-D Bayesian logistic regression}

Next, a 24-D Bayesian logistic regression on the German credit dataset \cite{Dua2019a} was performed. The likelihood function is given by:
\begin{equation}
    \label{eqn:BLR_1}
    \mathcal{L}(\pmb{q}|y_i,~\pmb{z}_i) \propto 1/\big(1+\exp{(-y_i \pmb{q}^{\top} \pmb{z}_i)}\big)
\end{equation}
where $\pmb{z}_i$ are the predictors, $y_i$ is the response for the data point $i$ and $\pmb{q}$ are the parameters to be inferred. All the predictor variables were normalized to have zero mean and unit variance. The prior distribution of the parameters was assumed to follow a multivariate Gaussian distribution with identity covariance:
\begin{equation}
    \label{eqn:BLR_2}
    g(\pmb{q}) = \mathcal{N}(\pmb{0}, \pmb{I})
\end{equation}
Given the likelihood and prior, the negative log posterior (i.e., the potential energy function) is given by:
\begin{equation}
    \label{eqn:BLR_3}
    U(\pmb{q}) = -\ln f(\pmb{q}|\{\pmb{z}_i,~y_i\}_{i=1}^K) = \sum_{i=1}^K \ln \Big(1/\big(1+\exp{(-y_i \pmb{q}^{\top} \pmb{z}_i)}\big)\Big) + ||\pmb{q}||^2/(2K)
\end{equation}
where $K$ is the number of data points.  

The L-HNNs had 48 inputs (24 positions and 24 momenta) and predicted 24 latent variables, whose sum was treated as the Hamiltonian. A total of $125,000$ samples were simulated from the target density, with the first $5,000$ samples treated as burn-in.

In this case, Table \ref{Table_Summary} shows that no additional gradient evaluations were necessary due to the error monitoring step; hence all gradient evaluations occurred during training. Again, a huge savings was achieved in the total number of gradient evaluations and the ESS per gradient evaluation although the traditional NUTS required far fewer evaluations than in the previous examples. 

Owing to the high dimensionality, the posterior mean and standard deviations of the 24 parameters are presented in Figures \ref{24D_logistic_1} and \ref{24D_logistic_2}, respectively. The statistics predicted by LHNN-NUTS satisfactorily match the reference values obtained using traditional NUTS.
\begin{figure}[htbp]
\begin{subfigure}{0.48\textwidth}
\centering  
\includegraphics[width=\textwidth]{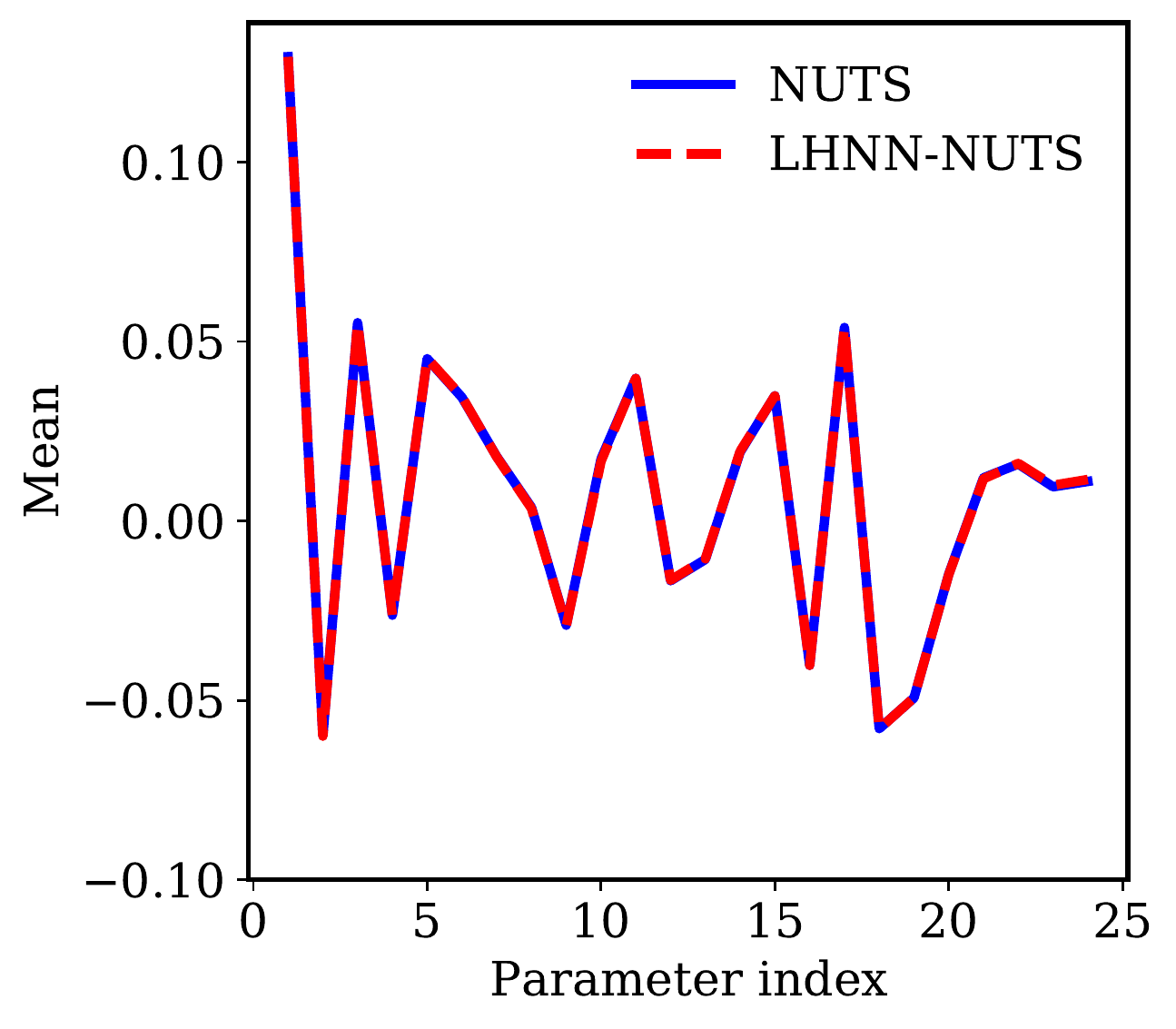} 
\caption{}
\label{24D_logistic_1}
\end{subfigure}
\begin{subfigure}{0.48\textwidth}
\centering  
\includegraphics[width=\textwidth]{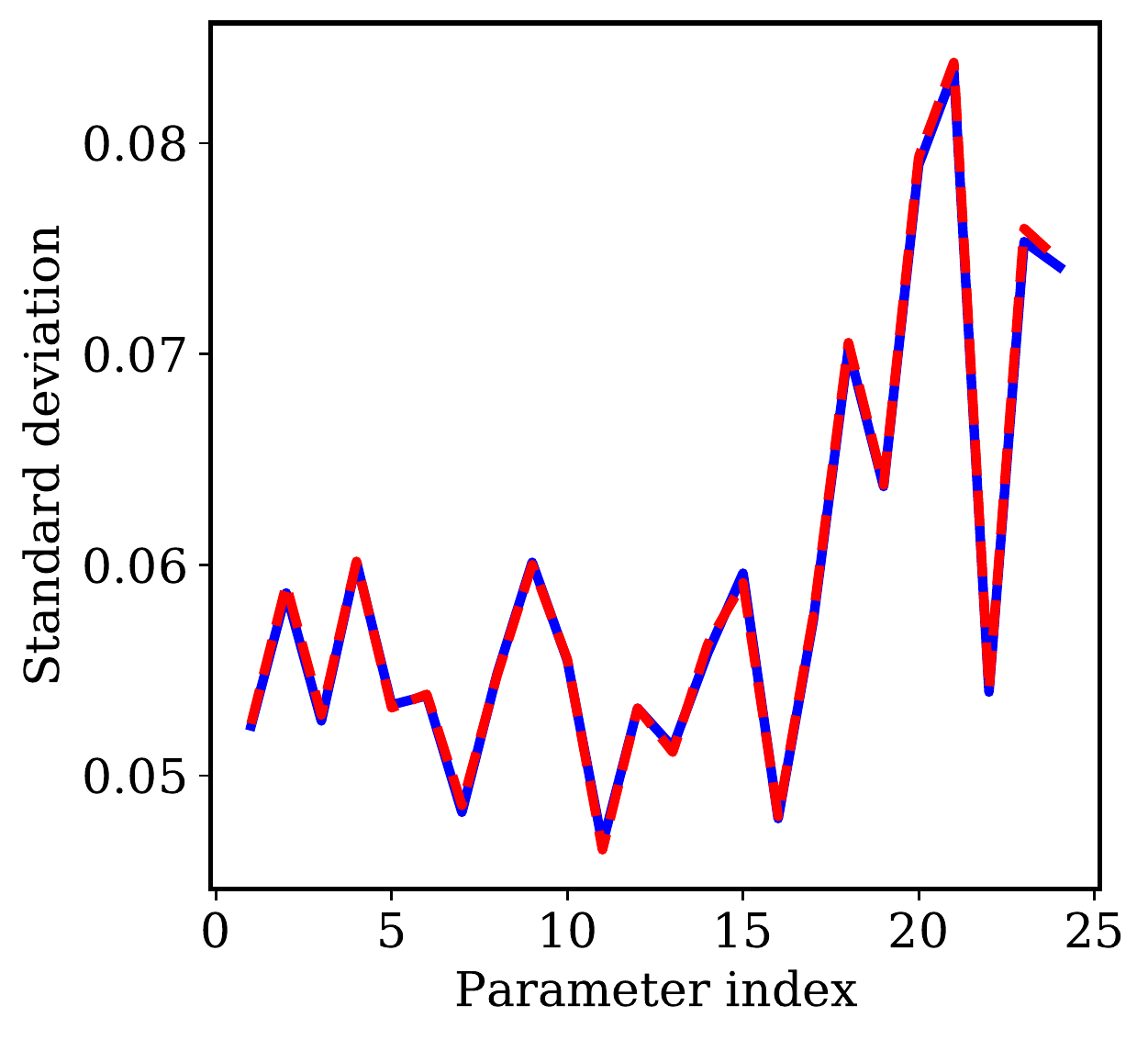} 
\caption{}
\label{24D_logistic_2}
\end{subfigure}
\caption{(a) Means and (b) standard deviations of the inferred parameters for the 24-D Bayesian logistic regression problem.}
\label{24D_logistic}
\end{figure}

\subsubsection{100-D rough well}\label{sec:100D_RW}

A 100-D rough well distribution from Levy et al. \cite{Levy2017a} and Gu and Sun \cite{Gu2020a} was considered:
\begin{equation}
    \label{eqn:100D_RW}
    f(\pmb{q}) \propto \exp \Big\{-\frac{1}{2}~\pmb{q}^\top \pmb{q}\ - \eta \sum_i \cos\big(\frac{q_i}{\eta}\big)\Big\}
\end{equation}
with $\eta = 0.01$. Levy et al. \cite{Levy2017a} point out that for a small $\eta$, although the cosine term alters the potential energy negligibly, it perturbs the gradient by a high frequency noise oscillation between -1 and 1. 

The architecture of the L-HNNs was composed of three hidden layers, each with 500 neurons. For training, a total of {100,000} steps at a learning rate of $1E-4$ were used. $\Delta_{max}^{hnn}$ and $N_{lf}$ in LHNN-NUTS with online error monitoring were set to 20 and 5, respectively. All other architecture details follow those above. The L-HNNs had 200 inputs (100 positions and 100 momenta) and predicted 100 latent variables, whose sum was treated as the Hamiltonian. A total of $10,000$ samples were simulated from the target density, with the first $2,000$ samples treated as burn-in.

For this very high-dimensional example, Table \ref{Table_Summary} shows that we still achieve a very large reduction in the number of gradient evaluations, although the ESS is smaller which reduces the improvement in ESS per gradient evaluation to only a factor of $\sim 2$. Nonetheless, the method outperforms the traditional NUTS and produces satisfactory samples as shown in Figure \ref{100D_RoughWell}, which presents scatter plots for 10 of the 100 dimensions. Whereas L-HNNs in NUTS with online error monitoring takes only several hours to simulate these $10,000$ samples, traditional NUTS takes several days.
\begin{figure}[htbp]
\centering
\includegraphics[width=0.9\textwidth]{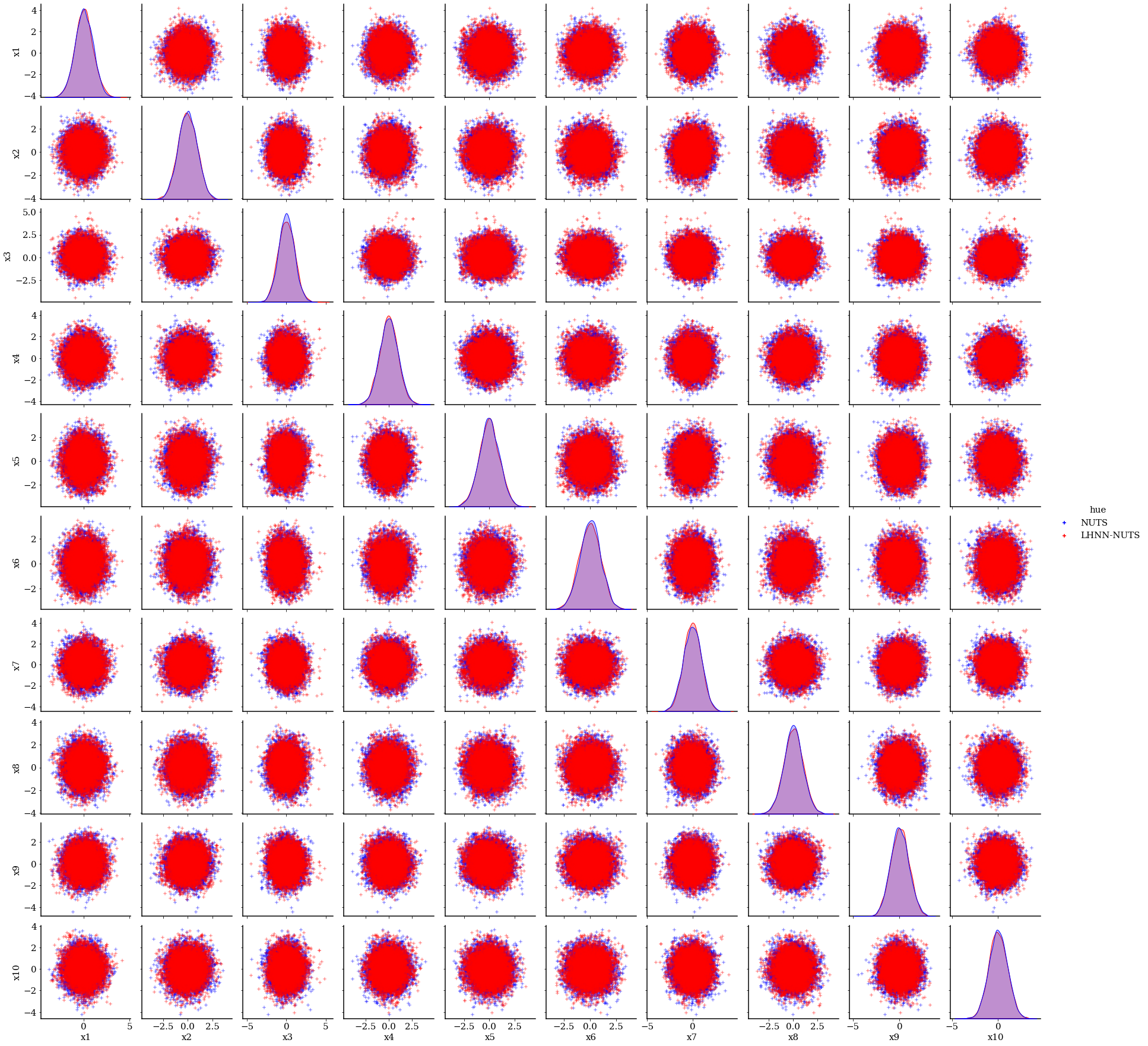} 
\caption{Scatter plot comparison between NUTS and LHNN-NUTS with online error monitoring for the 100-D rough well distribution showing 10 of the 100 dimensions.  (Blue dots: traditional NUTS; Red dots: LHNN-NUTS with online error monitoring)}
\label{100D_RoughWell}
\end{figure}

\subsubsection{2-D Neal's funnel density}

As a final example, the following 2-D Neal's funnel density was considered:
\begin{equation}
    \label{eqn:NF_1}
    f(\pmb{q}) \propto \begin{cases}\begin{aligned}
    &q_1 = \mathcal{N}(0,~3)\\
    &q_2 = \mathcal{N}(0,~\exp^{q_1})\\
    \end{aligned}\end{cases}
\end{equation}
This density can be complex to sample from, due its extreme curvature. The L-HNNs had four inputs (two positions and two momenta) and predicted two latent variables, whose sum was treated as the Hamiltonian. A total of $25,000$ samples were simulated from the target density, with the first $5,000$ samples treated as burn-in. 

In this case, the online error monitoring scheme was used extensively as seen in Table \ref{Table_Summary} due to the difficulty in training L-HNNs for this complex problem. Nonetheless, we see a reduction of more than 13 million gradient evaluations (traditional NUTS requires a huge number of gradient evaluations) and nearly an order of magitude improvement in the ESS per gradient evaluation. Figure \ref{2D_Neal_1} presents a scatter plot of the samples generated using LHNN-NUTS and traditional NUTS. Figures \ref{2D_Neal_2} and \ref{2D_Neal_3} compare the eCDF plots for the two dimensions. Note that the plots generated using LHNN-NUTS compare well with those obtained using traditional NUTS.

\begin{figure}[htbp]
\begin{subfigure}{0.32\textwidth}
\centering  
\includegraphics[width=\textwidth]{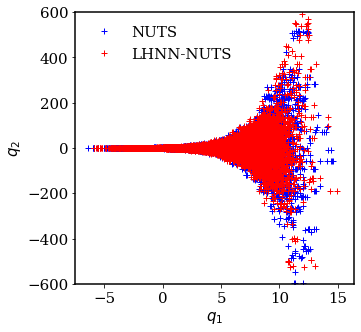} 
\caption{}
\label{2D_Neal_1}
\end{subfigure}
\begin{subfigure}{0.32\textwidth}
\centering  
\includegraphics[width=\textwidth]{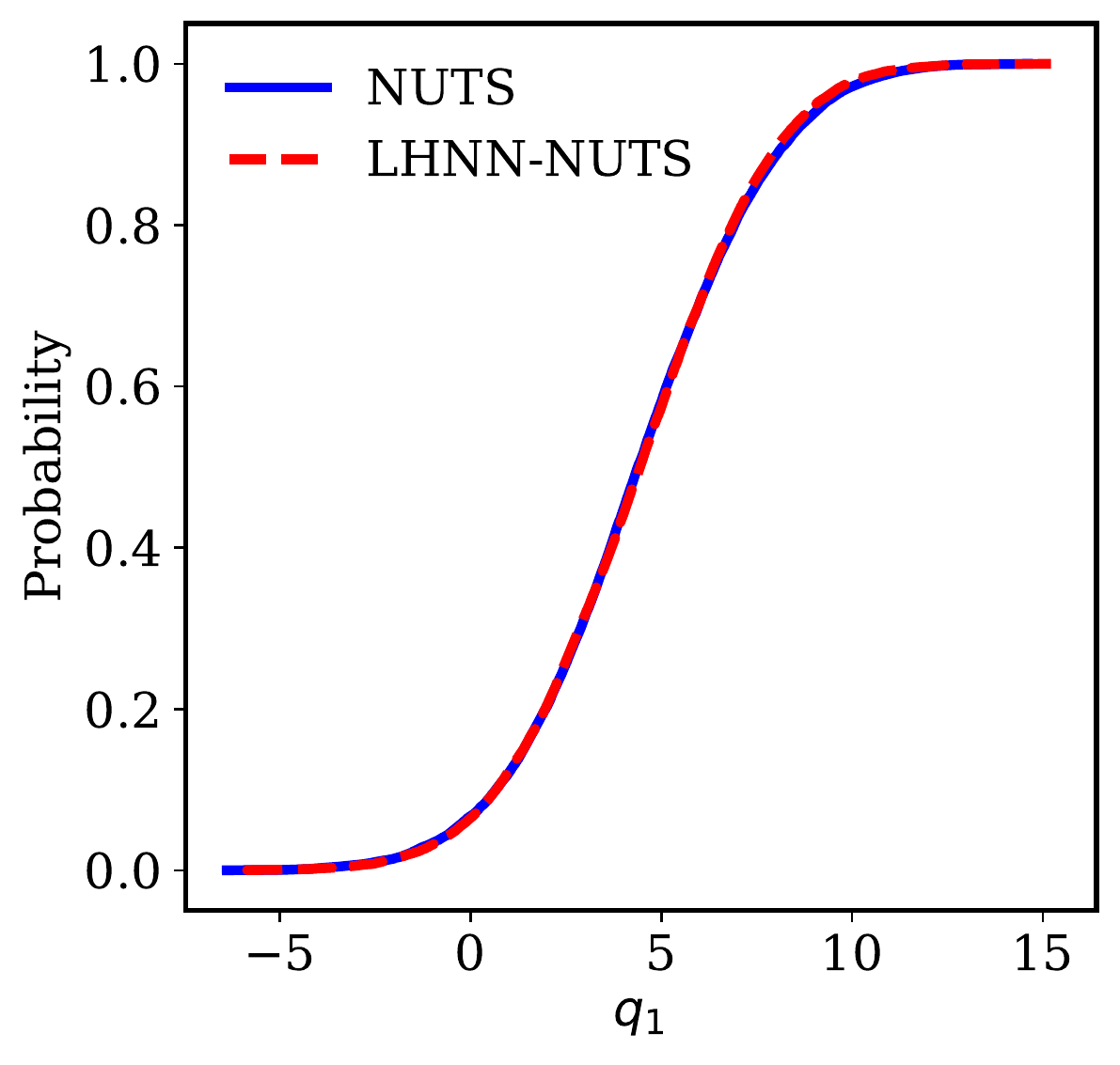} 
\caption{}
\label{2D_Neal_2}
\end{subfigure}
\begin{subfigure}{0.32\textwidth}
\centering  
\includegraphics[width=\textwidth]{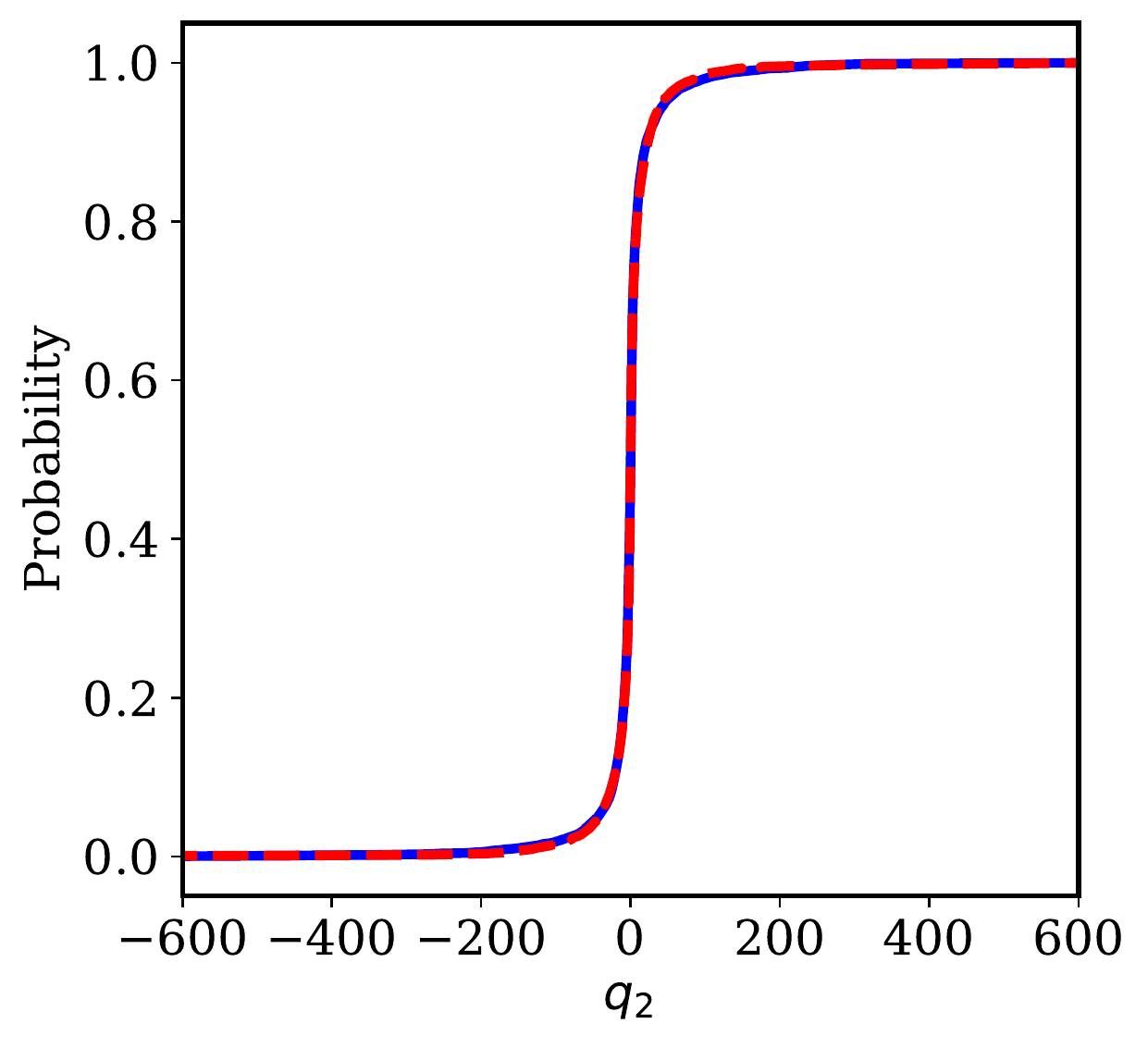} 
\caption{}
\label{2D_Neal_3}
\end{subfigure}
\caption{(a) Scatter plot comparison of samples generated from a 2-D Neal's funnel density, using NUTS and L-HNNs in NUTS with online error monitoring. Comparison of the eCDF plots for the dimensions (b) $q_1$ and (c) $q_2$.}
\label{2D_Neal}
\end{figure}

\subsection{Summary of the performance metrics}\label{sec:results_summary}

Overall, as shown in Table \ref{Table_Summary}, LHNN-NUTS requires significantly fewer gradients of the target density and produces ESSs per gradient that are about an order of magnitude better than those generated using traditional NUTS. This, of course, varies over the examples considered and is strongly dependent on a few factors: 1. the number of additional gradients needed from the online monitoring scheme; 2. the general efficiency of the NUTS algorithm for the specific problem; and 3. the size of the training set (which was held constant across all problems). For most of the cases, the additional gradients of the target density required during the online error monitoring scheme are small compared to the gradient evaluations needed to generate the training data where the number of additional gradients during online error monitoring seems to depend on the complexity of the target density. In particular, the 3-D Rosenbrock density, which has heavy tails, required a large number of additional gradients and the 2-D Neal's funnel density, which has a high local curvature, required the highest number of additional gradients. Note that, in both of these cases, the traditional NUTS algorithm required a huge number of gradient evaluations ($\sim 16$ Million). On the other hand, the 24D logistic regression problem required no additional gradient evaluations during the online error monitoring, while traditional NUTS is relatively efficient compared to the other problems (requiring only 1.25 Million evaluations). We therefore generalize and loosely state that, for problems where the NUTS algorithm requires a large number of gradient evaluations, we expect that the LHNN-NUTS will require a large number of additional evaluations due to error monitoring with some exceptions (e.g. 5D ill-conditioned Gaussian). Nonetheless, in all cases the LHNN-NUTS with online error monitoring offers considerable computational gains while producing samples that satisfactorily match the reference target density. 


\section{Discussion of future work}

In HMC and NUTS, there are opportunities to explore more recent neural network architectures that learn Hamiltonian systems, with the potential for improved performance over HNNs and L-HNNs. This line of investigation is motivated by a 100-D Gaussian with Wishart inverse covariance from Hoffman and Gelman \cite{Hoffman2014}. For this problem, the ESS per gradient values were similar between LHNN-NUTS and traditional NUTS. The poorer performance of L-HNNs in this case is a result of many strong correlations in the target distribution and large integration errors while simulating the Hamiltonian trajectories. Despite the large integration errors for L-HNNs, the online error monitoring scheme ensures that the correct distribution is inferred and the correlations between the variables are adequately captured (as seen from Figure \ref{100D_Gauss}) at the expense of calling the leapfrog scheme with gradients of the target density many times during the sampling. However, similar to the 100-D rough well example from Section \ref{sec:100D_RW}, while L-HNNs in NUTS with online error monitoring takes in the order of hours to simulate $10,000$ samples, traditional NUTS takes in the order of days due to the complexity of this problem (provided that the Hamiltonian gradients are computed numerically and not using a closed-form solution).
\begin{figure}[htbp]
\centering
\includegraphics[scale=0.15]{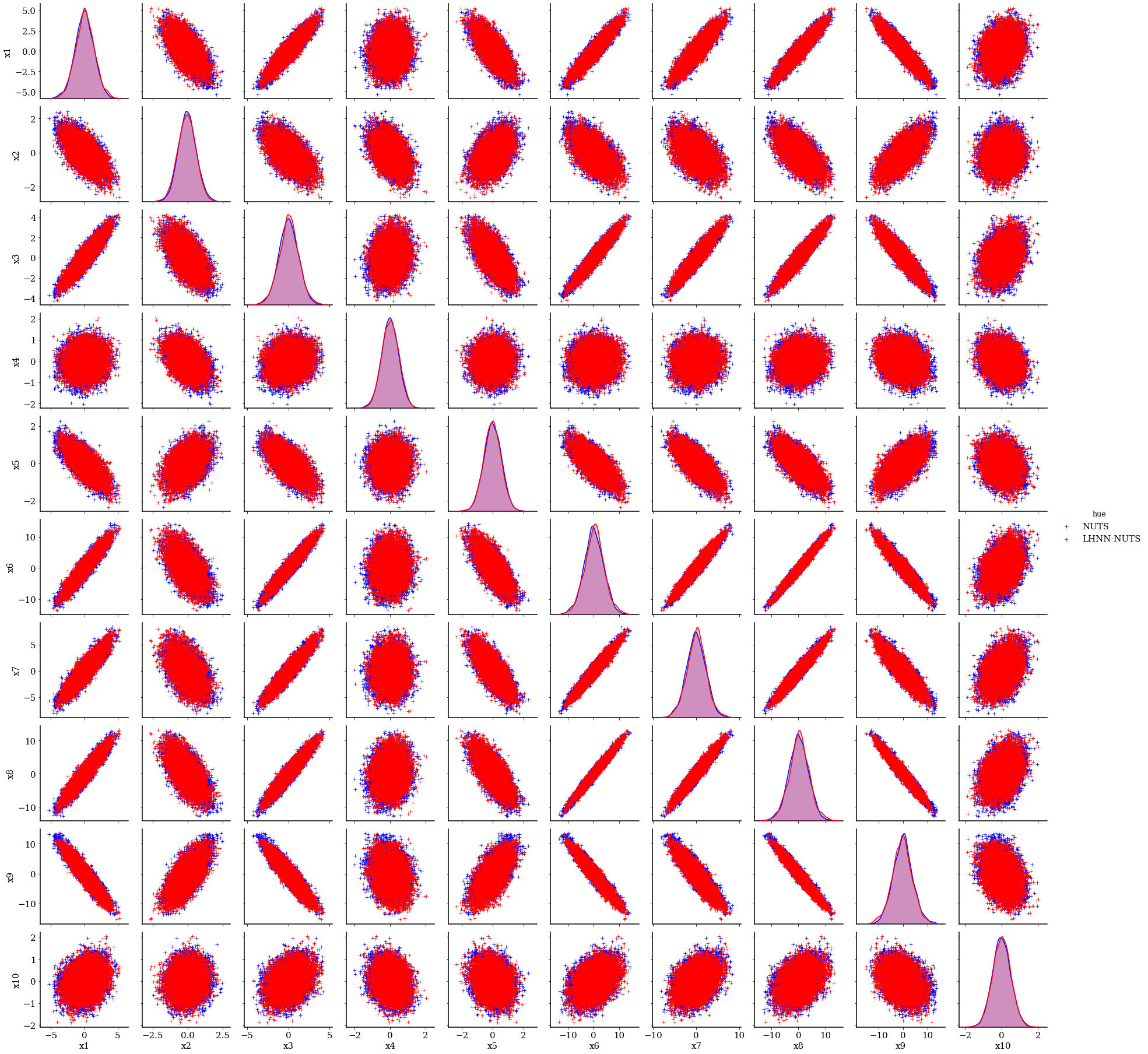} 
\caption{Scatter plot comparison between NUTS and LHNN-NUTS with online error monitoring, considering a 100-D Gaussian distribution with Wishart inverse covariance. As an illustration, 10 of the 100 dimensions are shown. Whereas L-HNNs in NUTS with online error monitoring takes in the order of hours to simulate $10,000$ samples, traditional NUTS takes in the order of days; provided that the Hamiltonian gradients are computed numerically and not using a closed-form solution. (Blue dots: traditional NUTS; Red dots: LHNN-NUTS with online error monitoring)}
\label{100D_Gauss}
\end{figure}

Mattheakis et al. \cite{Mattheakis2022a} propose neural networks that use time as inputs and predict the $\{\pmb{q},\pmb{p}\}$ coordinates. Their loss function minimizes the mean squared error between the time derivatives of the actual $\{\pmb{q},\pmb{p}\}$ coordinates and those predicted by the neural networks using the right-hand side of Equation \eqref{eqn:HD_2}. This form for the loss function seems to result in more rapid and substantial loss reduction. Jin et al. \cite{Jin2020a} propose symplectic networks, which lead to better conservation of the Hamiltonian than HNNs. Future work will investigate whether architectures such as these would lead to an improvement in ESS per gradient, while satisfactorily sampling from the target density.

While this paper uses an Eucledian formulation for generating momenta that are independent of the current position [Equation \eqref{eqn:HMC_4}], training L-HNNs under a Riemannian formulation for the momenta \cite{Girolami2011a} is also possible. For some densities with high local curvatures (e.g., the Neal's funnel or the multivariate Gaussian with Wishart inverse covariance), L-HNNs trained in a Riemannian space may generate more computational gains in terms of ESS per gradient. Finally, although the error monitoring scheme introduced in this paper calls the standard leapfrog with gradients of the target density when the L-HNN integration errors are large, it does not retrain the L-HNNs. Such retraining, when performed on-the-fly within the error-monitoring scheme, is similar in concept to active or online learning. Owing to the computational expense involved in training neural networks for high-dimensional problems, quantifying the optimal schedule for retraining that is agnostic to the complexity and properties of the target density will be an interesting avenue of exploration. 

\section{Summary and conclusions}

Neural networks that learn Hamiltonian systems (e.g., HNNs) have received a great deal of interest. We proposed using HNNs to solve Bayesian inference problems through the HMC framework. HNNs have important properties (e.g., time reversibility and Hamiltonian conservation) that make them amenable for usage in HMC sampling. Moreover, once trained, HNNs do not require numerical gradients of the target density\textemdash which can be computationally expensive\textemdash during sampling. To increase the expressivity of HNNs, we proposed HNNs with latent outputs (L-HNNs) and demonstrated that L-HNNs enable integration error reductions in predicting the Hamiltonian dynamics when compared to HNNs for a 3-D Rosenbrock density function. Although HMC is a state-of-the-art sampler for Bayesian inference, it requires the specification of an end time of the trajectory for each sample, and this can considerably influence the inference quality. NUTS automatically decides this end time, while also ensuring stationarity and that the next proposed sample is farther away from the previous one (i.e., lesser correlations between samples). We further proposed to use L-HNNs in NUTS with an online error monitoring scheme that reverts to the standard leapfrog integration for a few samples whenever the L-HNNs integration errors are high. L-HNNs integration errors can be high in regions of the uncertainty space that feature fewer training data (e.g., the tails). This proposed online error monitoring scheme ensures that L-HNNs rapidly move to regions of high probability density after visiting a region of low probability density, and it prevents degeneracy of the sampling scheme.

L-HNNs in NUTS with online error monitoring was demonstrated for several example cases involving complex, heavy-tailed, and high-local-curvature probability densities with dimensions ranging from 5 to 100. The performance of L-HNNs in NUTS was compared to that of traditional NUTS, which relies on numerical gradients of the target density. L-HNNs in NUTS satisfactorily inferred the distributions of these example cases, as evidenced by scatter plots and eCDFs, when compared to traditional NUTS. More interestingly, L-HNNs in NUTS required 1--2 orders of magnitude fewer evaluations of the target density gradients, and produced an ESS per gradient that was an order of magnitude better than that generated via traditional NUTS.

Several future opportunities exist for improving the performance of neural networks that identify Hamiltonian systems for Bayesian inference problems: (1) newer architectures (e.g., symplectic networks) better conserve the Hamiltonian than L-HNNs, and result in an efficient inference, as quantified by the ESS per gradient; (2) training and evaluating L-HNNs in a Riemannian space rather than an Eucledian space may lead to significant computational efficiencies for densities with high local curvatures; and (3) active learning for training L-HNNs; in particular, optimal schedules for retraining L-HNNs in light of the high neural network training costs for high-dimensional problems could improve efficiency.

\section*{Software}
The codes used for simulating the results in this paper are available at \href{https://github.com/IdahoLabResearch/BIhNNs}{https://github.com/IdahoLabResearch/BIhNNs}.

\section*{Acknowledgments}
We thank Denny Thaler from RWTH Aachen University for his comments on our code. This research is supported through INL's Laboratory Directed Research and Development (LDRD) Program under U.S. Department of Energy (DOE) Idaho Operations Office contract no. DE-AC07-05ID14517. This research made use of the resources of the High Performance Computing Center at INL, which is supported by the DOE Office of Nuclear Energy and the Nuclear Science User Facilities under contract no. DE-AC07-05ID14517.

\bibliographystyle{siamplain}
\bibliography{references}


\section*{\Large SUPPLEMENTARY MATERIAL}

\normalsize

\section*{Synchronized leap frog integration}

Algorithm \ref{alg:Leap_Frog} presents the synchronized leap frog integration scheme.

\begin{algorithm}
  \caption{Synchronized Leap Frog integration for simulating Hamiltonian dynamics}
  \label{alg:Leap_Frog}
    \begin{algorithmic}[1]
    \STATE{Hamiltonian: $H$, Initial conditions: $\pmb{z}(0)=\{\pmb{q}(0),~\pmb{p}(0)\}$, Dimensions: $d$, Steps: $N$, End time: $T$}
    \STATE{Numerical gradients of $H$ with respect to $\pmb{q}(t)$ using packages like \texttt{autograd} \cite{autograd2015}}
    \STATE{$\Delta t = \frac{T}{N}$}
    \FOR{$j = 0:N-1$}
        \STATE{$t = j~\Delta t$}
        \STATE{Compute $\frac{\partial H}{\partial \pmb{q}(t)}$}
        \FOR{$i = 1:d$}
            \STATE{$q_i(t+\Delta t) = q_i(t) + \frac{\Delta t}{m_i}~p_i(t) - \frac{\Delta t^2}{2m_i}~\frac{\partial H}{\partial q_i(t)}$}
        \ENDFOR
        \STATE{Compute $\frac{\partial H}{\partial \pmb{q}(t+\Delta t)}$}
        \FOR{$i = 1:d$}
            \STATE{$p_i(t+\Delta t) = p_i(t) - \frac{\Delta t}{2}~\bigg(\frac{\partial H}{\partial q_i(t)} + \frac{\partial H}{\partial q_i(t+\Delta t)}\bigg)$}
        \ENDFOR
    \ENDFOR
     \end{algorithmic}
\end{algorithm}
\normalsize

\section*{Hamiltonian Monte Carlo}

Algorithm \ref{alg:HMC} presents the HMC algorithm.

\begin{algorithm}
\caption{Hamiltonian Monte Carlo}
\label{alg:HMC}
\begin{algorithmic}[1]
\STATE{Hamiltonian: $H = U(\pmb{q}) + K(\pmb{p})$, Samples: M, Starting sample: $\{\pmb{q}^0,~\pmb{p}^0\}$, End time for trajectory: $T$}
\FOR{$i = 1:M$}
\STATE{$\pmb{p}(0) \sim \mathcal{MVN}(\pmb{0},~\pmb{M})$}
\STATE{$\pmb{q}(0) = \pmb{q}^{i-1}$}
\STATE{Compute $\{\pmb{q}^*,~\pmb{p}^*\} = \{\pmb{q}(T),~\pmb{p}(T)\}$ with Algorithm \ref{alg:Leap_Frog}}
\STATE{$\alpha = \exp \big(H(\{\pmb{q}^{i-1},\pmb{p}^{i-1}\})-H(\{\pmb{q}^*,\pmb{p}^*\})\big)$}
\STATE{With probability $\textrm{min}\{1,~\alpha\}$, set $\{\pmb{q}^{i},\pmb{p}^{i}\} \gets \{\pmb{q}^*,\pmb{p}^*\}$}
\ENDFOR
\end{algorithmic}
\end{algorithm}
\normalsize

\section*{Effective sample size}

The effective sample size (ESS) is a metric for measuring the quality of MCMC samples \cite{Vehtari2021a}. Since MCMC samples are correlated, the effective number of samples will typically be less than the total samples drawn. ESS is defined in terms of the autocorrelations within the chain as \cite{Vehtari2021a}:

\begin{equation}
    \label{eqn:ESS_1}
    M_{eff} = \frac{M}{1+2 \sum_{t=1}^\infty \rho^t}
\end{equation}

\noindent where, $M$ is the total samples drawn and $\rho^t$ is the autocorrelation at lag $t \geq 0$. $\rho^t$ for dimension $i$ is defined as:

\begin{equation}
    \label{eqn:ESS_2}
    \rho^t = \frac{1}{\sigma^2} \int_{\mathcal{Q}_i} q_i^{c}~q_i^{c+t} ~f(q_i)~dq_i
\end{equation}

\noindent where, $\sigma$ is the standard deviation, $q_i^{c}$ and $q_i^{c+t}$ are the samples with lag $t$, and $f(.)$ is the required distribution. In practice, quantities in the above equation are estimated from the MCMC samples itself and Equation \eqref{eqn:ESS_1} is numerically computed using fast Fourier transforms in Tensorflow \cite{tensorflow2015-whitepaper}. In this study, we use ESS as a metric to evaluate the performance of HNNs for Bayesian inference relative to HMC-based sampling.

\section*{Details of L-HNNs}

Table \ref{Table_LHNNs} presents the architecture and training data details of L-HNNs.

\begin{table}[h]
\centering
\caption{Details of L-HNNs architecture and training data.}
\label{Table_LHNNs}
\small
\begin{tabular}{ |c|c|c|c| }
\hline
 \# Neurons & \# HMC samples  & Error threshold ($\Delta_{max}^{lhnn}$) & Leap frog samples ($N_{lf}$)\\
\hline
\multicolumn{4}{|c|}{\textbf{3D degenerate Rosenbrock}}\\
\hline
100 & 40 & 10 & 20\\
\hline
\multicolumn{4}{|c|}{\textbf{5-D ill-conditioned Gaussian}}\\
\hline
100 & 40 & 10 & 20\\
\hline
\multicolumn{4}{|c|}{\textbf{10-D degenerate Rosenbrock}}\\
\hline
100 & 40 & 10 & 20\\
\hline
\multicolumn{4}{|c|}{\textbf{24-D Bayesian logistic regression}}\\
\hline
100 & 40 & 10 & 20\\
\multicolumn{4}{|c|}{\textbf{100D rough well}}\\
\hline
500 & 40 & 20 & 5\\
\hline
\multicolumn{4}{|c|}{\textbf{2-D Neal's funnel}}\\
\hline
100 & 40 & 10 & 20\\
\hline
\end{tabular}
\begin{tablenotes}
\small
\item[-] - All example cases use 3 hidden layers.
\item[-] - All example cases use a sine nonlinear activation function.
\item[-] - All example cases use 250 units as the end time for HMC samples with $\Delta t = 0.025$.
\end{tablenotes}
\end{table}
\normalsize

\section*{Additional results}

\begin{figure}[htbp]
\begin{subfigure}{0.45\textwidth}
\centering  
\includegraphics[scale=0.45]{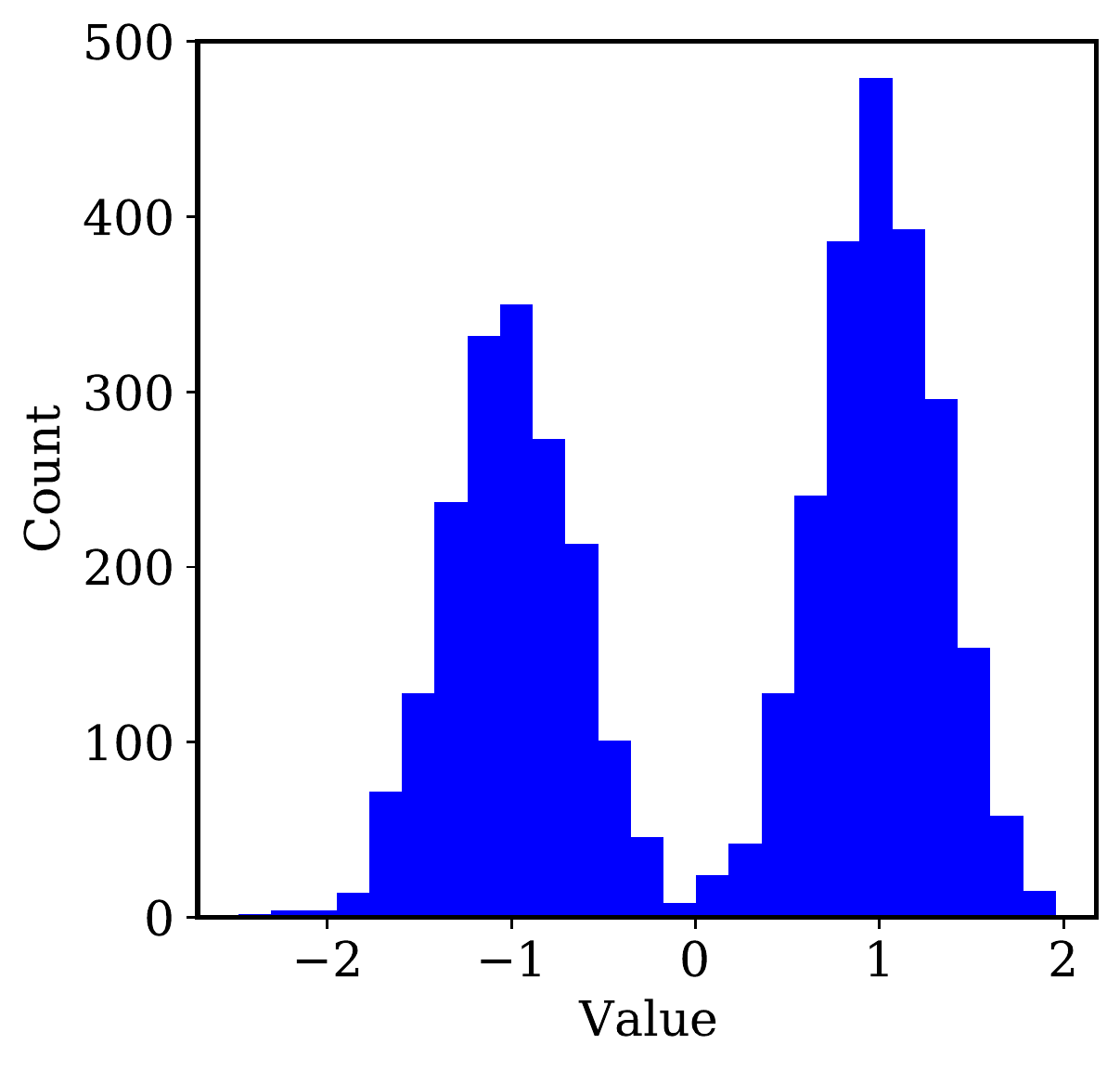} 
\caption{}
\label{Sampling_Demo_NUTS_1}
\end{subfigure}
\begin{subfigure}{0.45\textwidth}
\centering  
\includegraphics[scale=0.45]{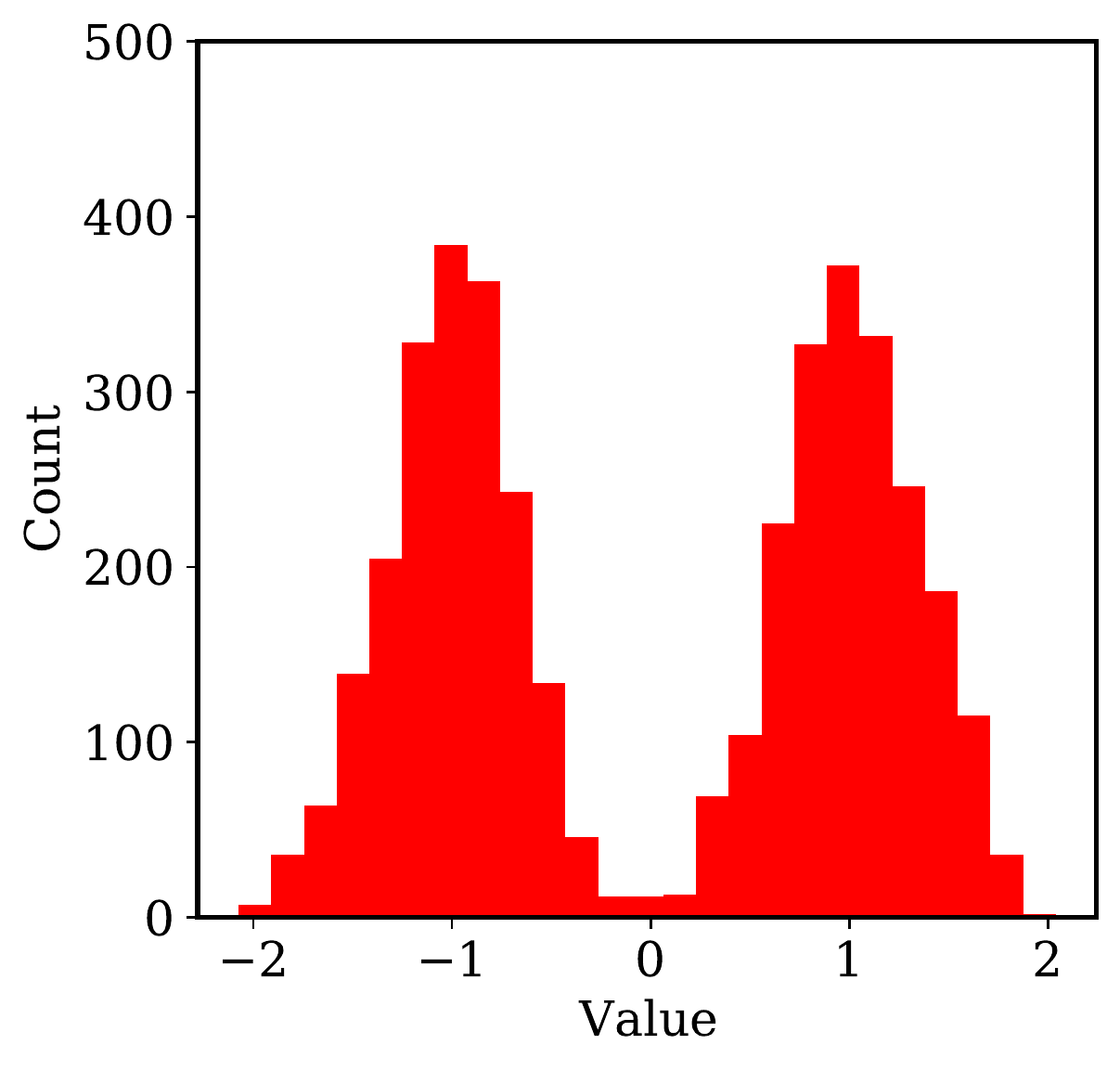} 
\caption{}
\label{Sampling_Demo_NUTS_2}
\end{subfigure}
\caption{Histogram of samples from a 1D Gaussian mixture density using: (a) traditional NUTS; and (b) HNN-NUTS.}
\label{Sampling_Demo_NUTS_Hist}
\end{figure}

\begin{figure}[htbp]
\centering
\includegraphics[width=\textwidth]{5DIllGauss_Scatter.png} 
\caption{Scatter plot comparison between NUTS and LHNN-NUTS with online error monitoring, considering a 5-D ill-conditioned Gaussian distribution. (Blue dots: traditional NUTS; Red dots: LHNN-NUTS with online error monitoring)}
\label{5D_IllGauss_highres}
\end{figure}

\begin{figure}[htbp]
\begin{subfigure}{0.32\textwidth}
\centering  
\includegraphics[scale=0.32]{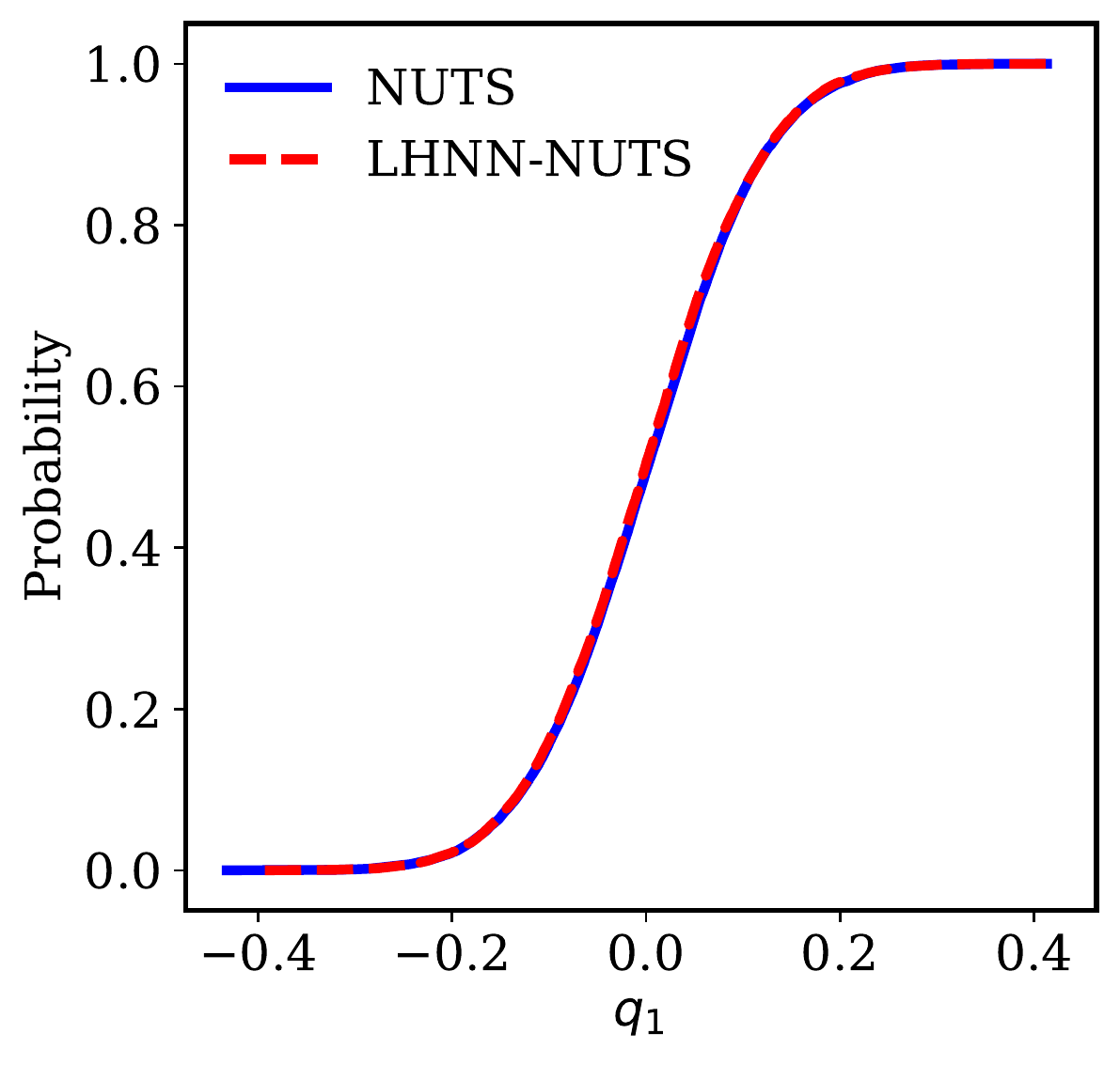} 
\caption{}
\label{5D_IllGauss_ecdf_1}
\end{subfigure}
\begin{subfigure}{0.32\textwidth}
\centering  
\includegraphics[scale=0.32]{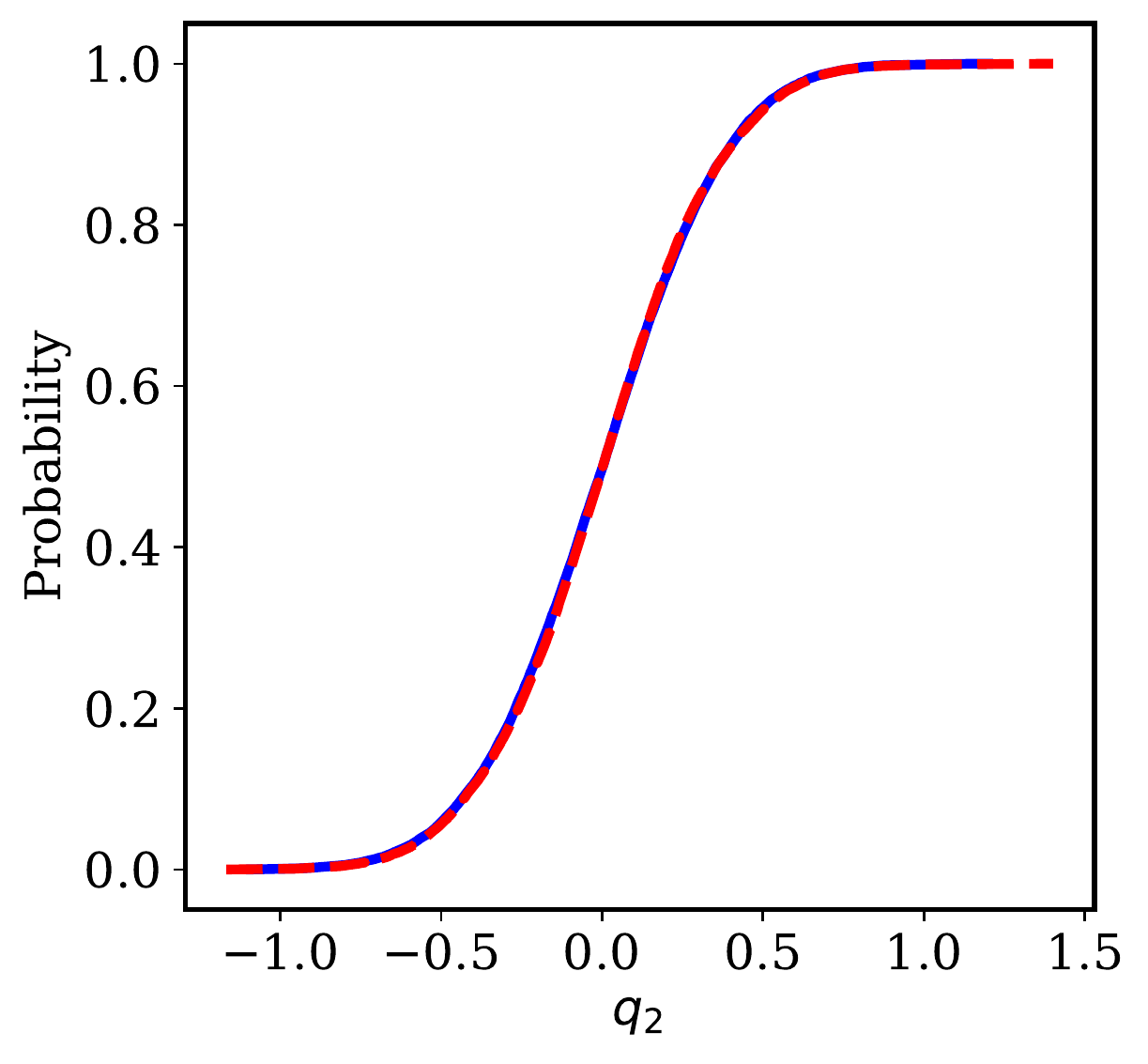} 
\caption{}
\label{5D_IllGauss_ecdf_2}
\end{subfigure}
\begin{subfigure}{0.32\textwidth}
\centering  
\includegraphics[scale=0.32]{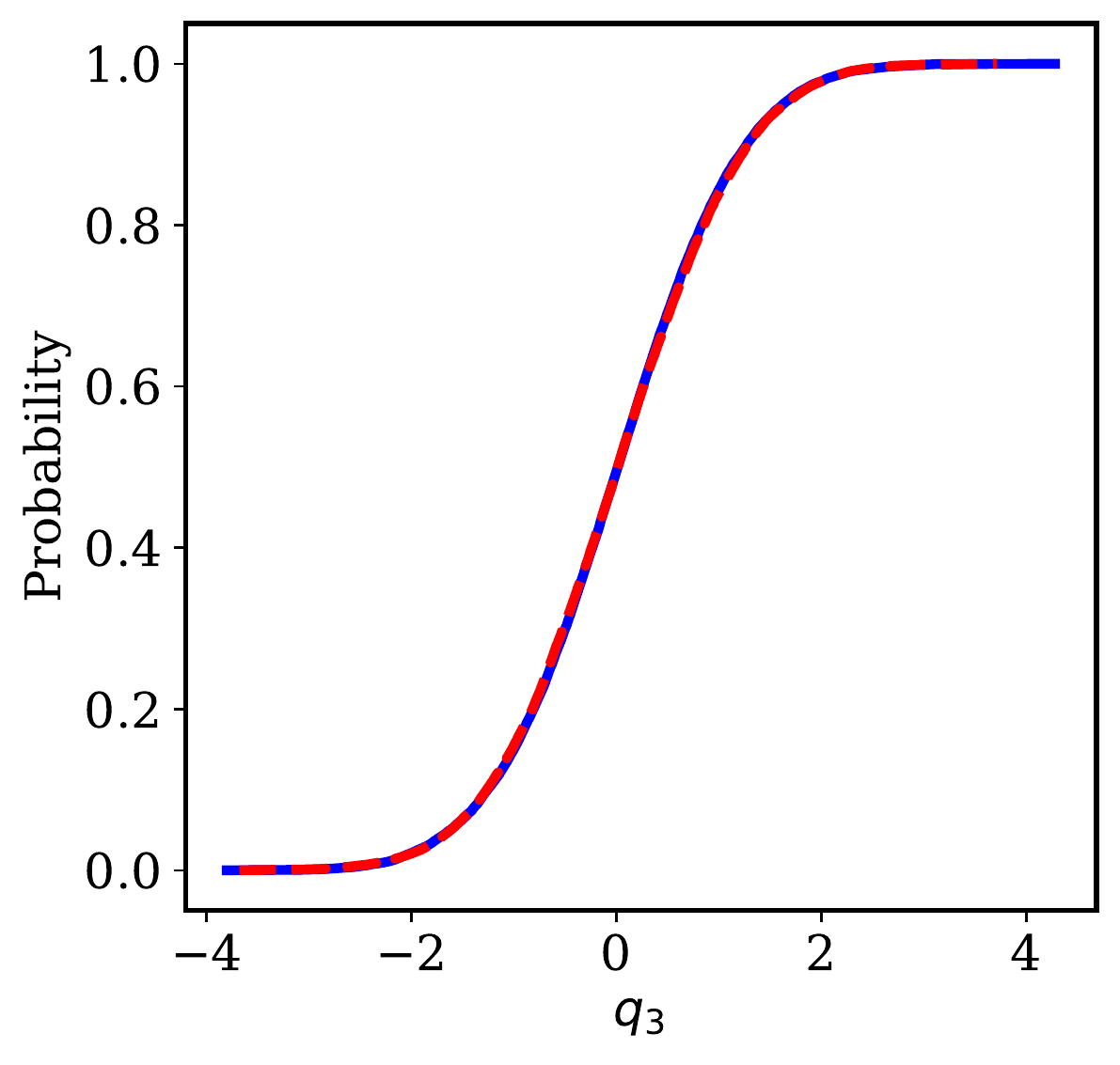} 
\caption{}
\label{5D_IllGauss_ecdf_3}
\end{subfigure}
\begin{subfigure}{0.32\textwidth}
\centering  
\includegraphics[scale=0.32]{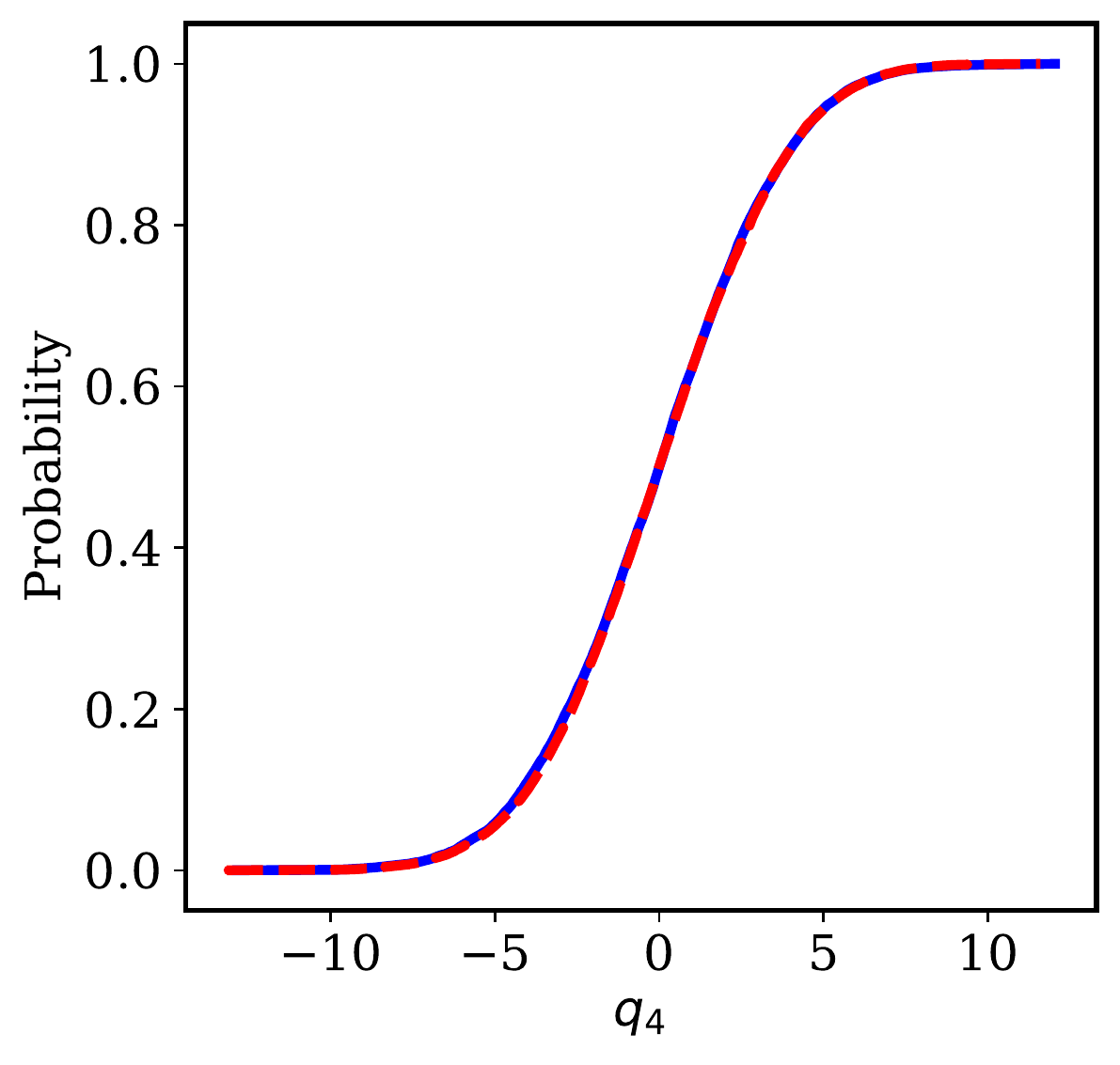} 
\caption{}
\label{5D_IllGauss_ecdf_4}
\end{subfigure}
\begin{subfigure}{0.32\textwidth}
\centering  
\includegraphics[scale=0.32]{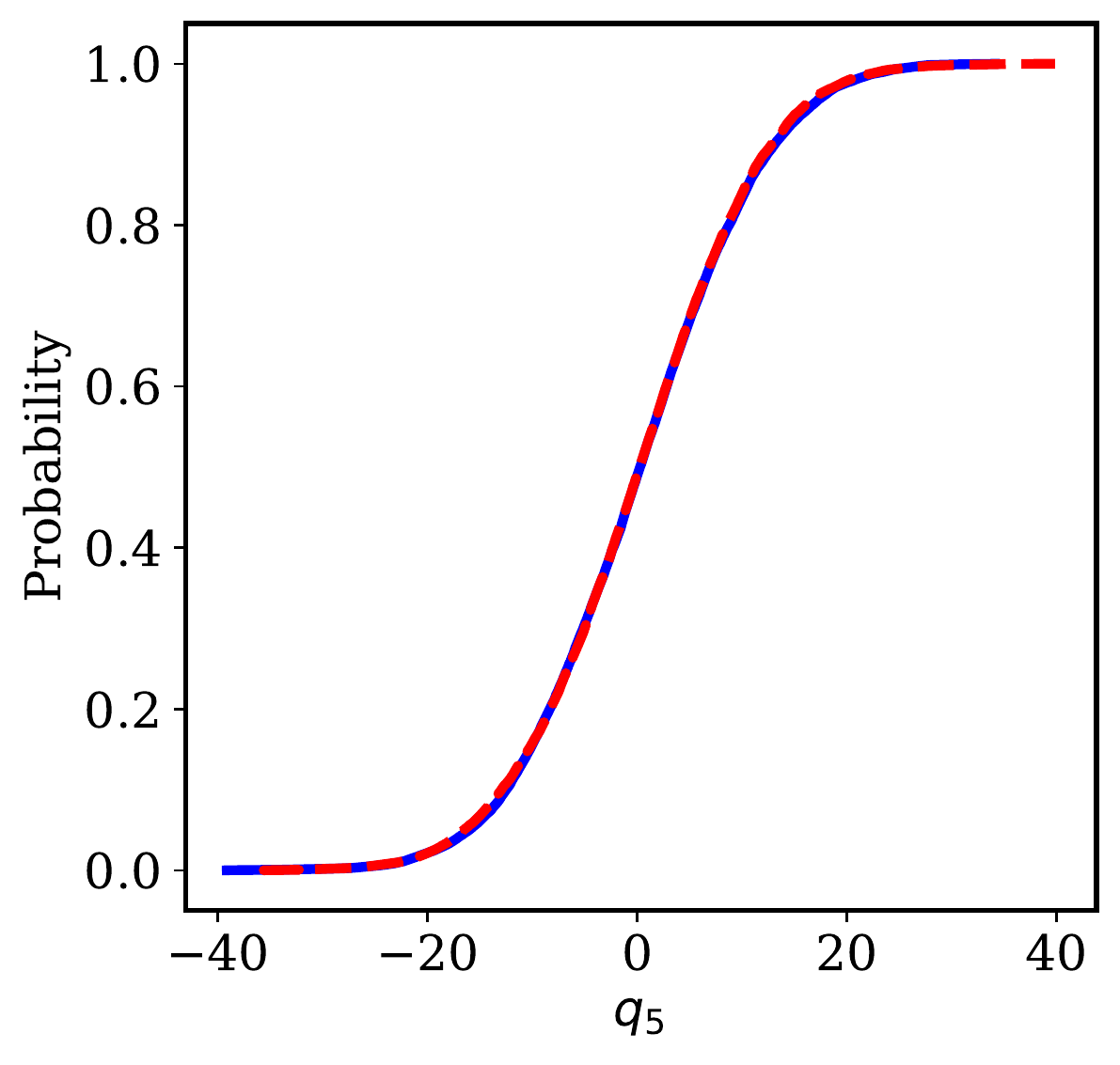} 
\caption{}
\label{5D_IllGauss_ecdf_5}
\end{subfigure}
\caption{5D ill-conditioned Gaussian density empirical cumulative distribution functions comparison.}
\label{5D_IllGauss_ecdf}
\end{figure}

\begin{figure}[htbp]
\centering
\includegraphics[width=\textwidth]{10DRosen_Scatter.png} 
\caption{Scatter plot comparison between NUTS and LHNN-NUTS with online error monitoring, considering a 10-D degenerate Rosenbrock distribution. (Blue dots: traditional NUTS; Red dots: LHNN-NUTS with online error monitoring)}
\label{10D_Rosen_highres}
\end{figure}

\begin{figure}[htbp]
\begin{subfigure}{0.32\textwidth}
\centering  
\includegraphics[scale=0.32]{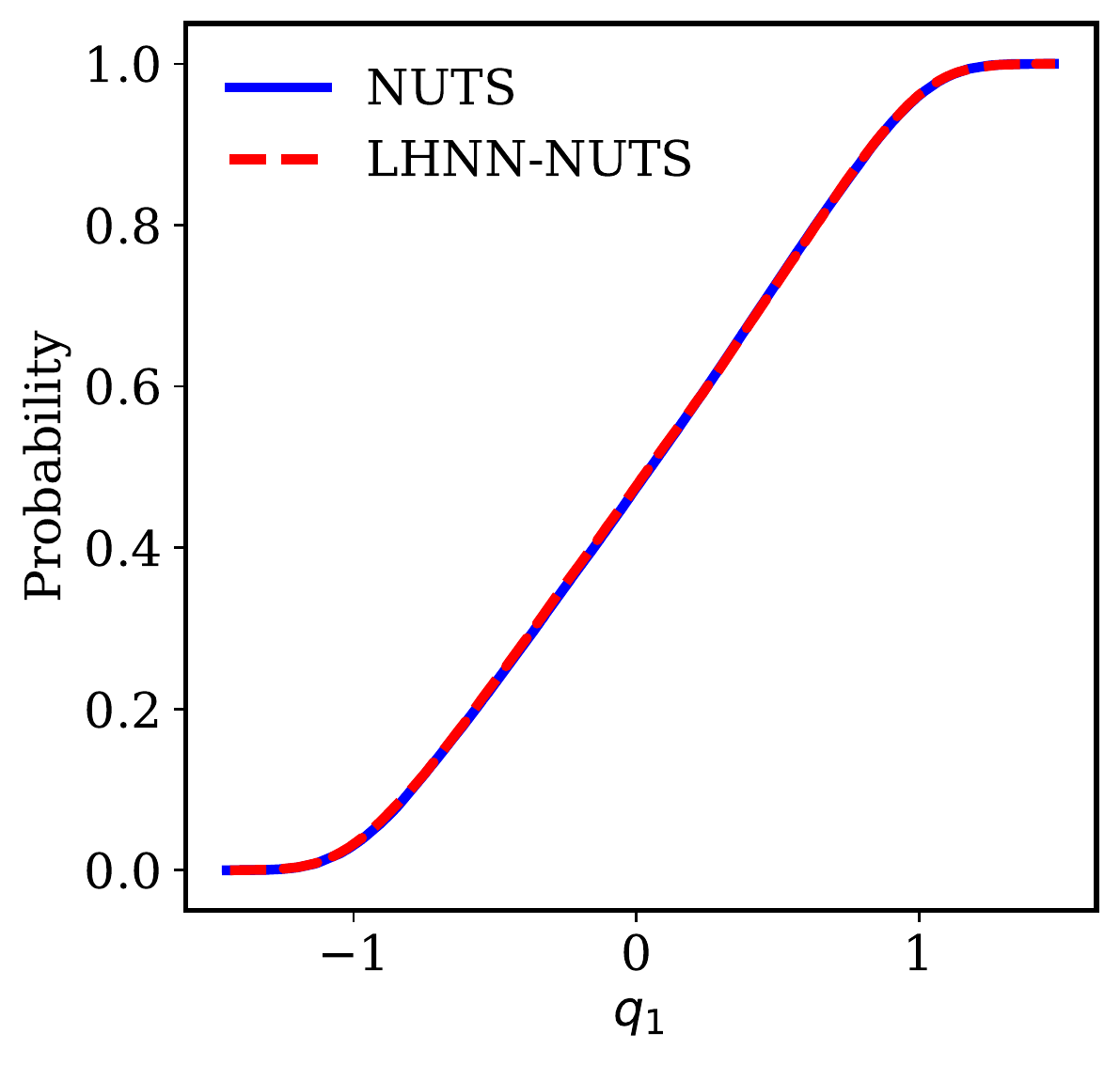} 
\caption{}
\label{10D_Rosen_ecdf_1}
\end{subfigure}
\begin{subfigure}{0.32\textwidth}
\centering  
\includegraphics[scale=0.32]{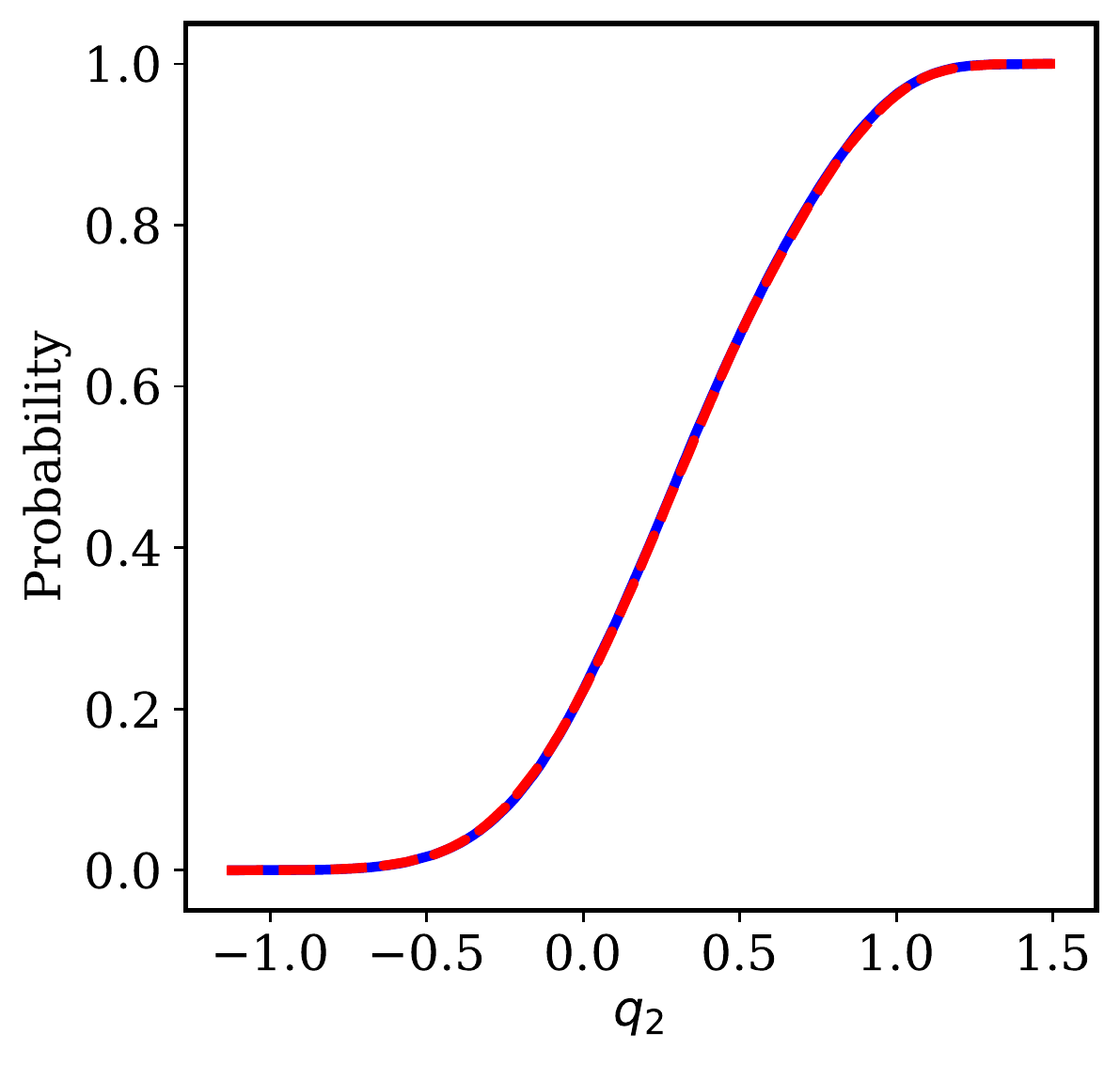} 
\caption{}
\label{10D_Rosen_ecdf_2}
\end{subfigure}
\begin{subfigure}{0.32\textwidth}
\centering  
\includegraphics[scale=0.32]{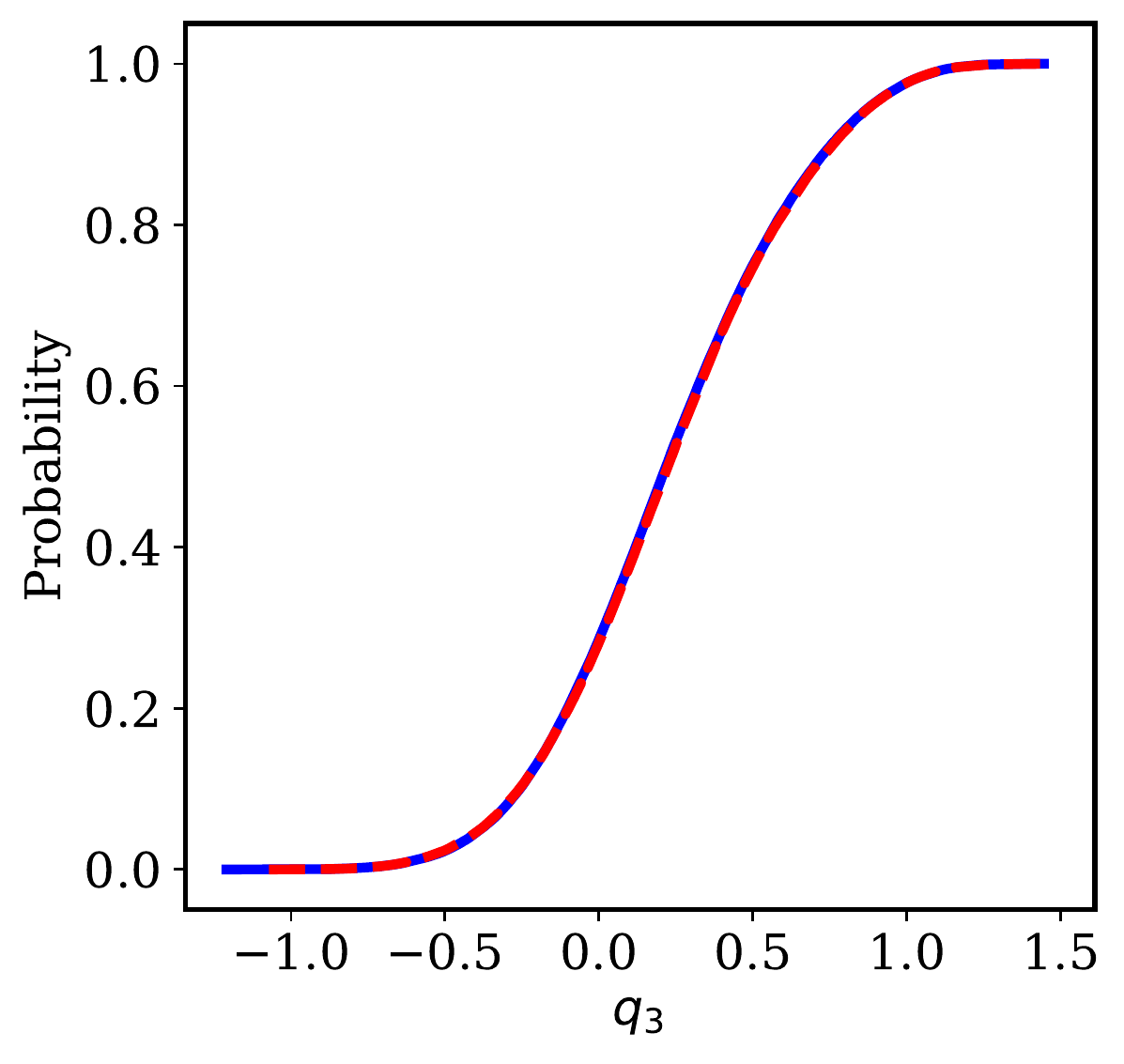} 
\caption{}
\label{10D_Rosen_ecdf_3}
\end{subfigure}
\begin{subfigure}{0.32\textwidth}
\centering  
\includegraphics[scale=0.32]{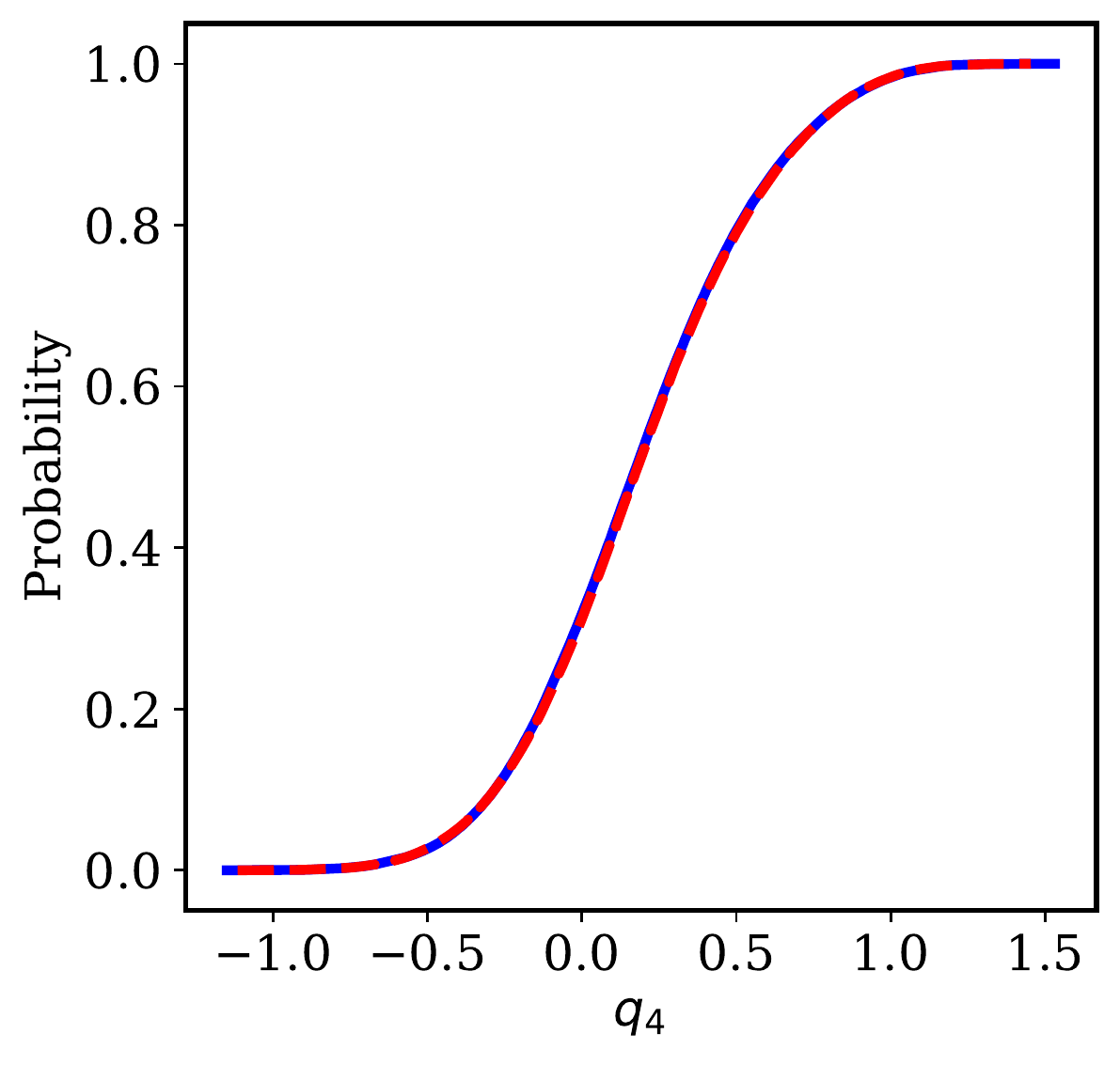} 
\caption{}
\label{10D_Rosen_ecdf_4}
\end{subfigure}
\begin{subfigure}{0.32\textwidth}
\centering  
\includegraphics[scale=0.32]{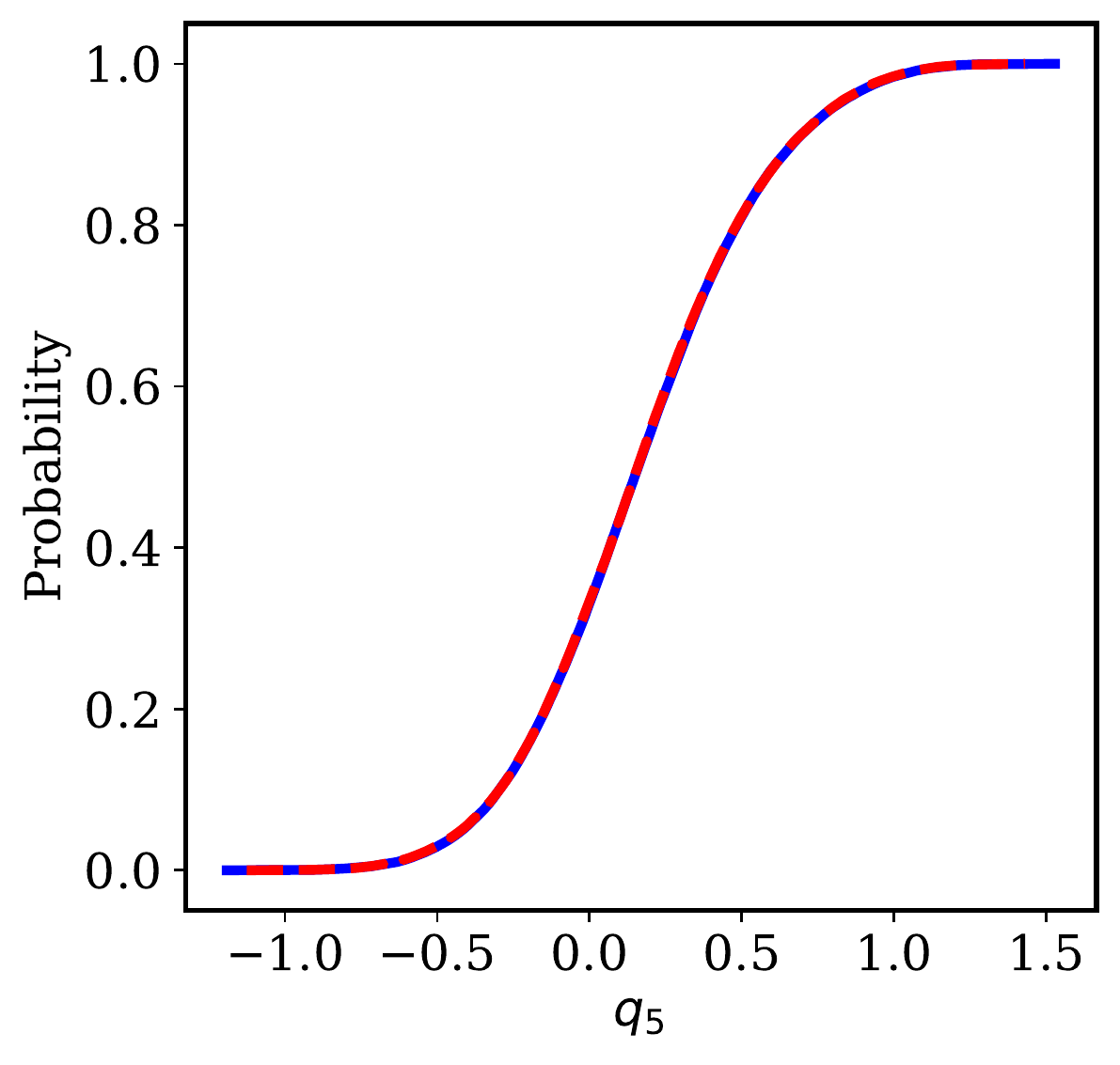} 
\caption{}
\label{10D_Rosen_ecdf_5}
\end{subfigure}
\begin{subfigure}{0.32\textwidth}
\centering  
\includegraphics[scale=0.32]{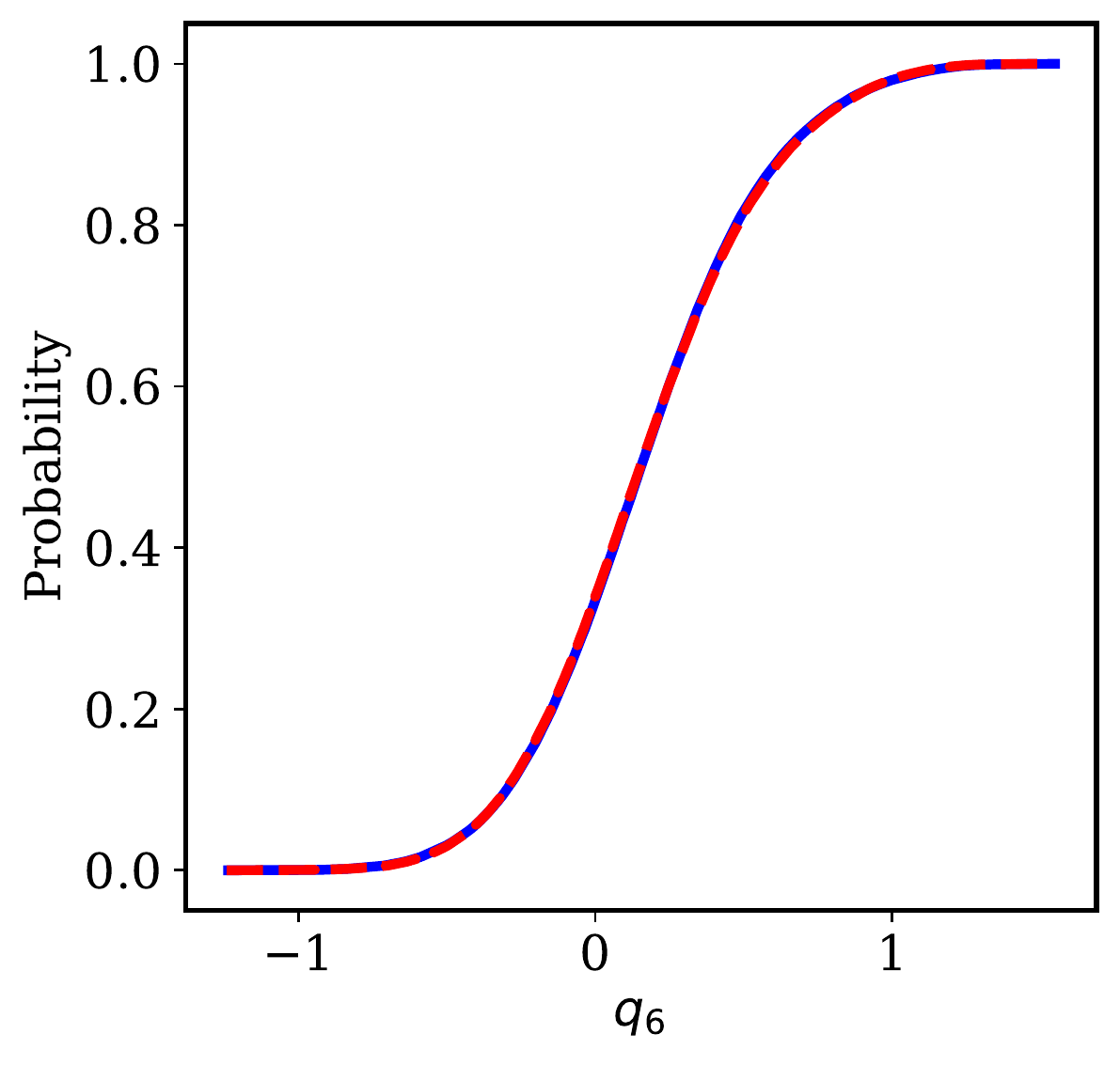} 
\caption{}
\label{10D_Rosen_ecdf_6}
\end{subfigure}
\begin{subfigure}{0.32\textwidth}
\centering  
\includegraphics[scale=0.32]{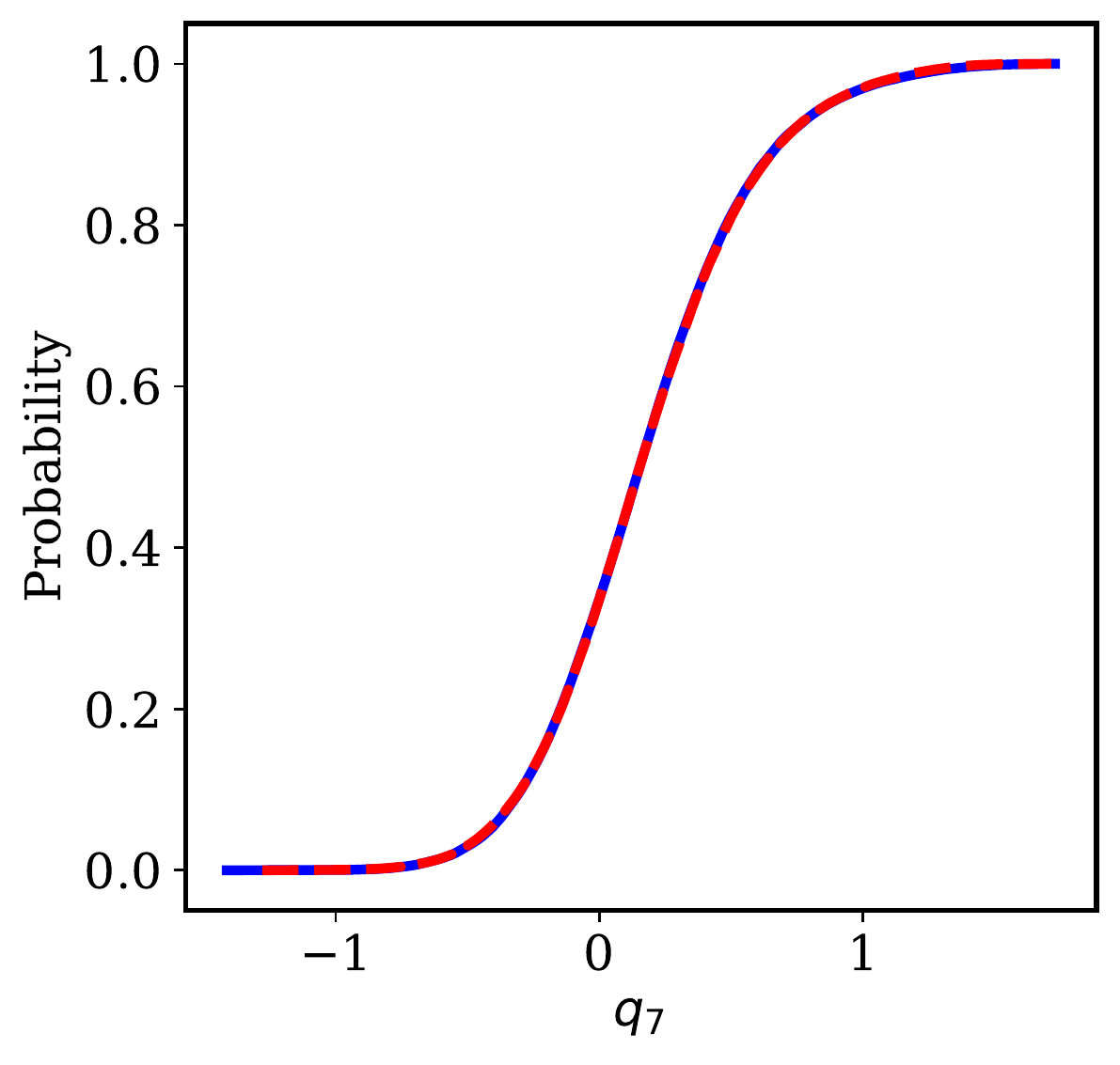} 
\caption{}
\label{10D_Rosen_ecdf_7}
\end{subfigure}
\begin{subfigure}{0.32\textwidth}
\centering  
\includegraphics[scale=0.32]{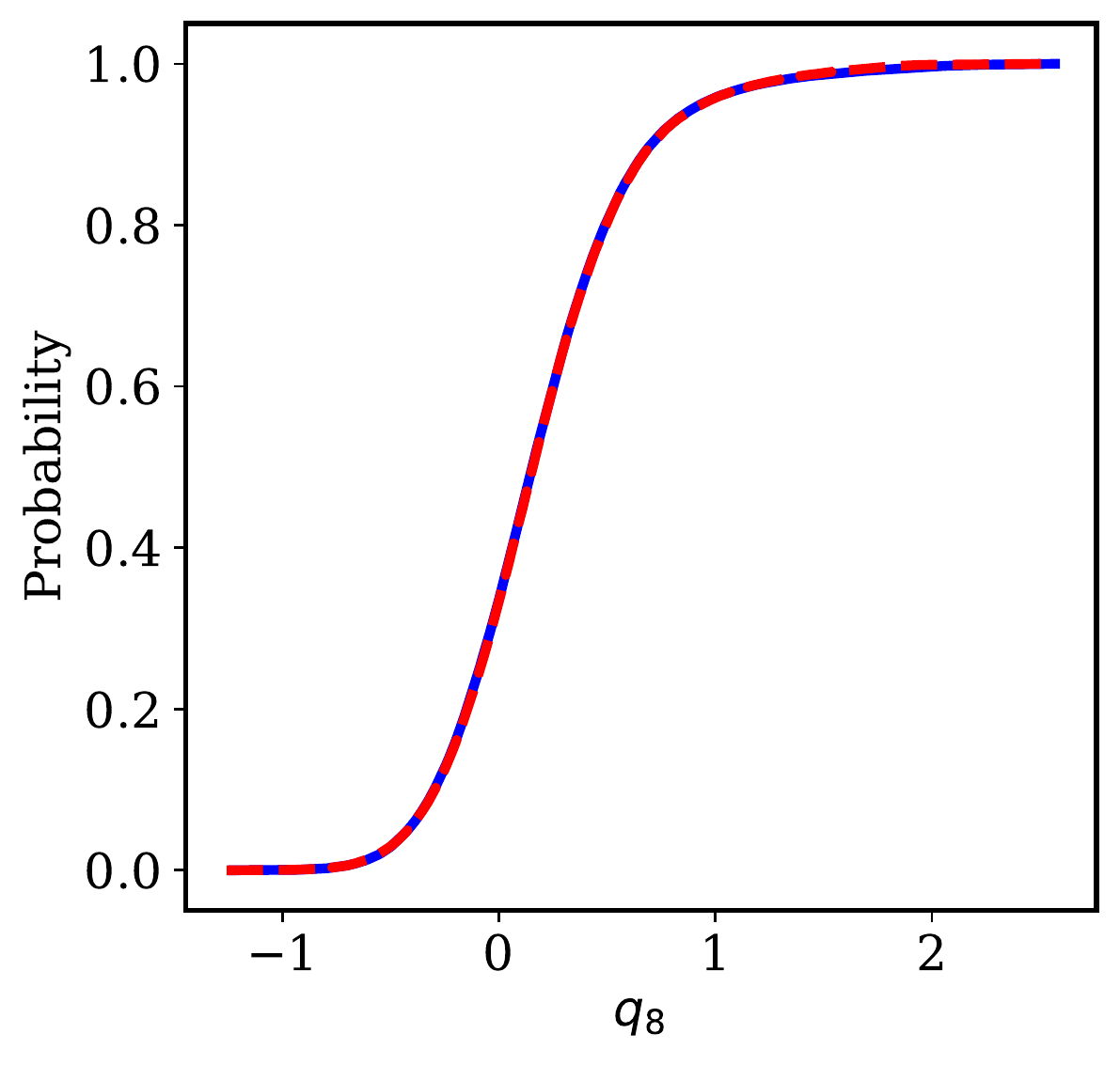} 
\caption{}
\label{10D_Rosen_ecdf_8}
\end{subfigure}
\begin{subfigure}{0.32\textwidth}
\centering  
\includegraphics[scale=0.32]{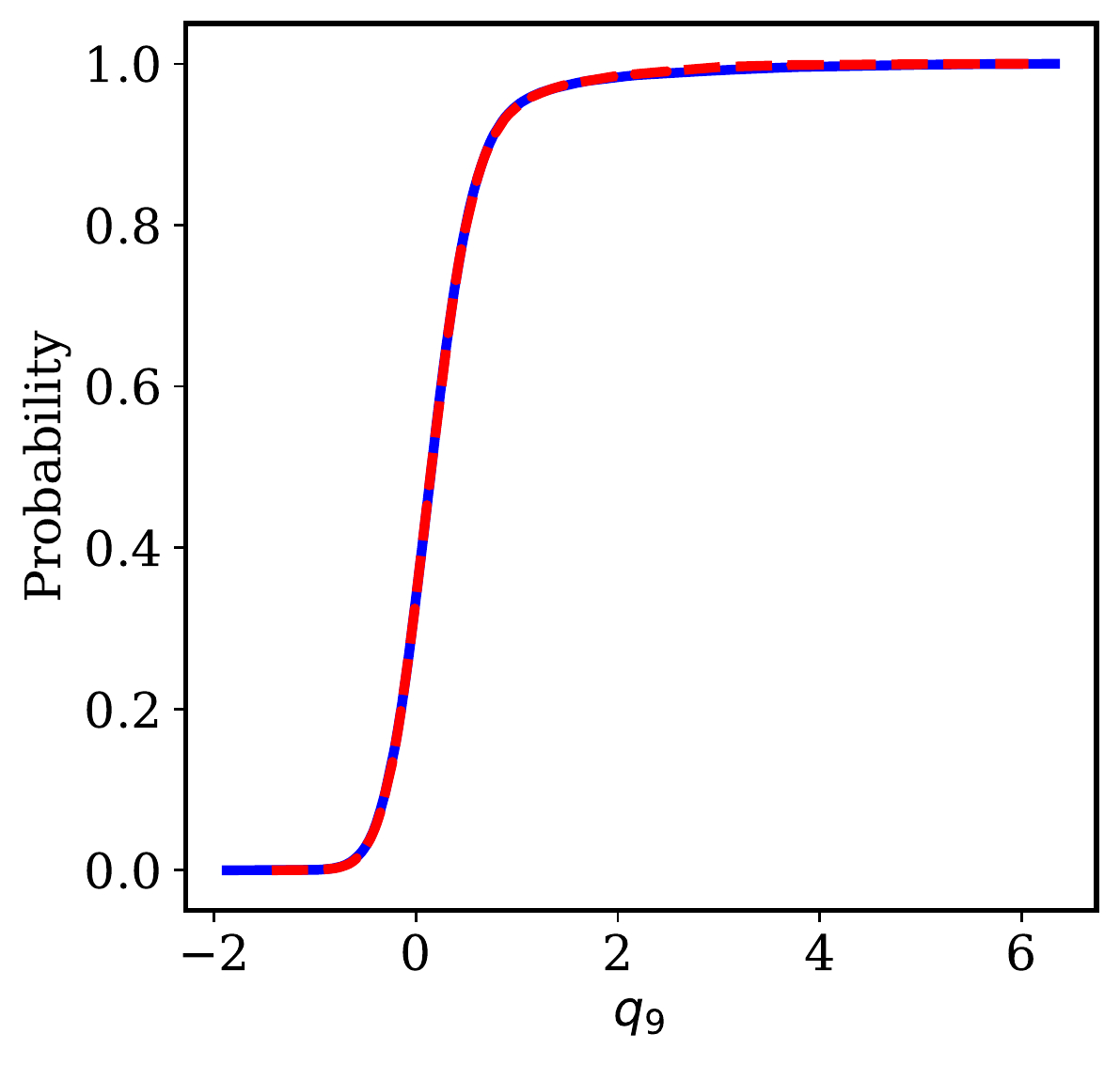} 
\caption{}
\label{10D_Rosen_ecdf_9}
\end{subfigure}
\begin{subfigure}{1.0\textwidth}
\centering  
\includegraphics[scale=0.32]{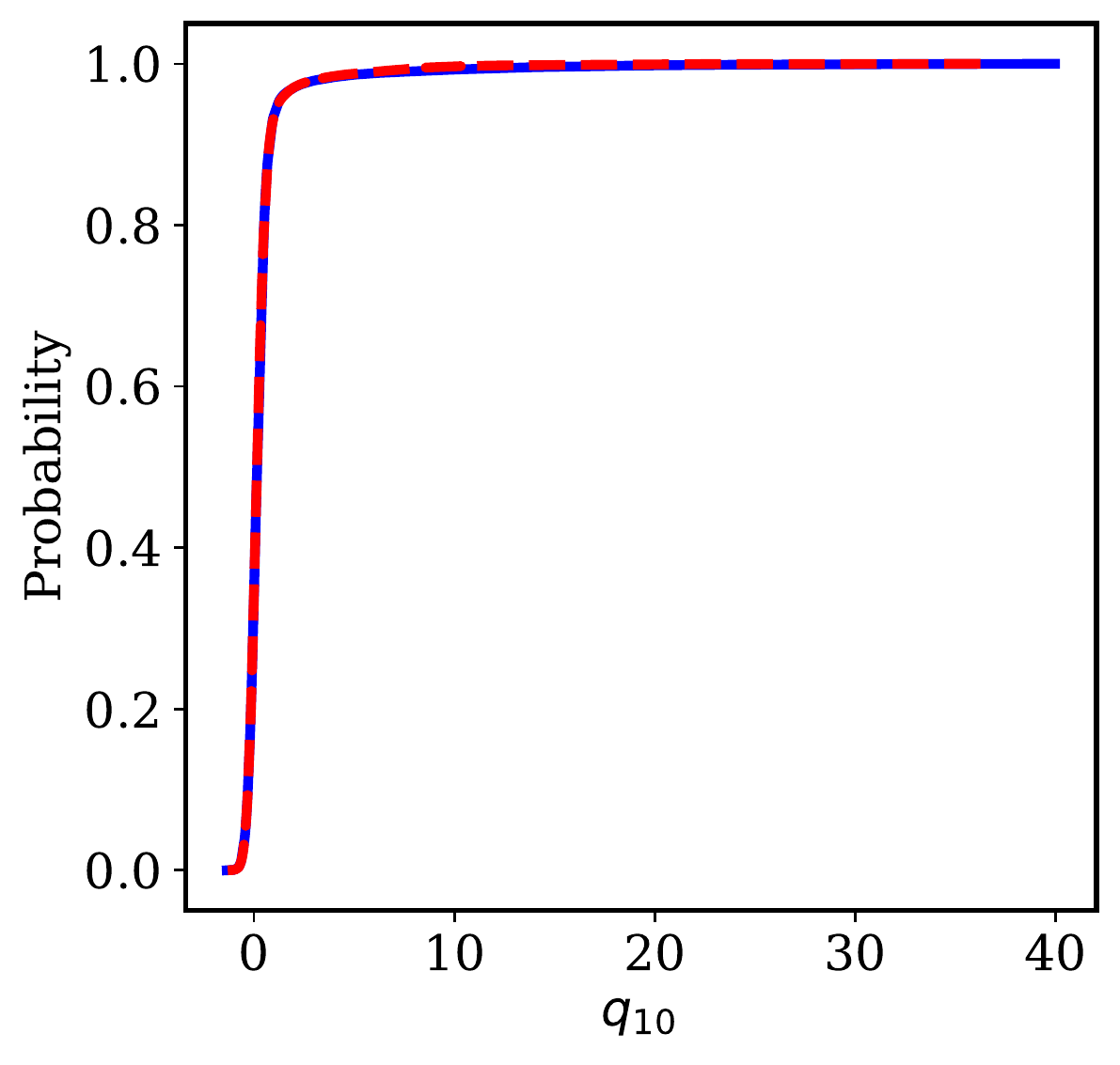} 
\caption{}
\label{10D_Rosen_ecdf_10}
\end{subfigure}
\caption{10D degenerate Rosenbrock density empirical cumulative distribution functions comparison.}
\label{10D_Rosen_ecdf}
\end{figure}

\begin{figure}[htbp]
\centering
\includegraphics[width=\textwidth]{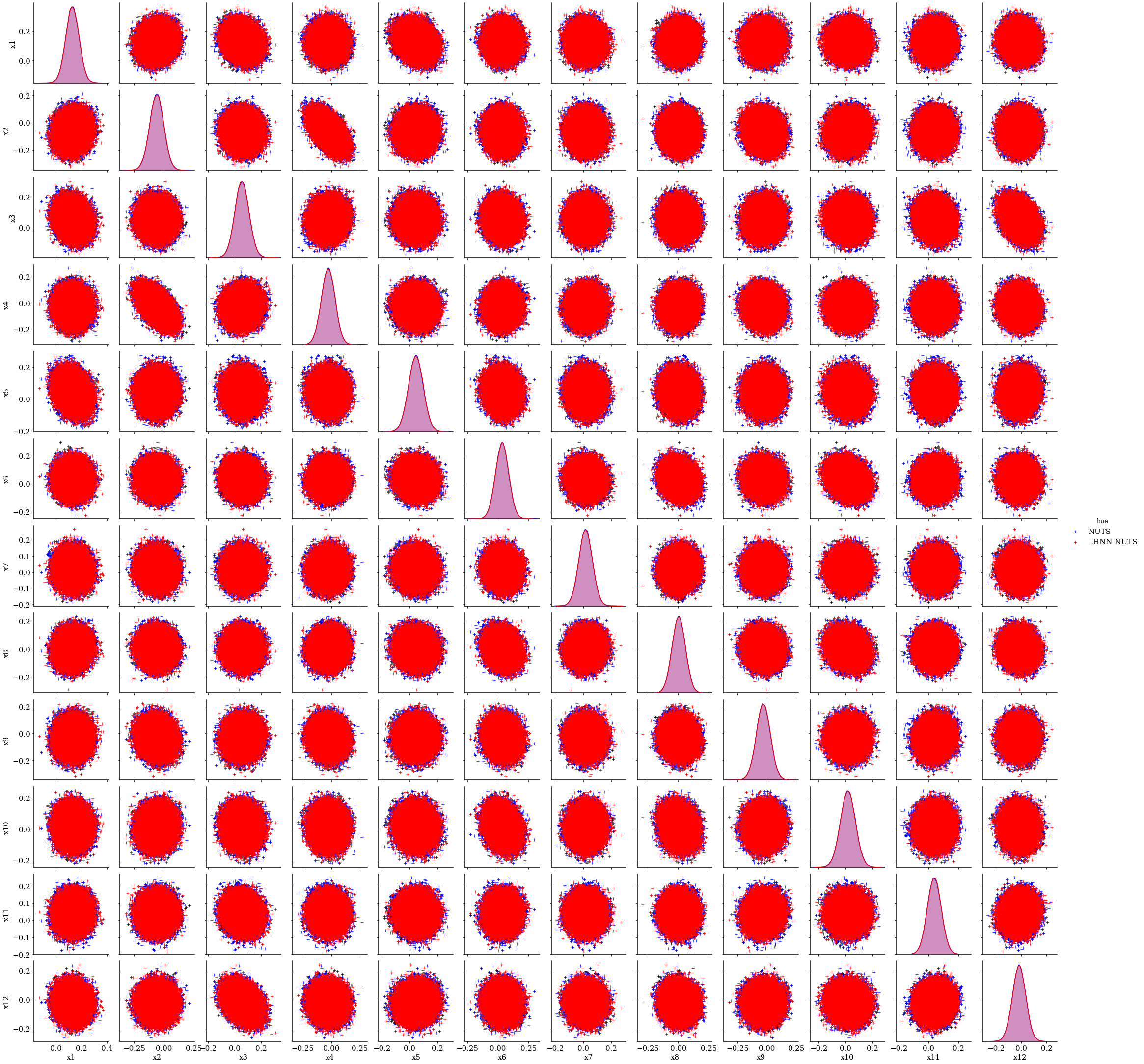} 
\caption{Comparison of scatter plots between NUTS and HNN-NUTS considering a 24D Bayesian logistic regression problem. Dimensions 1 to 12 are presented here. (blue dots: traditional NUTS; red dots: HNN-NUTS)}
\label{12D_Logistic_1}
\end{figure}

\begin{figure}[htbp]
\centering
\includegraphics[width=\textwidth]{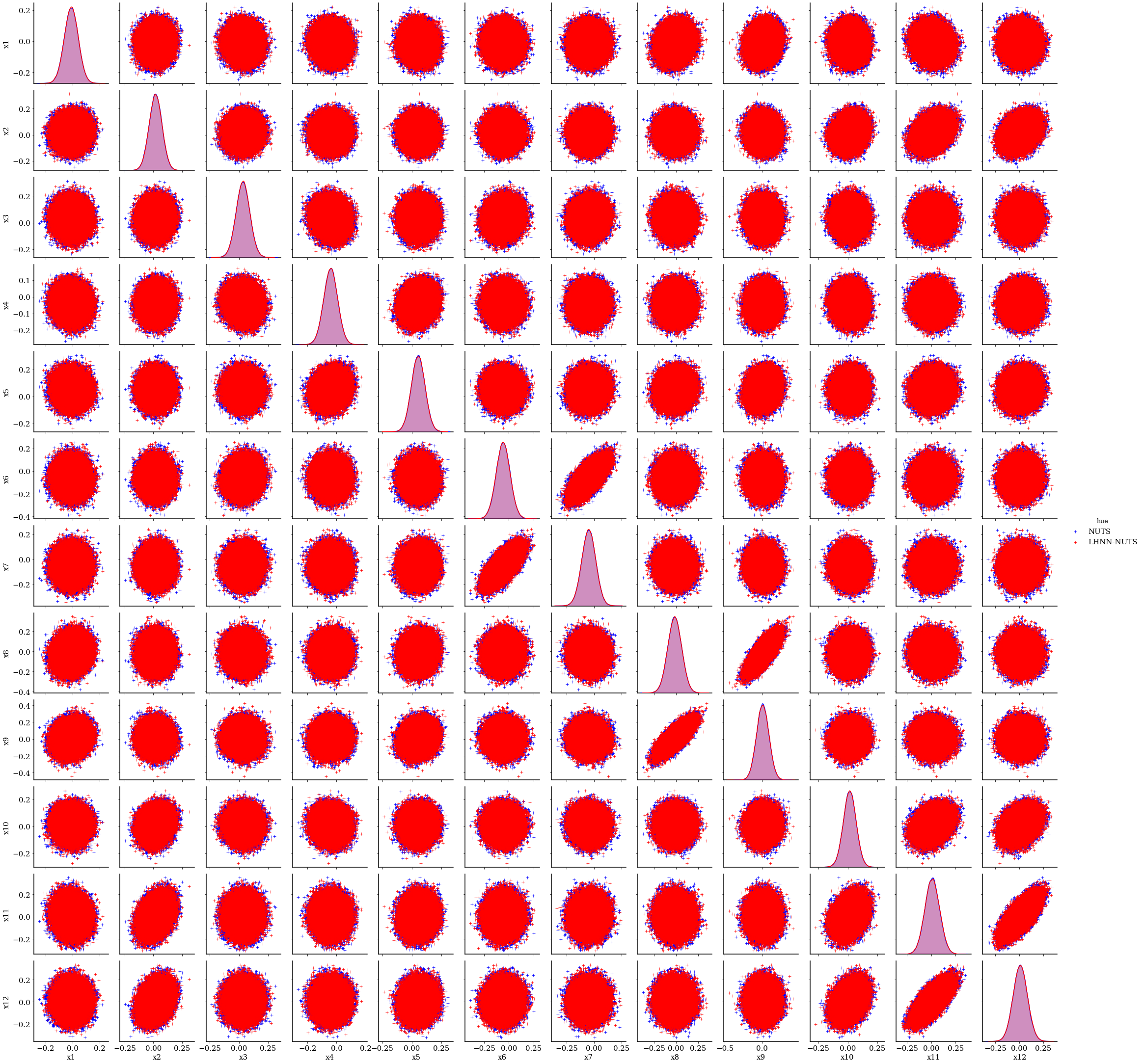} 
\caption{Comparison of scatter plots between NUTS and HNN-NUTS considering a 24D Bayesian logistic regression problem. Dimensions 13 to 24 are presented here. (blue dots: traditional NUTS; red dots: HNN-NUTS)}
\label{12D_Logistic_2}
\end{figure}

\begin{figure}[htbp]
\begin{subfigure}{0.32\textwidth}
\centering  
\includegraphics[scale=0.32]{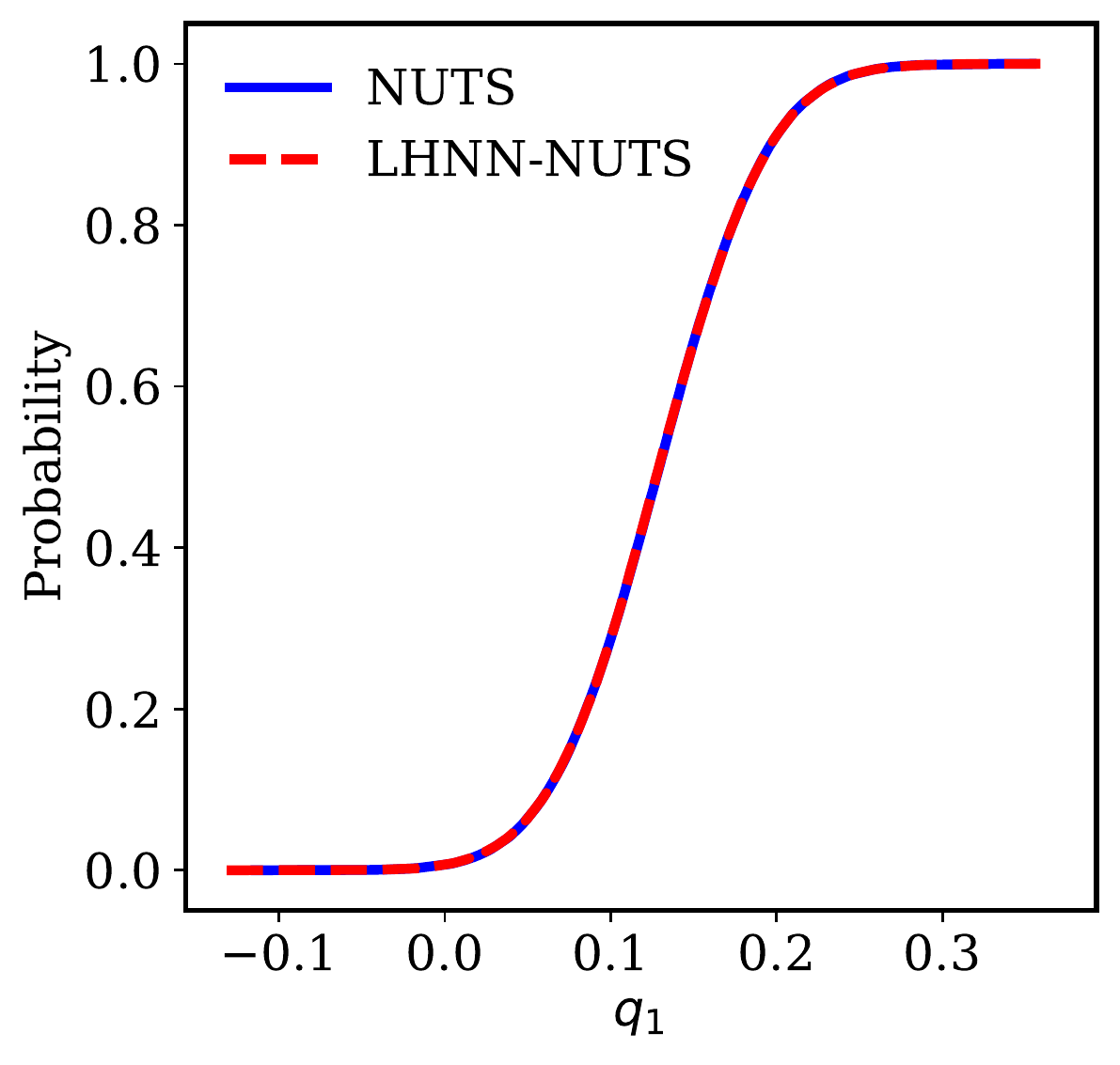} 
\caption{}
\label{24D_Logistic_ecdf_1}
\end{subfigure}
\begin{subfigure}{0.32\textwidth}
\centering  
\includegraphics[scale=0.32]{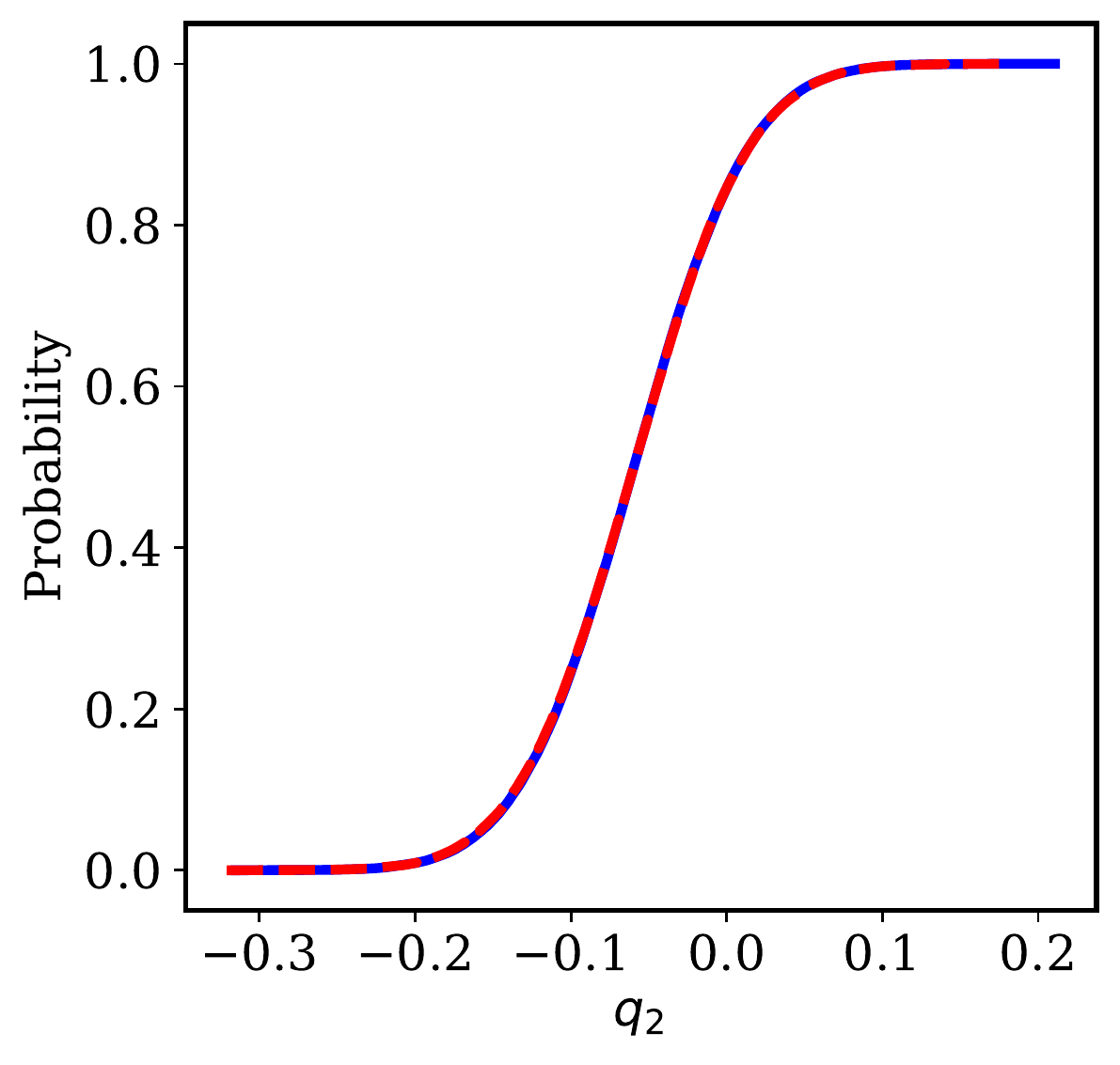} 
\caption{}
\label{24D_Logistic_ecdf_2}
\end{subfigure}
\begin{subfigure}{0.32\textwidth}
\centering  
\includegraphics[scale=0.32]{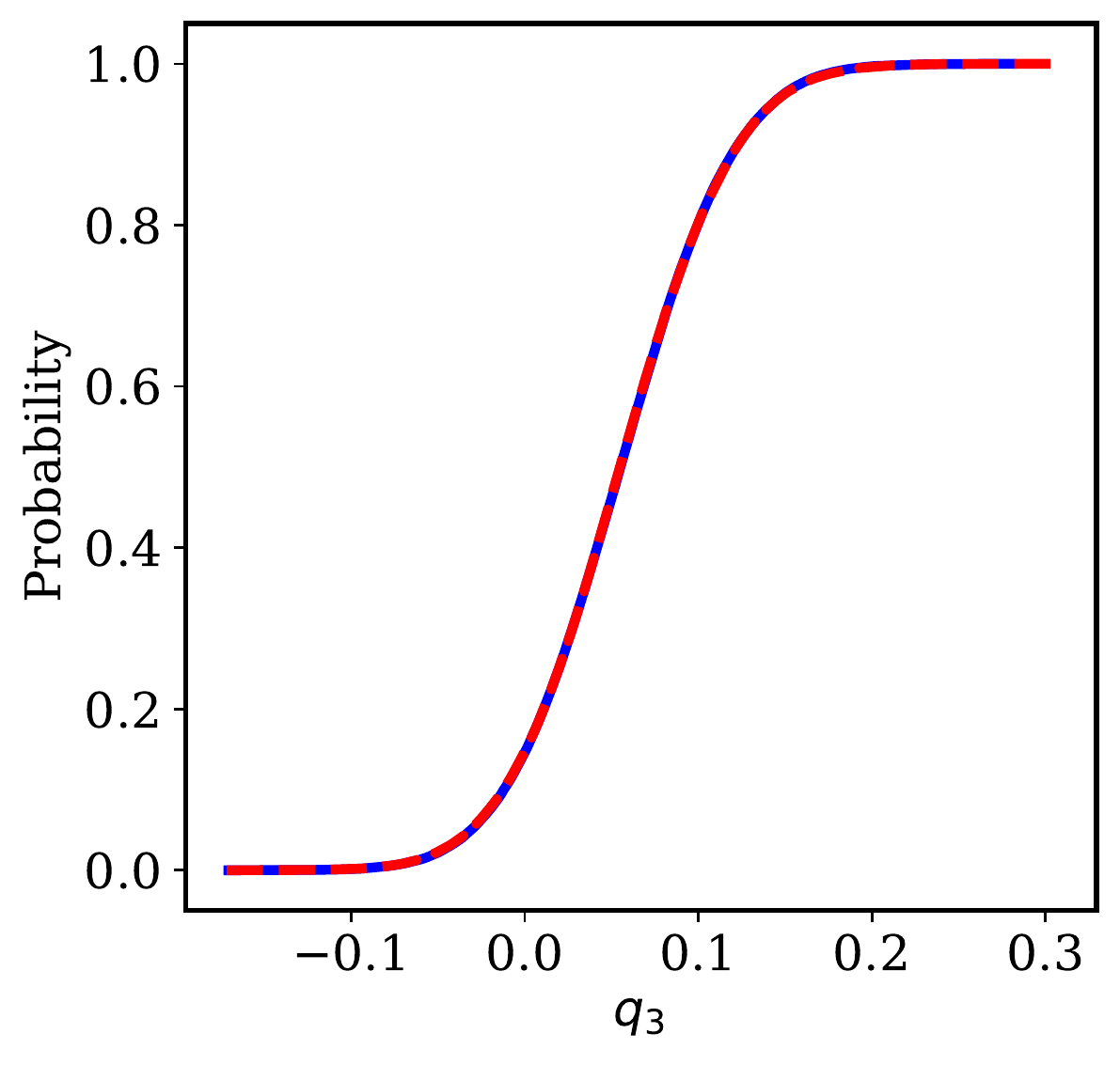} 
\caption{}
\label{24D_Logistic_ecdf_3}
\end{subfigure}
\begin{subfigure}{0.32\textwidth}
\centering  
\includegraphics[scale=0.32]{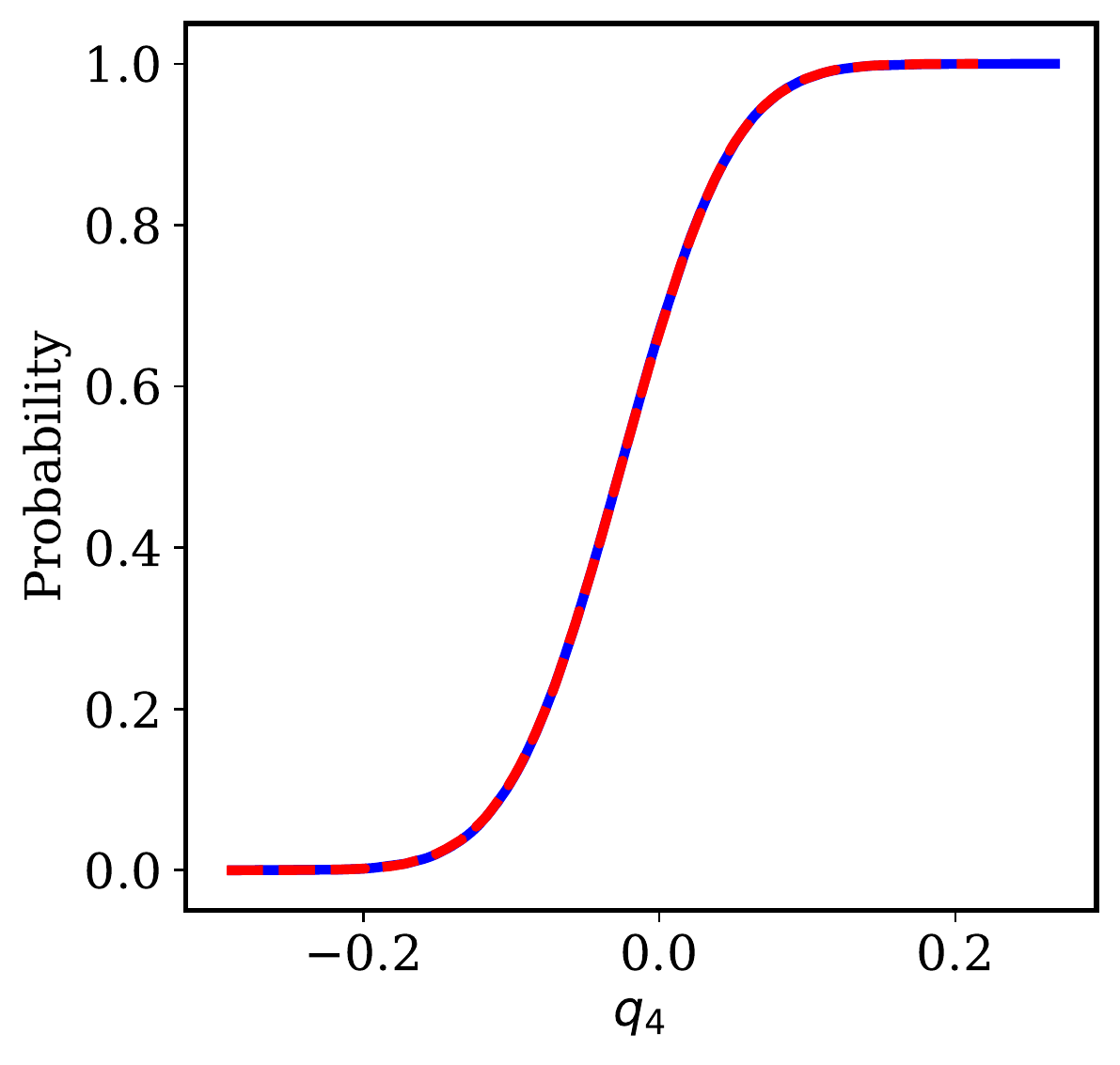} 
\caption{}
\label{24D_Logistic_ecdf_4}
\end{subfigure}
\begin{subfigure}{0.32\textwidth}
\centering  
\includegraphics[scale=0.32]{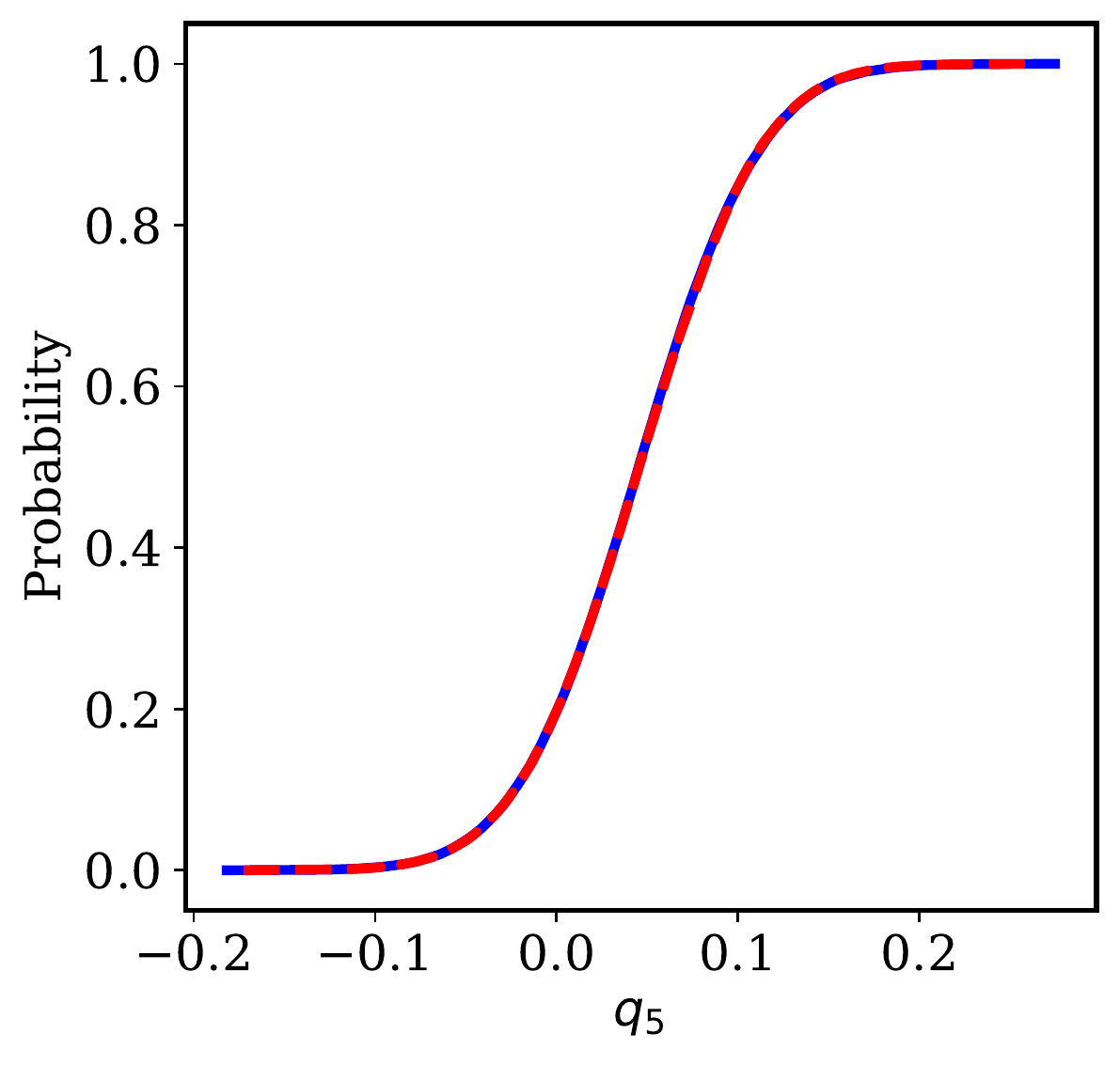} 
\caption{}
\label{24D_Logistic_ecdf_5}
\end{subfigure}
\begin{subfigure}{0.32\textwidth}
\centering  
\includegraphics[scale=0.32]{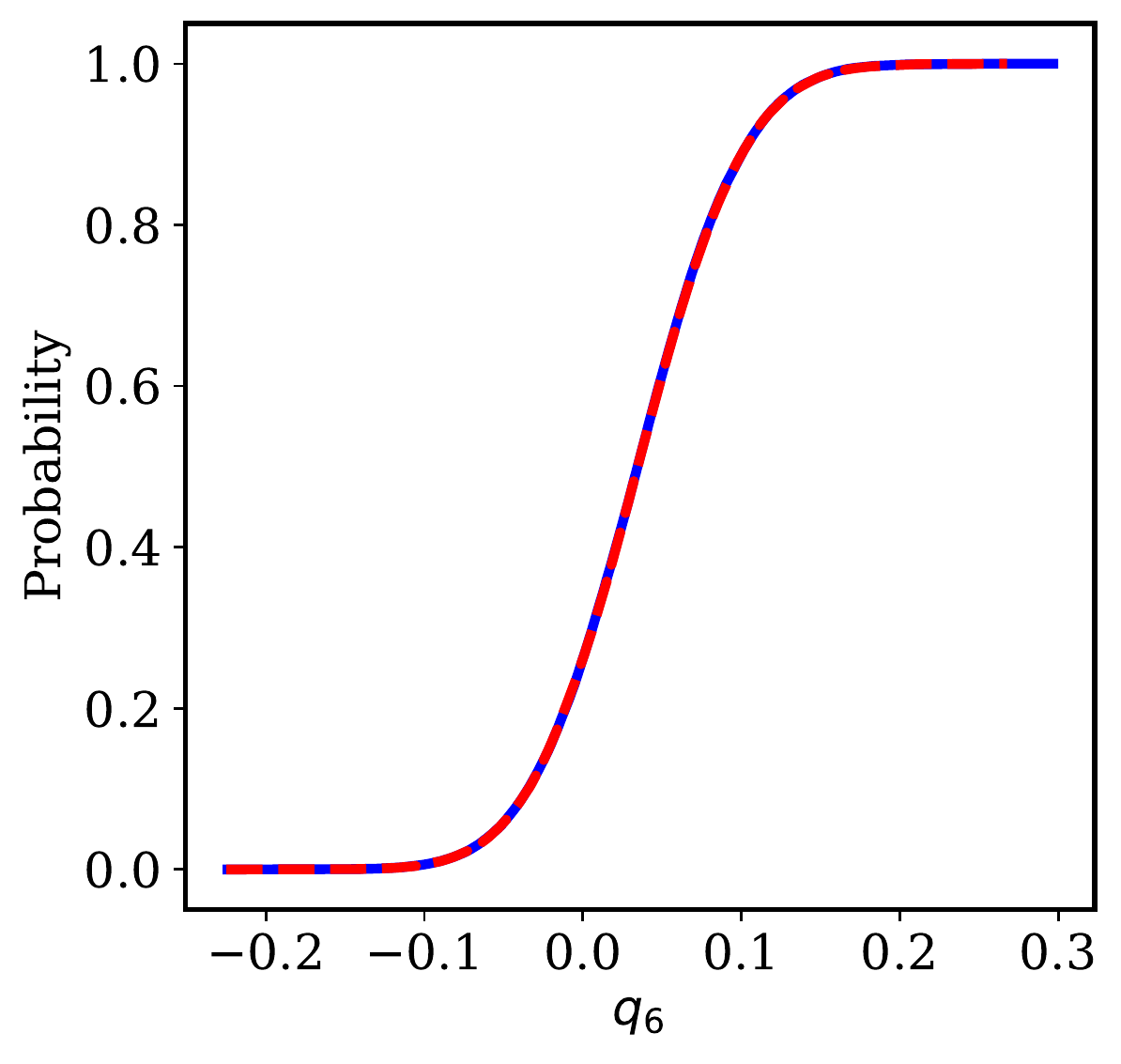} 
\caption{}
\label{24D_Logistic_ecdf_6}
\end{subfigure}
\begin{subfigure}{0.32\textwidth}
\centering  
\includegraphics[scale=0.32]{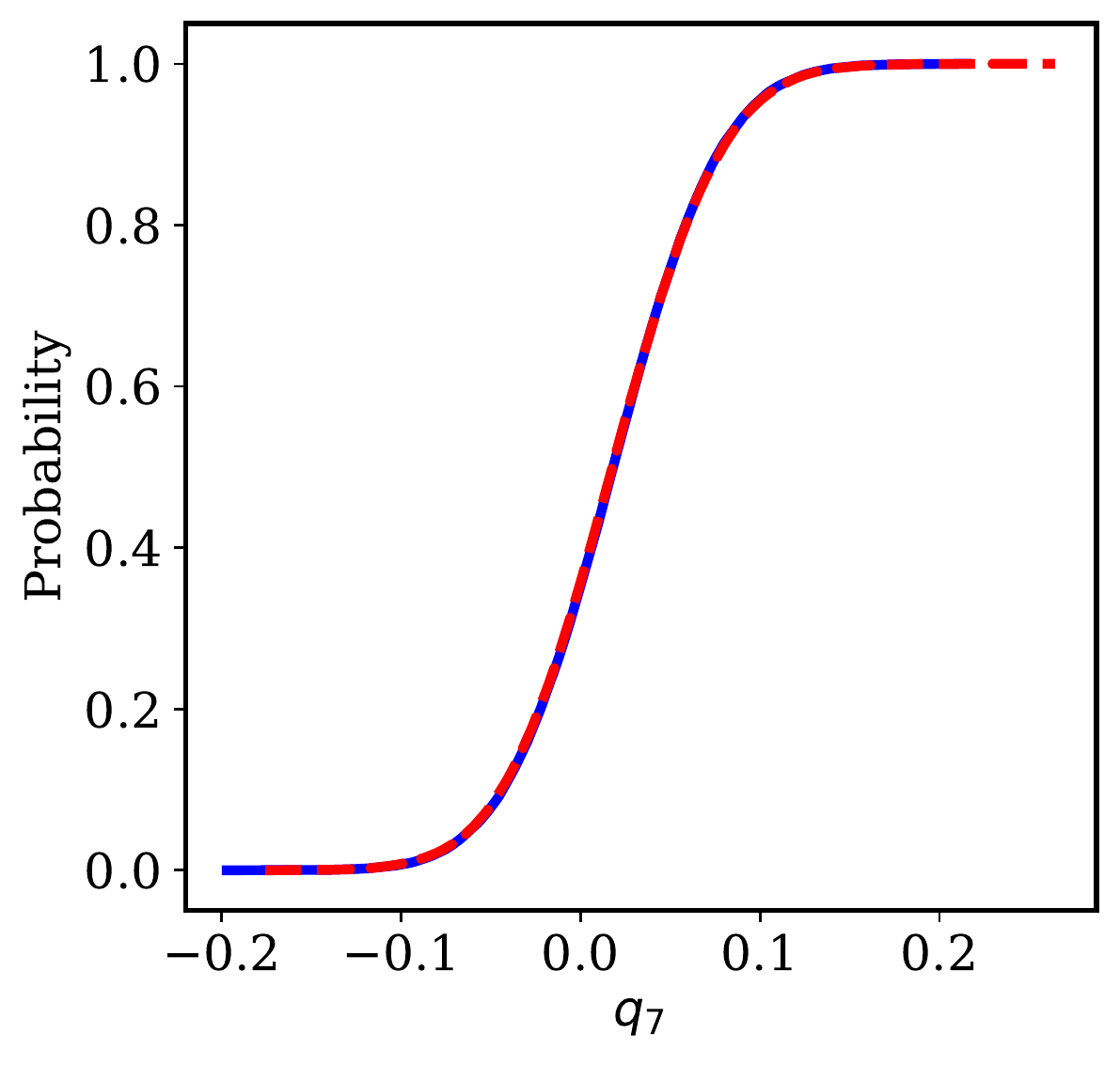} 
\caption{}
\label{24D_Logistic_ecdf_7}
\end{subfigure}
\begin{subfigure}{0.32\textwidth}
\centering  
\includegraphics[scale=0.32]{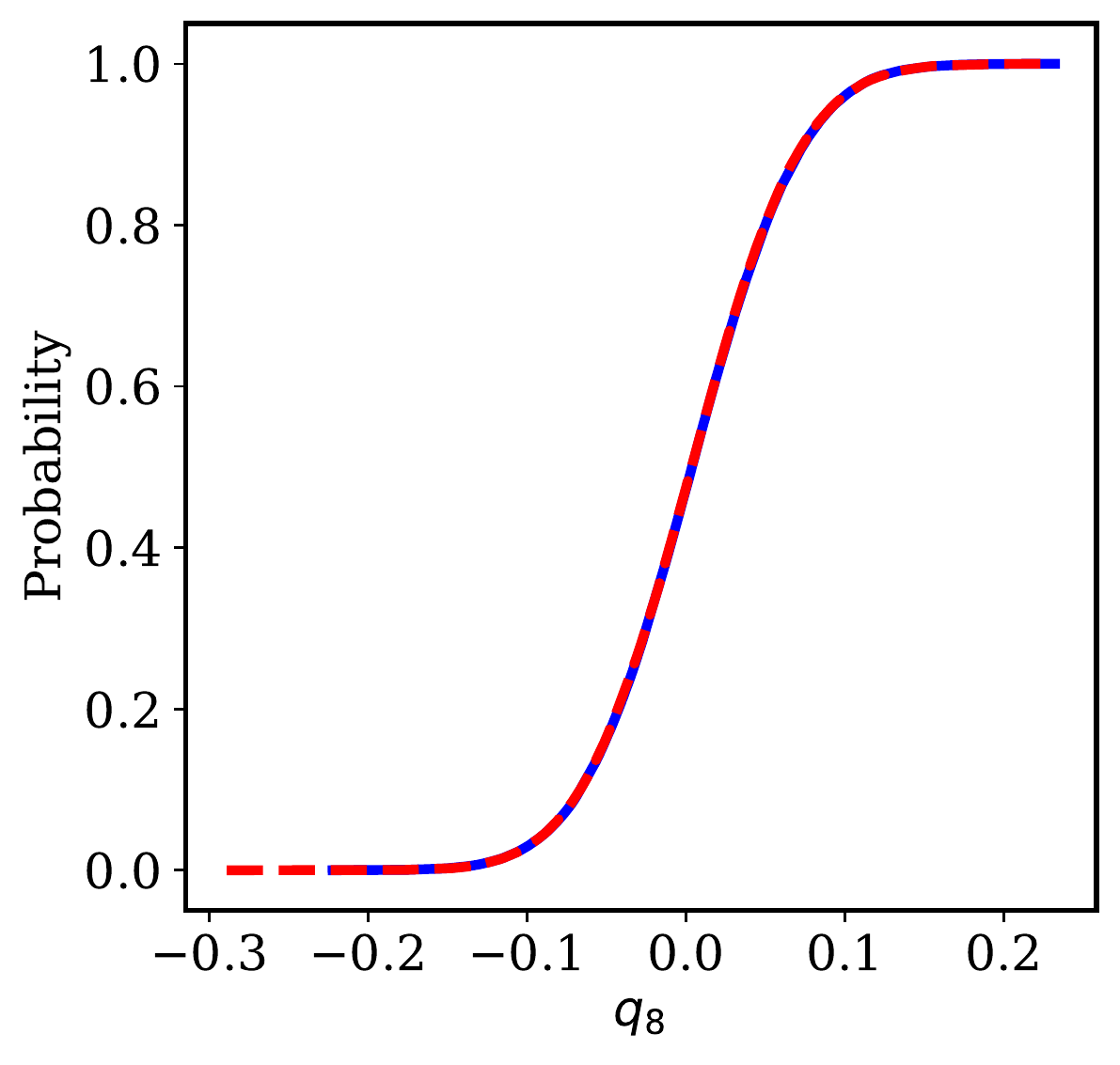} 
\caption{}
\label{24D_Logistic_ecdf_8}
\end{subfigure}
\begin{subfigure}{0.32\textwidth}
\centering  
\includegraphics[scale=0.32]{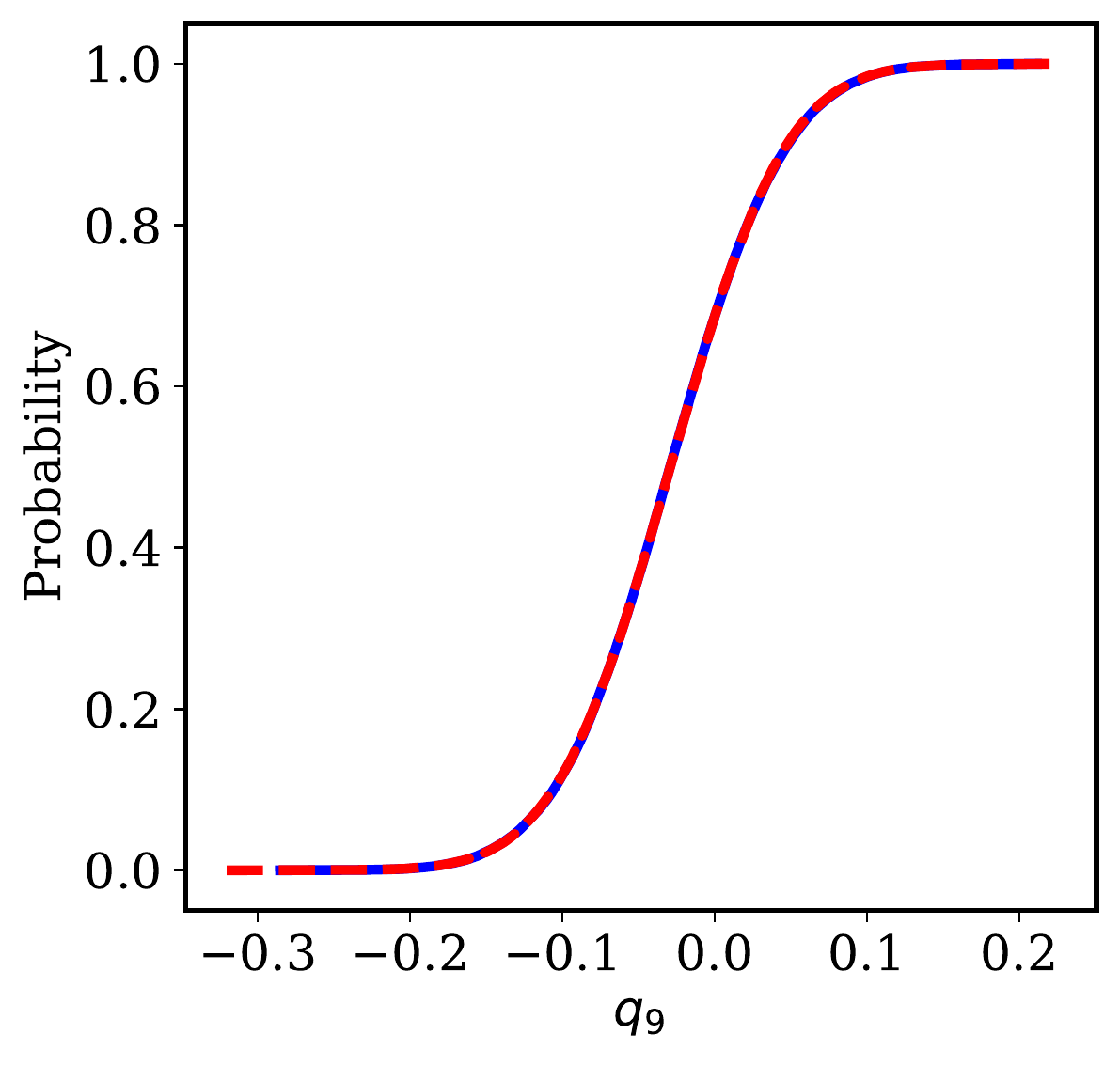} 
\caption{}
\label{24D_Logistic_ecdf_9}
\end{subfigure}
\begin{subfigure}{0.32\textwidth}
\centering  
\includegraphics[scale=0.32]{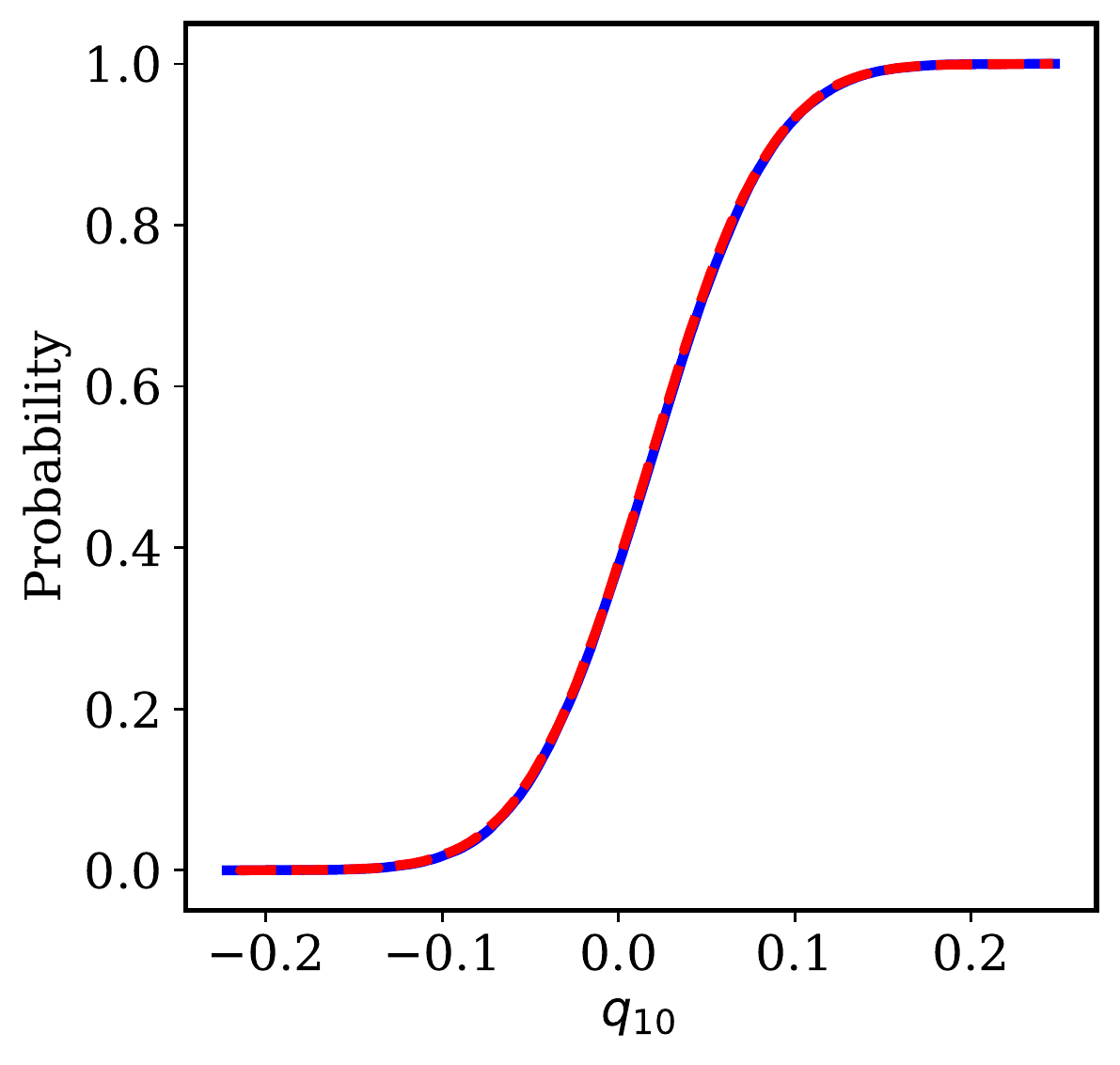} 
\caption{}
\label{24D_Logistic_ecdf_10}
\end{subfigure}
\begin{subfigure}{0.32\textwidth}
\centering  
\includegraphics[scale=0.32]{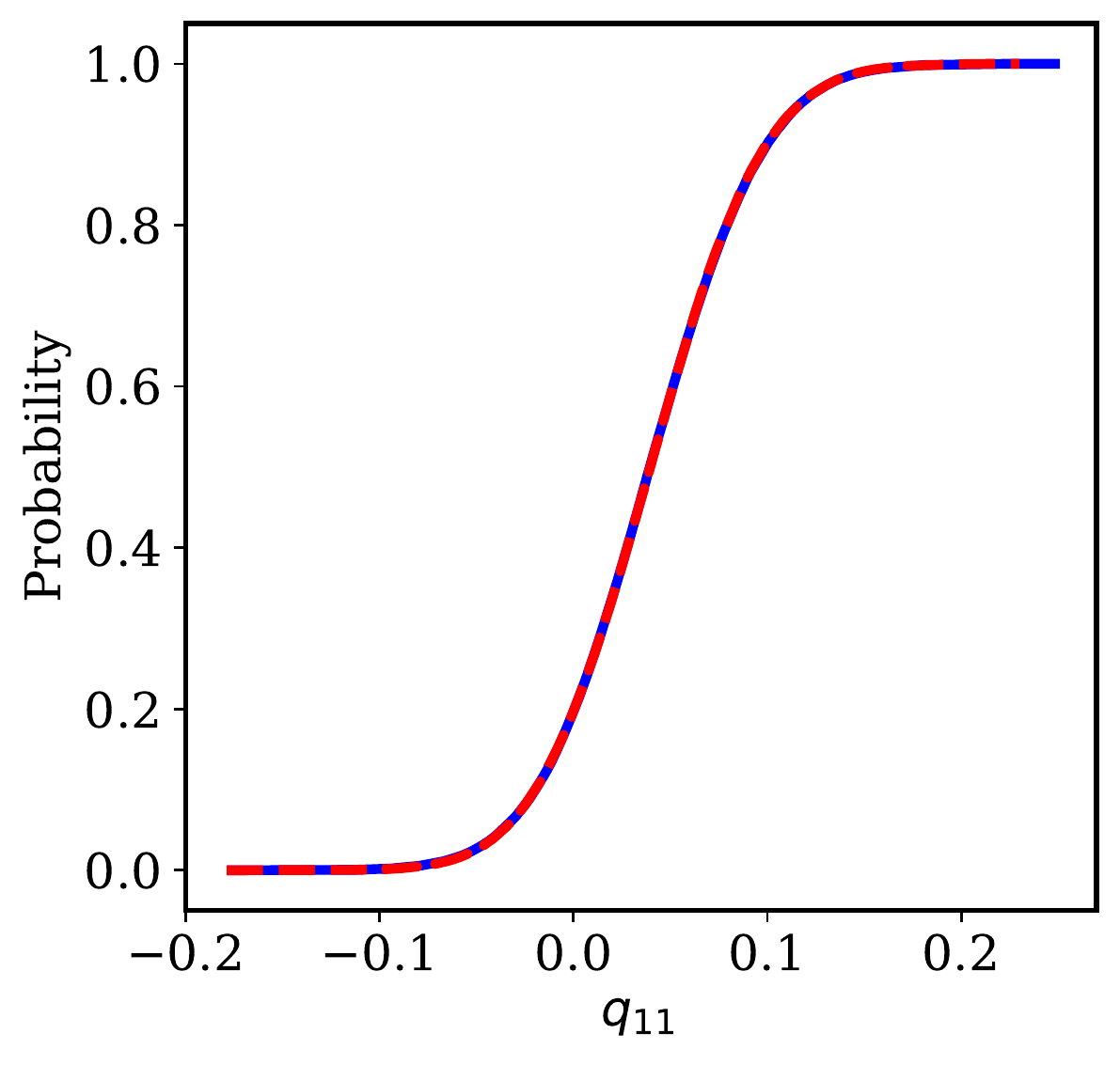} 
\caption{}
\label{24D_Logistic_ecdf_11}
\end{subfigure}
\begin{subfigure}{0.32\textwidth}
\centering  
\includegraphics[scale=0.32]{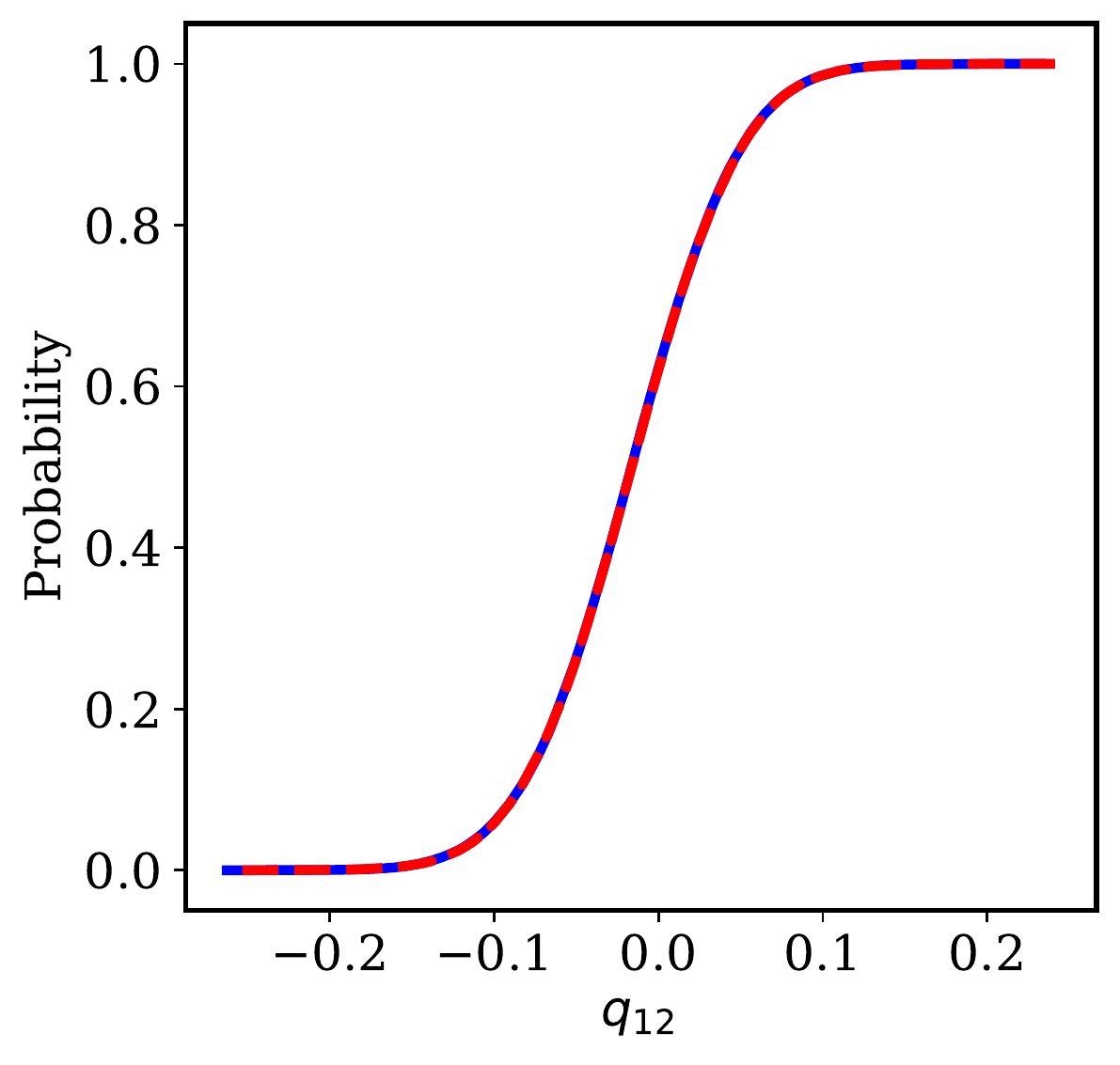} 
\caption{}
\label{24D_Logistic_ecdf_12}
\end{subfigure}
\caption{24D Bayesian logistic regression empirical cumulative distribution functions comparison for dimensions 1 to 12.}
\label{24D_Logistic_ecdf_dim1}
\end{figure}

\begin{figure}[htbp]
\begin{subfigure}{0.32\textwidth}
\centering  
\includegraphics[scale=0.32]{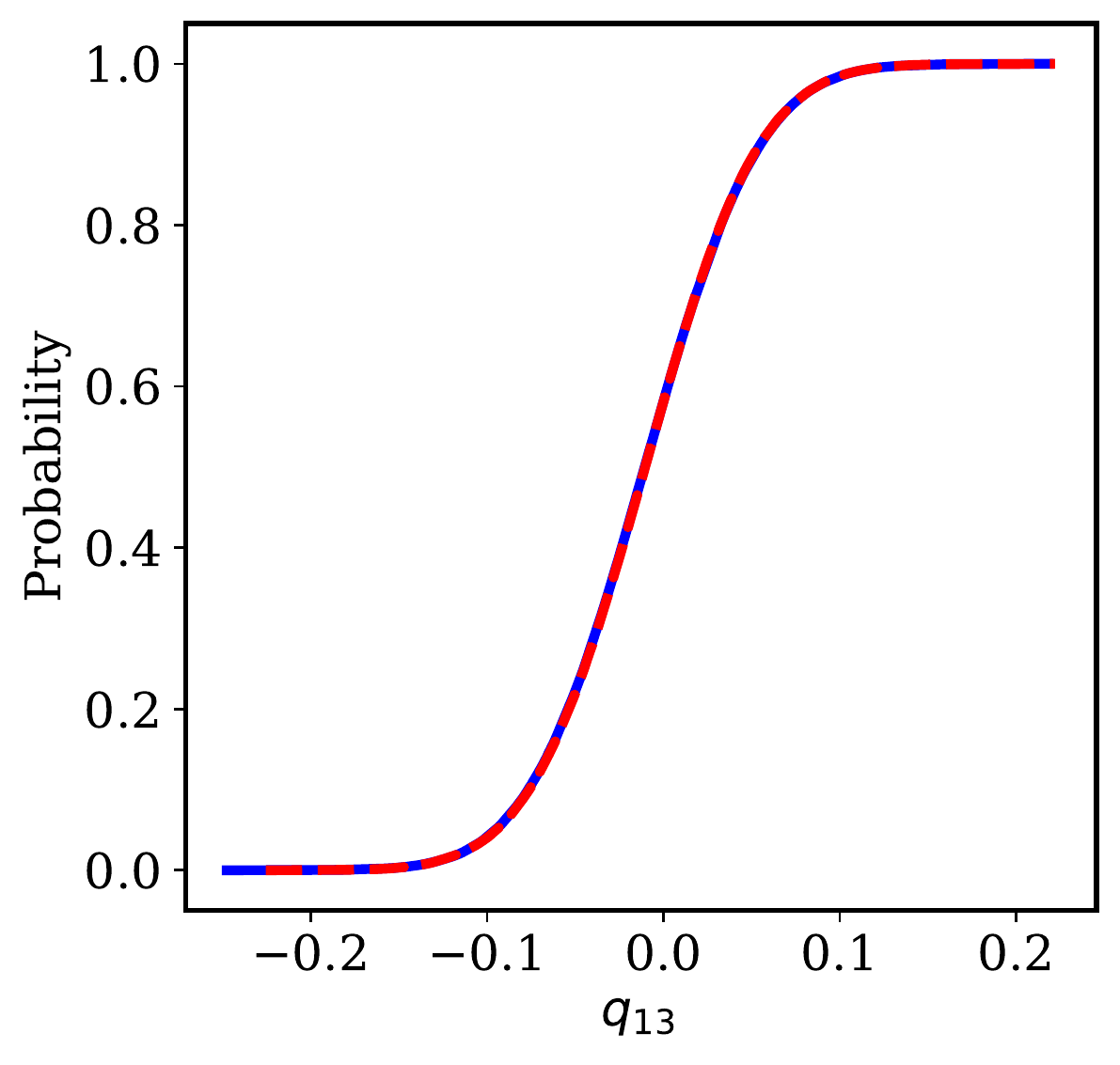} 
\caption{}
\label{24D_Logistic_ecdf_13}
\end{subfigure}
\begin{subfigure}{0.32\textwidth}
\centering  
\includegraphics[scale=0.32]{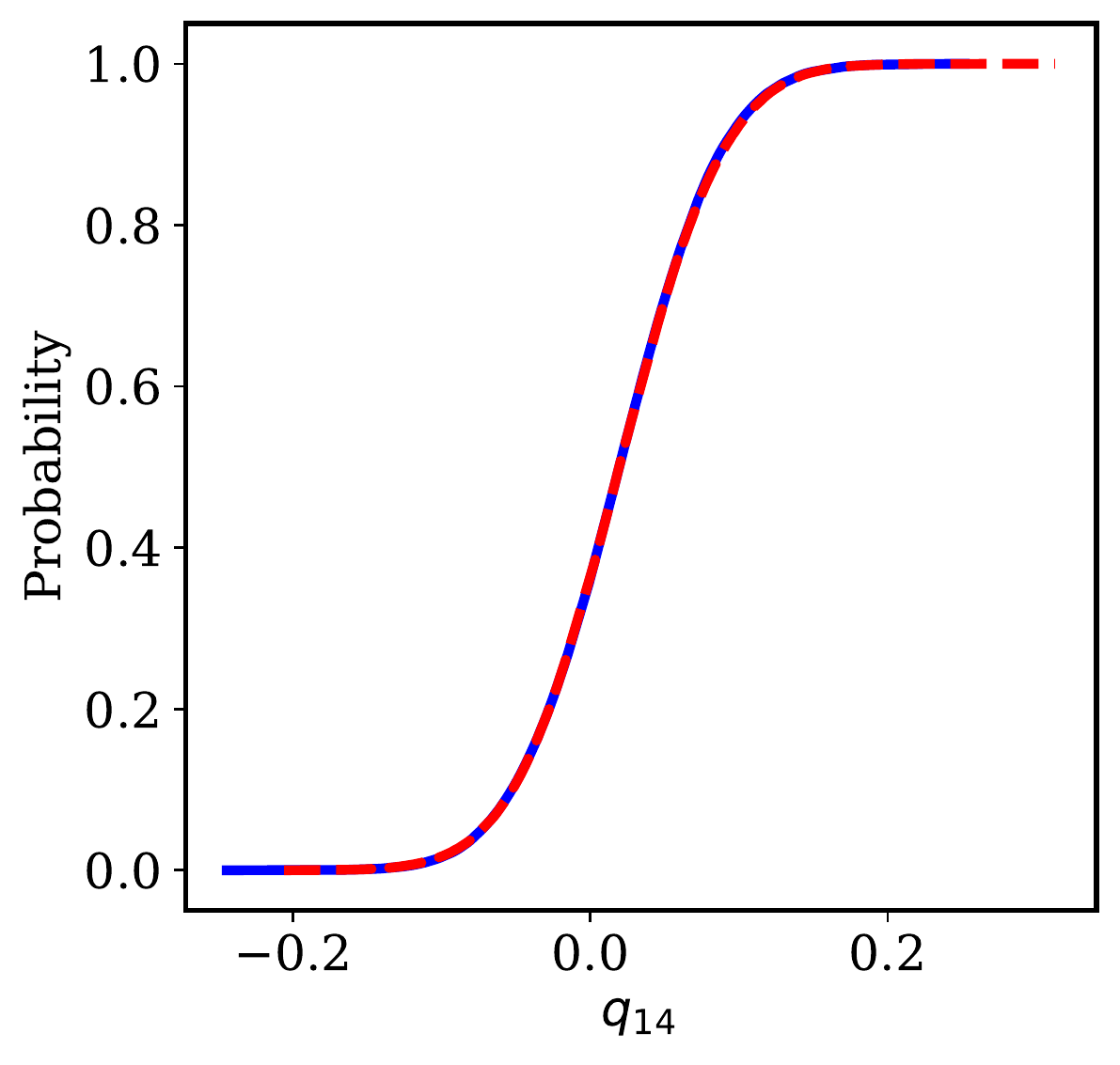} 
\caption{}
\label{24D_Logistic_ecdf_14}
\end{subfigure}
\begin{subfigure}{0.32\textwidth}
\centering  
\includegraphics[scale=0.32]{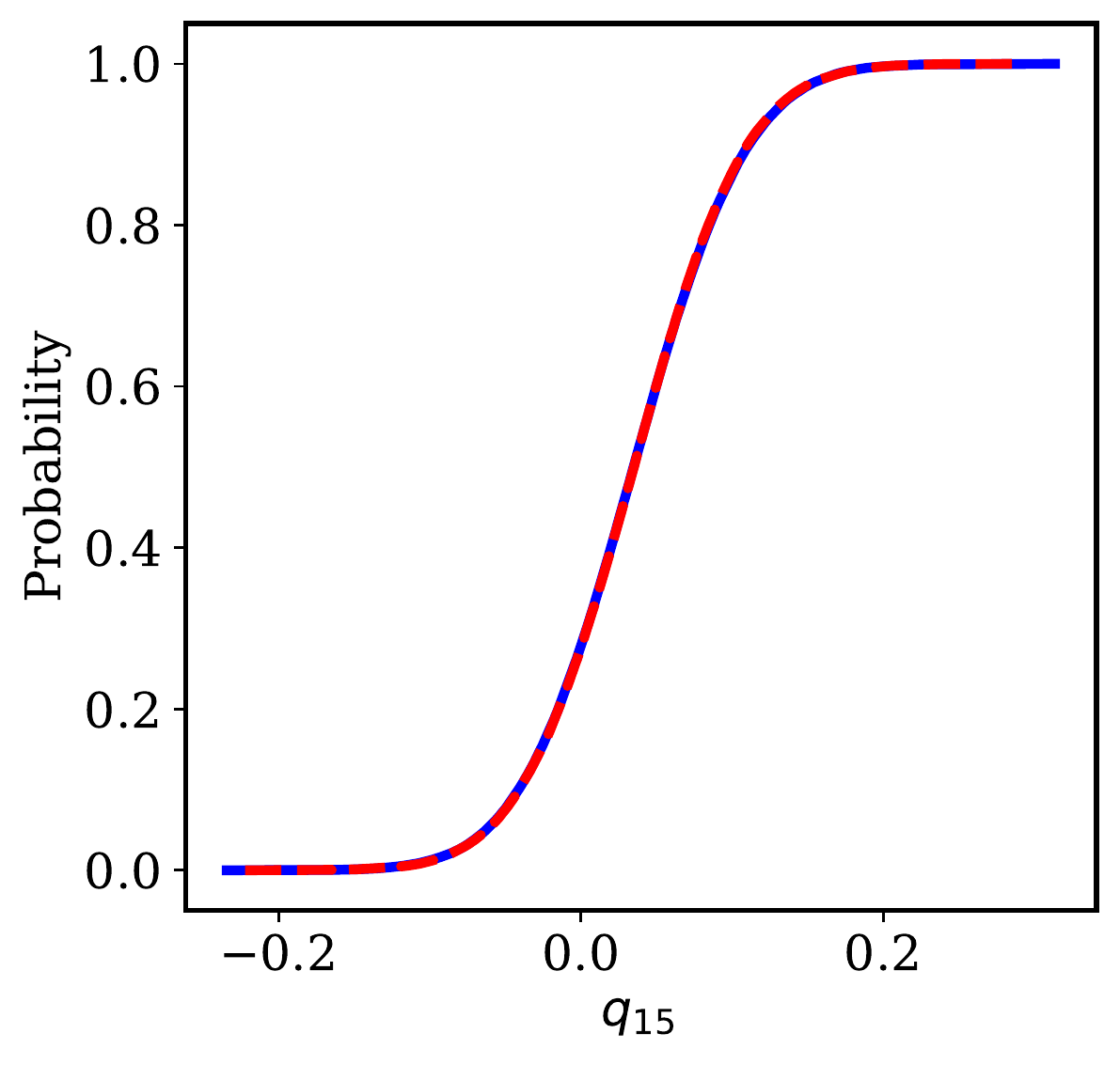} 
\caption{}
\label{24D_Logistic_ecdf_15}
\end{subfigure}
\begin{subfigure}{0.32\textwidth}
\centering  
\includegraphics[scale=0.32]{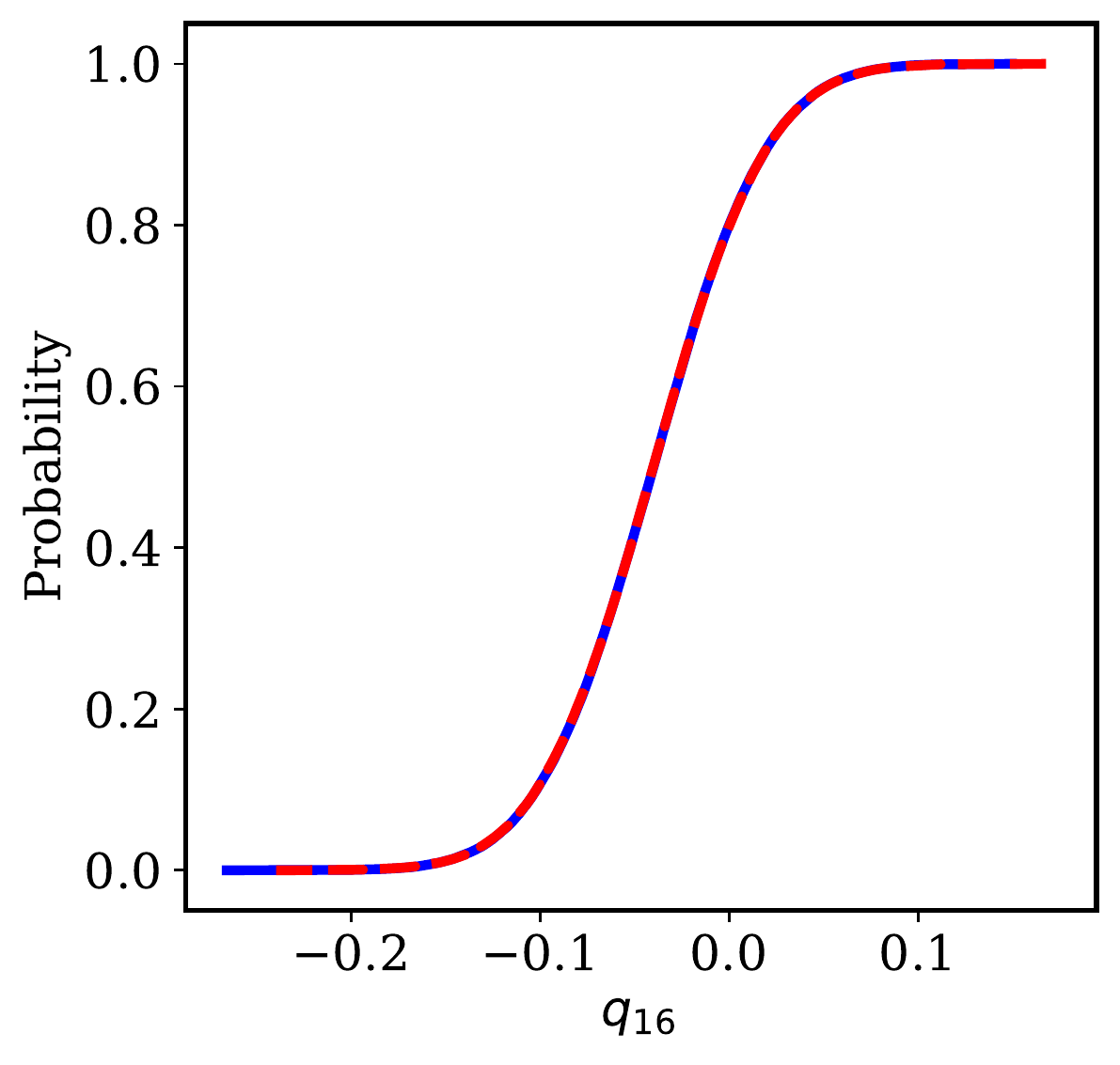} 
\caption{}
\label{24D_Logistic_ecdf_16}
\end{subfigure}
\begin{subfigure}{0.32\textwidth}
\centering  
\includegraphics[scale=0.32]{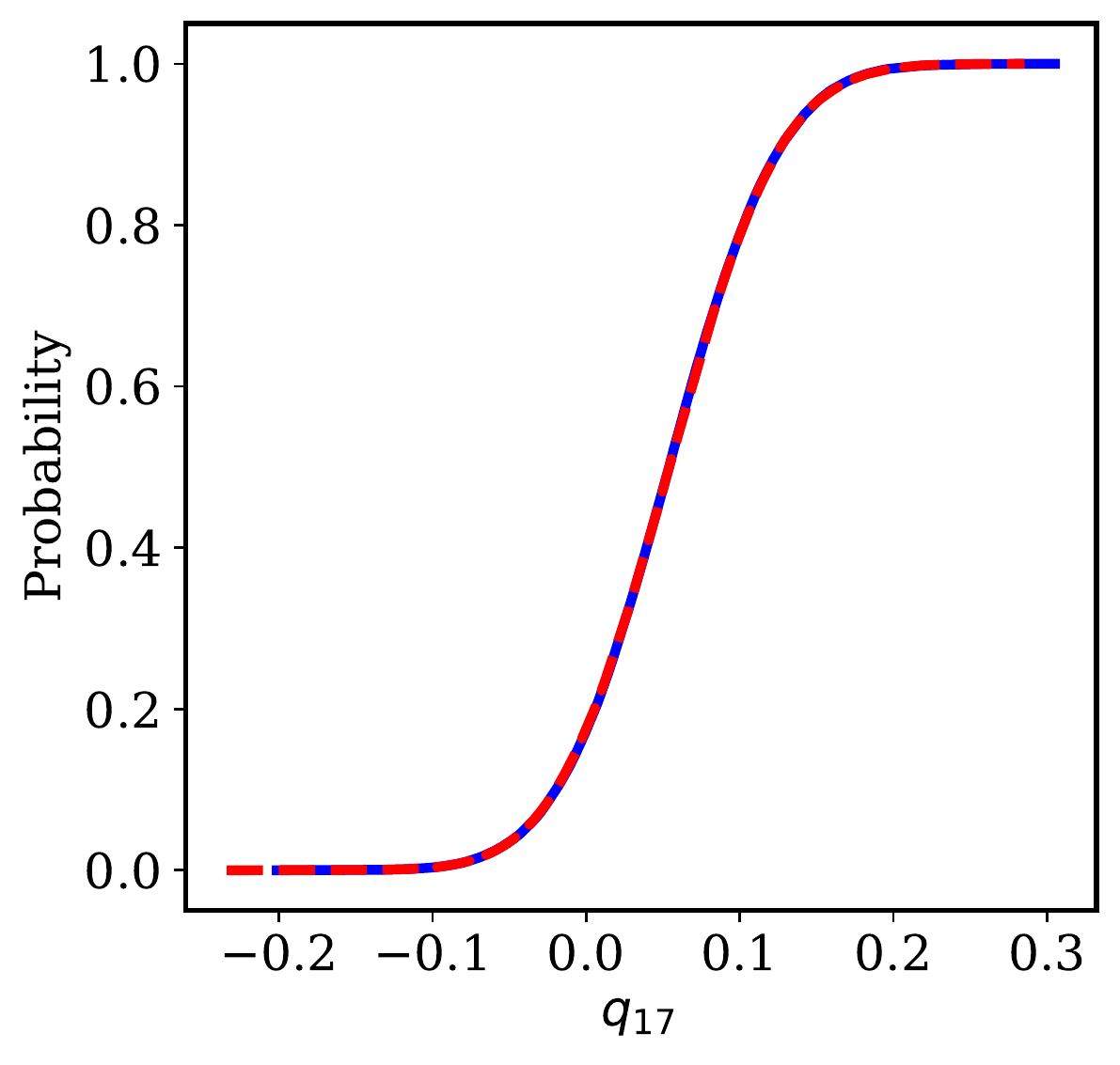} 
\caption{}
\label{24D_Logistic_ecdf_17}
\end{subfigure}
\begin{subfigure}{0.32\textwidth}
\centering  
\includegraphics[scale=0.32]{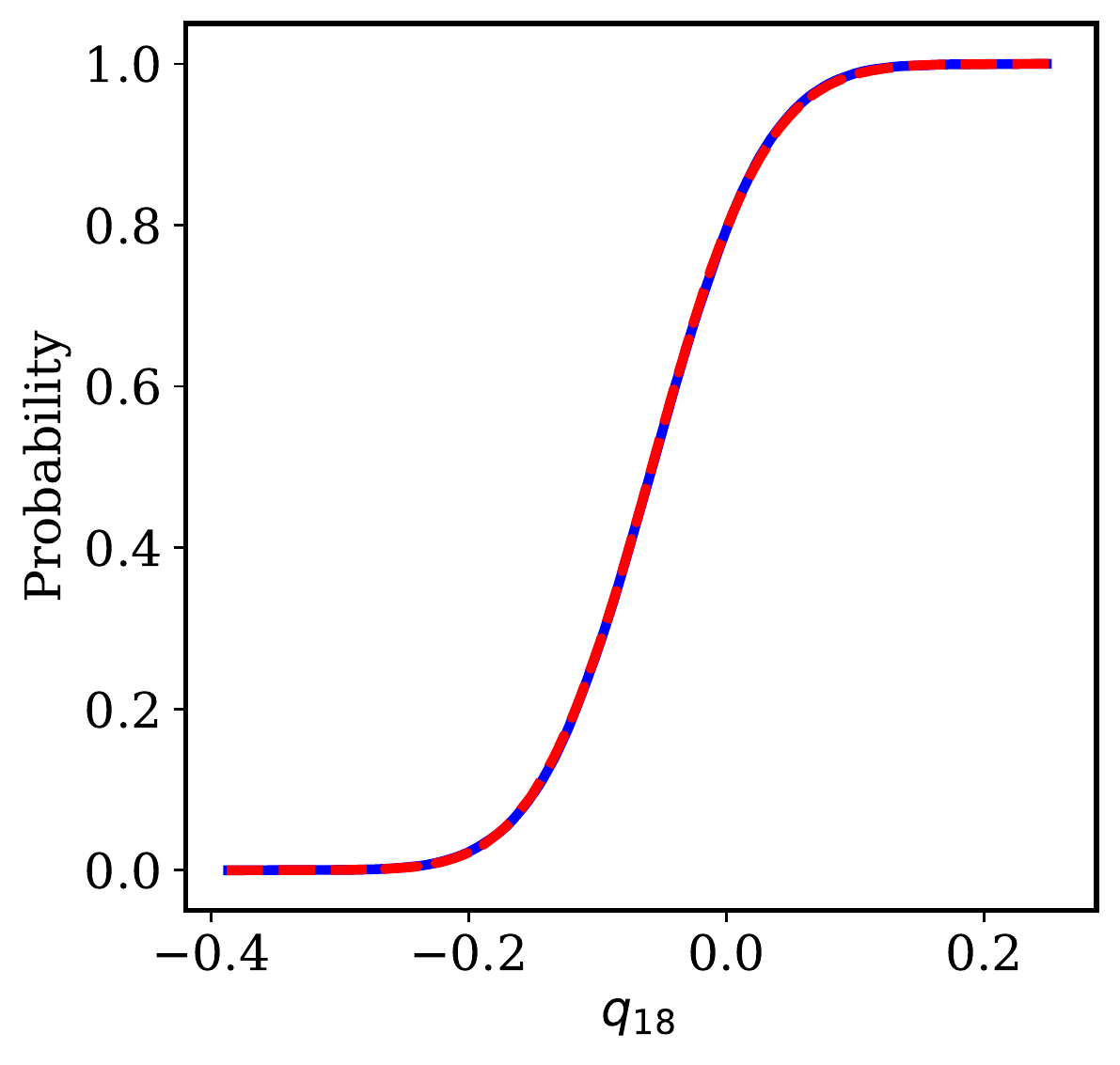} 
\caption{}
\label{24D_Logistic_ecdf_18}
\end{subfigure}
\begin{subfigure}{0.32\textwidth}
\centering  
\includegraphics[scale=0.32]{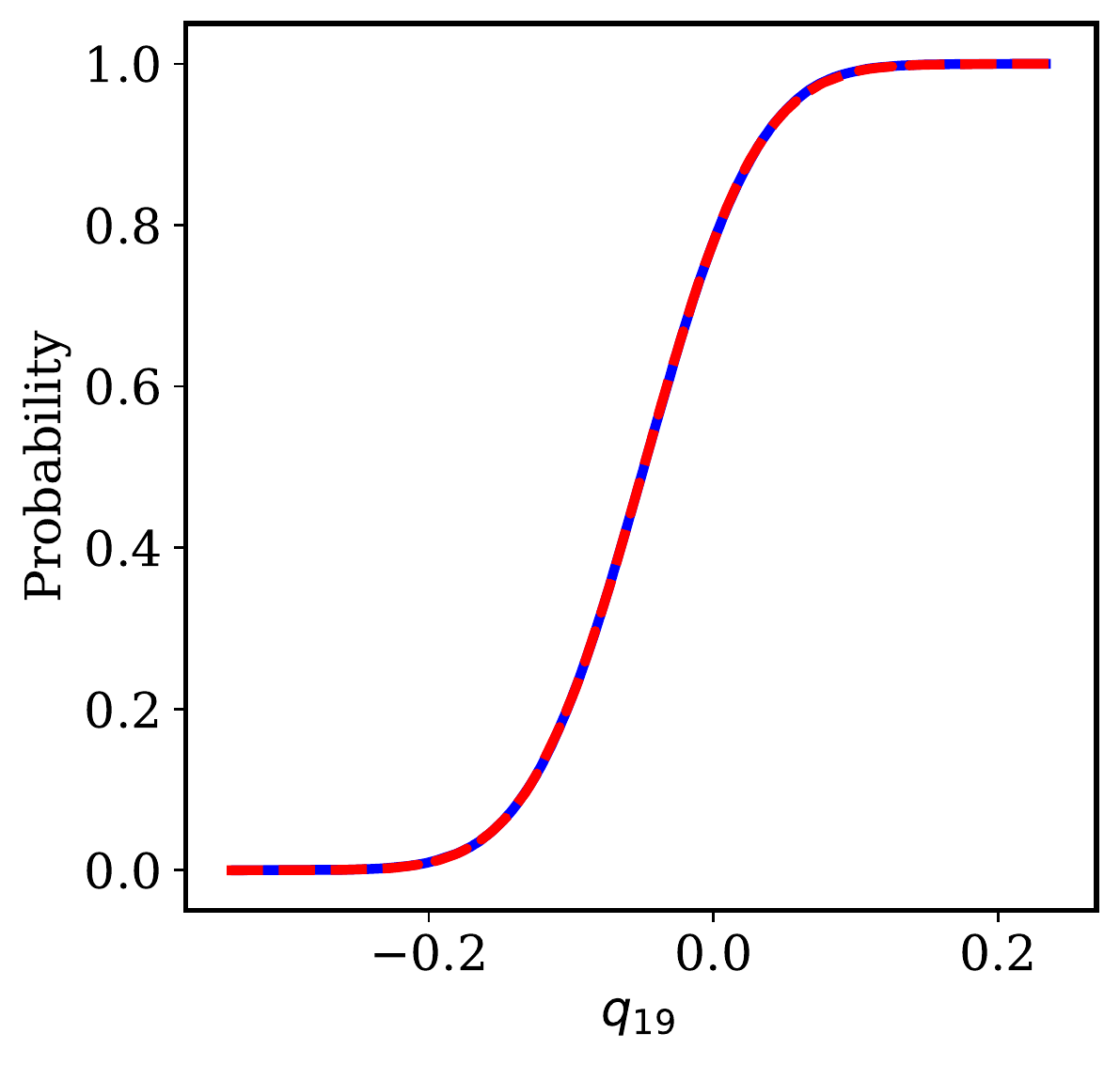} 
\caption{}
\label{24D_Logistic_ecdf_19}
\end{subfigure}
\begin{subfigure}{0.32\textwidth}
\centering  
\includegraphics[scale=0.32]{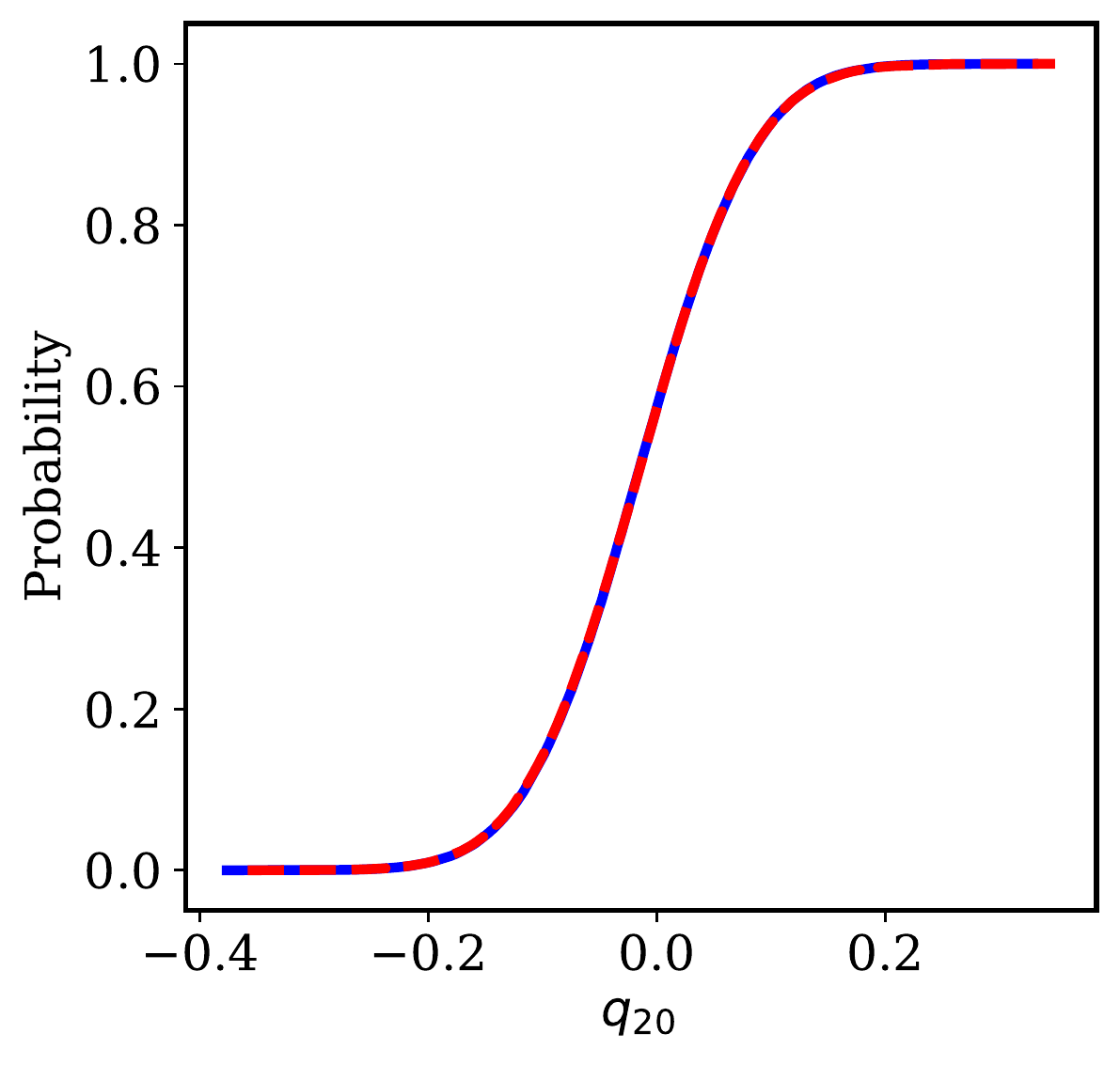} 
\caption{}
\label{24D_Logistic_ecdf_20}
\end{subfigure}
\begin{subfigure}{0.32\textwidth}
\centering  
\includegraphics[scale=0.32]{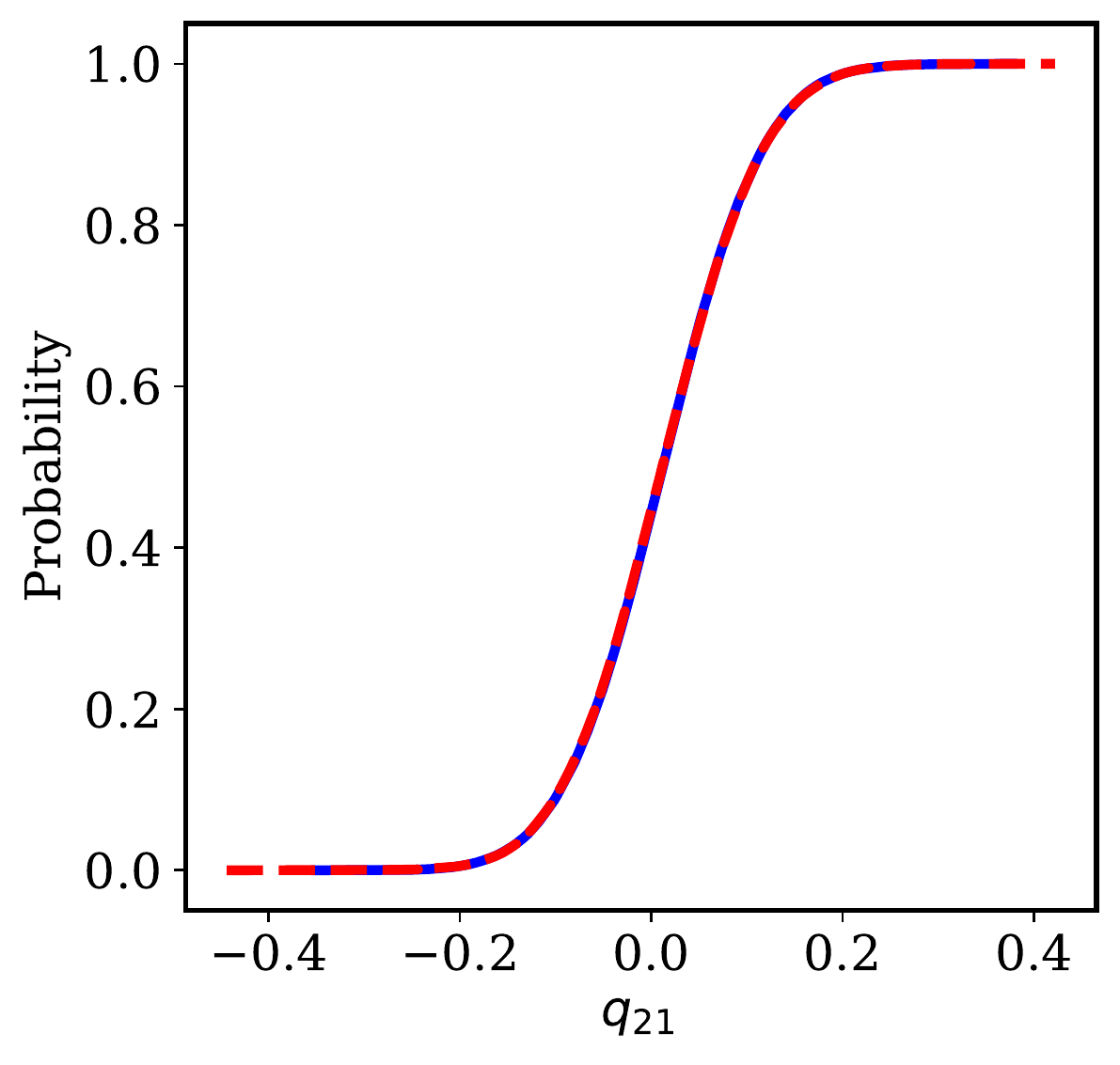} 
\caption{}
\label{24D_Logistic_ecdf_21}
\end{subfigure}
\begin{subfigure}{0.32\textwidth}
\centering  
\includegraphics[scale=0.32]{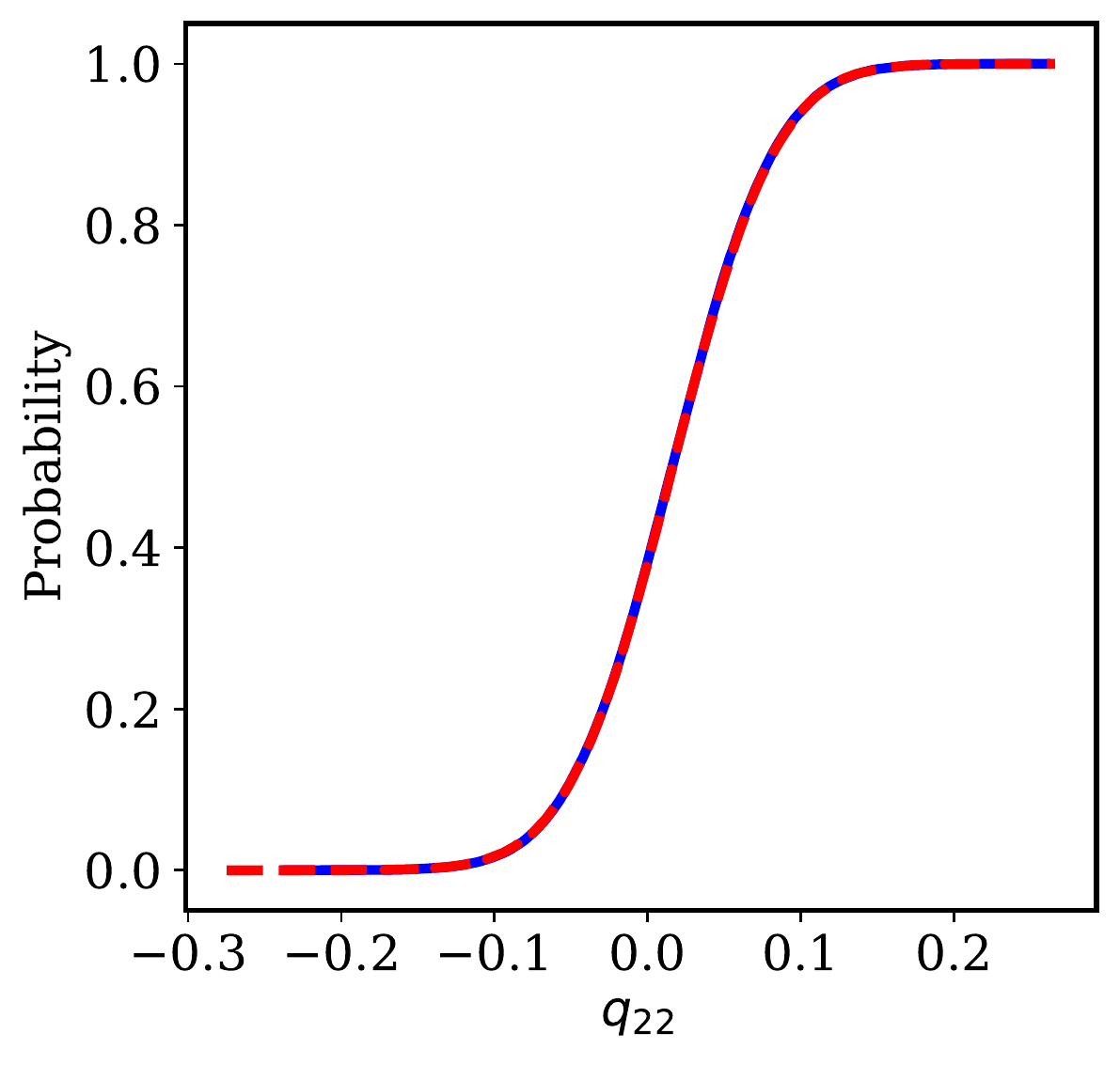} 
\caption{}
\label{24D_Logistic_ecdf_22}
\end{subfigure}
\begin{subfigure}{0.32\textwidth}
\centering  
\includegraphics[scale=0.32]{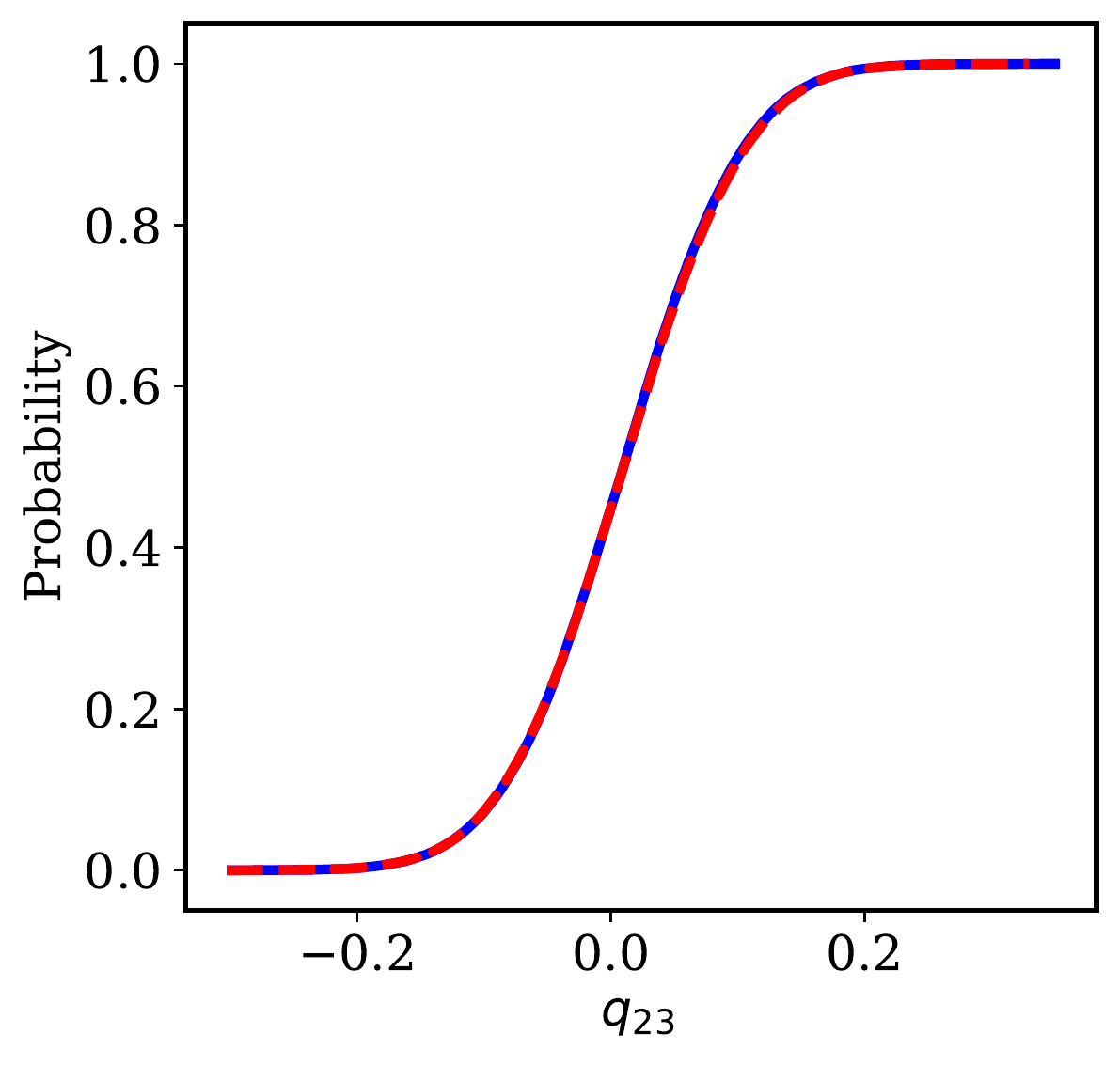} 
\caption{}
\label{24D_Logistic_ecdf_23}
\end{subfigure}
\begin{subfigure}{0.32\textwidth}
\centering  
\includegraphics[scale=0.32]{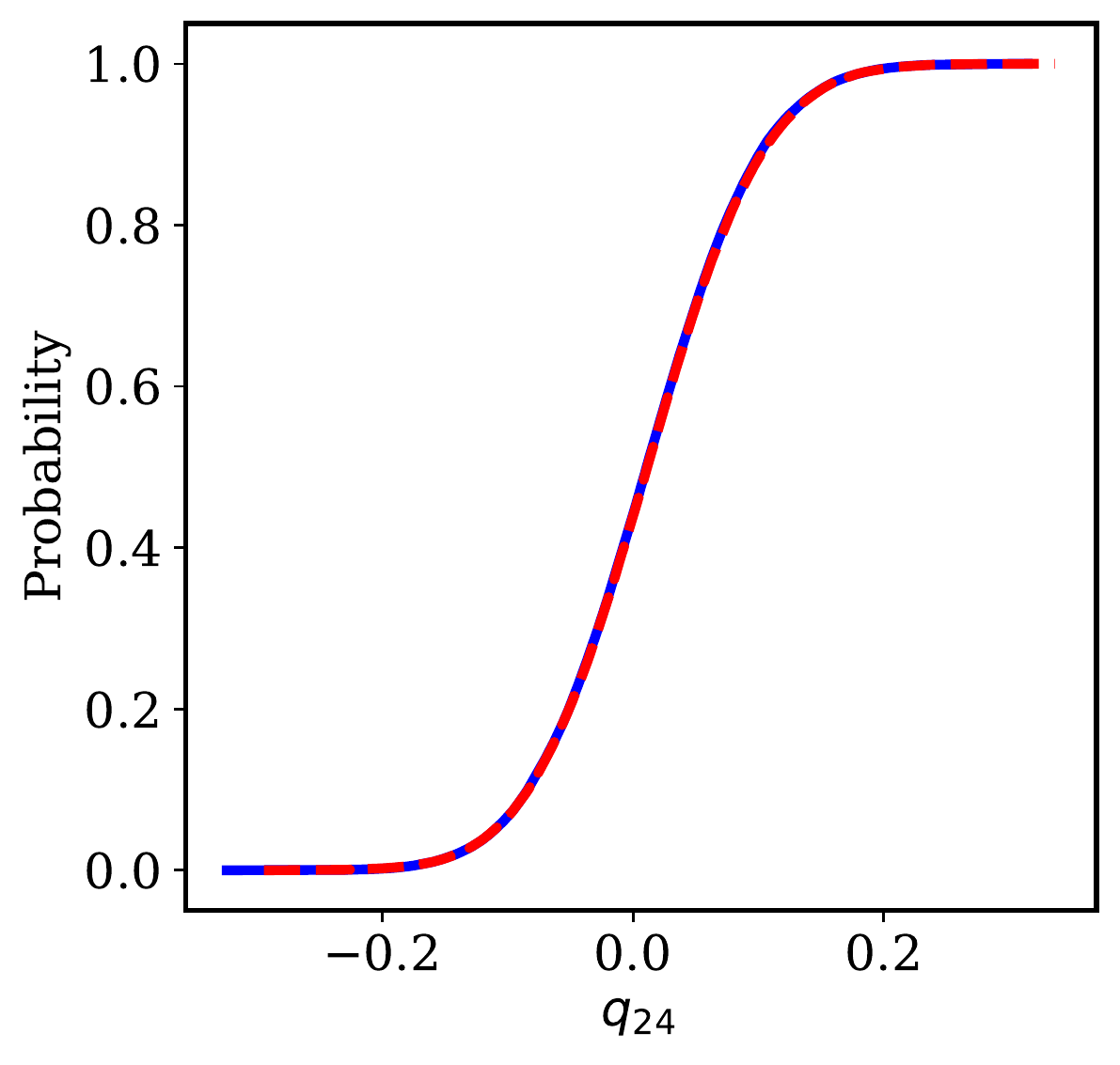} 
\caption{}
\label{24D_Logistic_ecdf_24}
\end{subfigure}
\caption{24D Bayesian logistic regression empirical cumulative distribution functions comparison for dimensions 13 to 24.}
\label{24D_Logistic_ecdf_dim2}
\end{figure}

\begin{figure}[htbp]
\centering
\includegraphics[width=\textwidth]{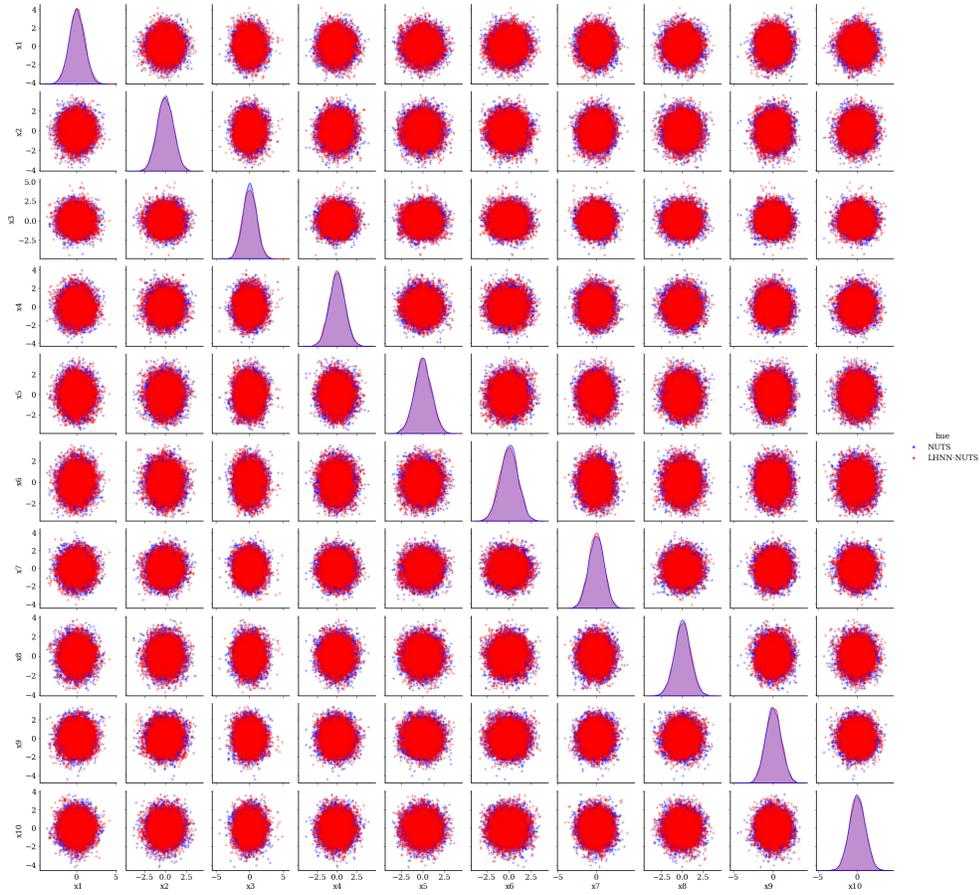} 
\caption{Scatter plot comparison between NUTS and LHNN-NUTS with online error monitoring, considering a 100-D rough well distribution. As an illustration, 10 of the 100 dimensions are shown. (Blue dots: traditional NUTS; Red dots: LHNN-NUTS with online error monitoring)}
\label{100D_RoughWell_highres}
\end{figure}

\begin{figure}[htbp]
\centering
\includegraphics[width=\textwidth]{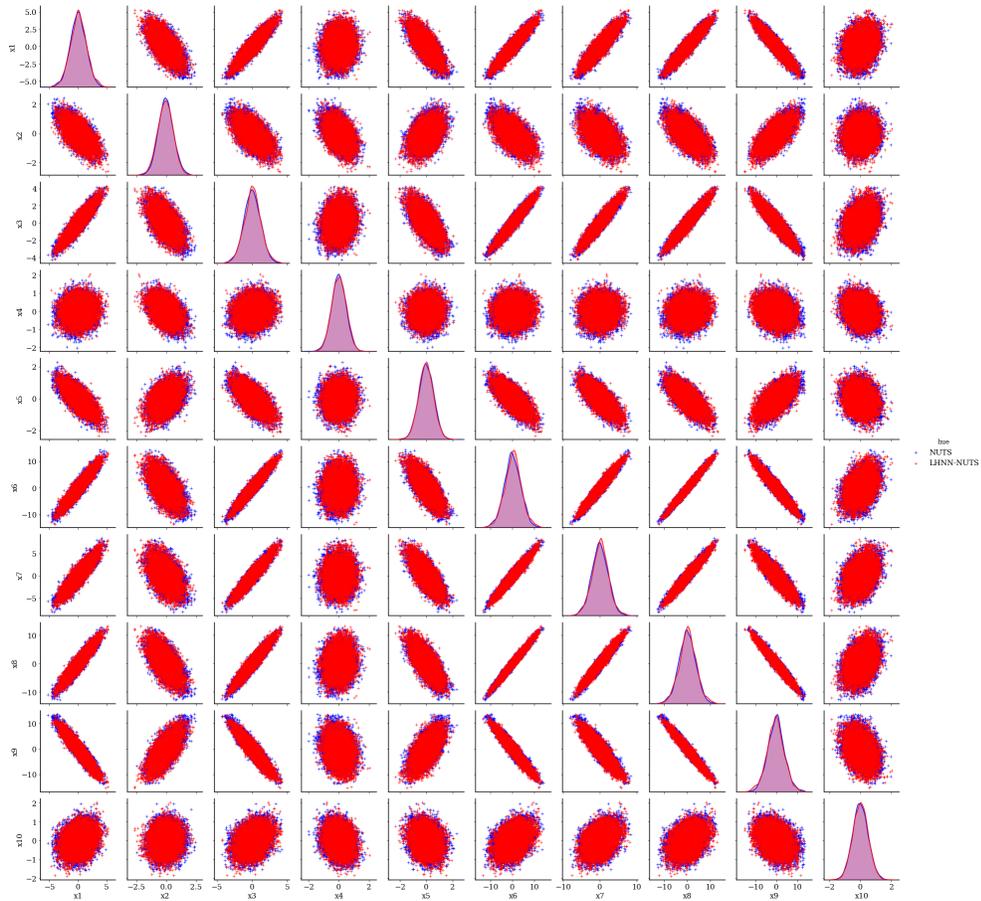} 
\caption{Scatter plot comparison between NUTS and LHNN-NUTS with online error monitoring, considering a 100-D Gaussian distribution with Wishart inverse covariance. As an illustration, 10 of the 100 dimensions are shown. (Blue dots: traditional NUTS; Red dots: LHNN-NUTS with online error monitoring)}
\label{100D_Gauss_highres}
\end{figure}

\end{document}